    \renewcommand{\algorithmcfname}{ALGORITHM}
\pgfplotsset{compat=1.15}
\newtheorem{theorem}{Theorem}[section]
\newtheorem{lemma}{Lemma}[section]
\newtheorem*{theorem*}{Theorem}
\theoremstyle{definition}
\newtheorem{definition}{Definition}
\newcommand{\cD}{\mathcal{D}}
\newcommand{\cE}{\mathcal{E}}
\newcommand{\cG}{\mathcal{G}}
\newcommand{\cO}{\mathcal{O}}
\newcommand{\cR}{\mathcal{I}}
\newcommand{\cU}{\mathcal{U}}
\newcommand{\cV}{\mathcal{V}}
\newcommand{\cW}{\mathcal{W}}
\newcommand{\cX}{\mathcal{X}}
\newcommand{\cY}{\mathcal{Y}}
\newcommand{\cZ}{\mathcal{Z}}
\newcommand{\E}{\mathbb{E}}
\renewcommand{\exp}[1]{\mathbb{E}\left[ #1 \right]}
\newcommand{\I}{\mathbb{I}}
\newcommand{\N}{\mathbb{N}}
\renewcommand{\P}{\mathbb{P}}
\newcommand{\Q}{\mathbb{Q}}
\newcommand{\e}{\varepsilon}
\newcommand{\fhi}{\varphi}
\newcommand{\tht}{\vartheta}
\newcommand{\lrb}[1]{\left(#1\right)}
\newcommand{\brb}[1]{\bigl(#1\bigr)}
\newcommand{\Brb}[1]{\Bigl(#1\Bigr)}
\newcommand{\bbrb}[1]{\biggl(#1\biggr)}
\newcommand{\lsb}[1]{\left[#1\right]}
\newcommand{\bsb}[1]{\bigl[#1\bigr]}
\newcommand{\Bsb}[1]{\Bigl[#1\Bigr]}
\newcommand{\lcb}[1]{\left\{#1\right\}}
\newcommand{\bcb}[1]{\bigl\{#1\bigr\}}
\newcommand{\lce}[1]{\left\lceil#1\right\rceil}
\newcommand{\bce}[1]{\bigl\lceil#1\bigr\rceil}
\newcommand{\labs}[1]{\left\lvert#1\right\rvert}
\newcommand{\babs}[1]{\bigl\lvert#1\bigr\rvert}
\newcommand{\Babs}[1]{\Bigl\lvert#1\Bigr\rvert}
    \colorlet{RED}{red}
    \newcommand{\tccheck}{}
\DeclareMathOperator*{\argmax}{argmax}
\newcommand{\dif}{\,\mathrm{d}}
\DeclareMathOperator*{\gft}{GFT}
\DeclareMathOperator*{\GFT}{\gft\nolimits}
\newcommand{\lip}{Lipschitz}
\newcommand{\leb}{Lebesgue}
\newcommand{\s}{\subset}
\newcommand{\m}{\setminus}
\newcommand{\iop}{\infty}
\newcommand{\xs}{x^\star}
\newcommand{\ps}{p^\star}
\newcommand{\qs}{q^\star}
\newcommand{\Rs}{R^\star}
\newcommand{\fA}{\sA}
\newcommand{\fG}{\sG}
\newcommand{\fP}{\sP}
\newcommand{\sA}{\mathscr{A}}
\newcommand{\sG}{\boldsymbol{\mathscr{G}}}
\newcommand{\sP}{\mathscr{P}}
\newcommand{\sS}{\mathscr{S}}
\newcommand{\slf}{\tilde{\mathscr{f}}}
\newcommand{\slg}{\mathscr{g}}
\newcommand{\slh}{\mathscr{h}}
\newcommand{\bmu}{\boldsymbol{\mu}}
\renewcommand{\hat}[1]{\widehat{#1}}
\newcommand{\sPiid}{\sP_{\mathrm{iid}}}
\newcommand{\sPiv}{\sP_{\mathrm{iv}}}
\newcommand{\sPbd}{\sP_{\mathrm{bd}}^M}
\newcommand{\sPivbd}{\sP_{\mathrm{iv+bd}}^M}
\newcommand{\sPadv}{\sP_{\mathrm{adv}}}
\newcommand{\ber}{\mathrm{Ber}}
\newcommand{\ind}{\mathbb{I}}
\newcommand{\st}{S'_t}
\newcommand{\bt}{B'_t}
\newcommand{\psb}{\P_{(S,B)}}
\title{A Regret Analysis of Bilateral Trade\thanks{Partially supported by ERC Advanced Grant 788893 AMDROMA ``Algorithmic and Mechanism Design Research in Online Markets'' and MIUR PRIN project ALGADIMAR ``Algorithms, Games, and Digital Markets''. This work has also benefited from the AI Interdisciplinary Institute ANITI. ANITI is funded by the French ``Investing for the Future – PIA3'' program under the Grant agreement n. ANR-19-PI3A-0004.}}
\author[1]{Nicol\`o Cesa-Bianchi}
\author[2,3]{Tommaso R. Cesari}
\author[1,4]{Roberto Colomboni}
\author[5]{Federico Fusco}
\author[5]{Stefano Leonardi}
\affil[1]{Universit\`a degli Studi di Milano, Milano, Italy}
\affil[2]{Toulouse School of Economics (TSE), Toulouse, France}
\affil[3]{Artificial and Natural Intelligence Toulouse Institute (ANITI), Toulouse, France}
\affil[4]{Istituto Italiano di Tecnologia, Genova, Italy}
\affil[5]{Sapienza Universit\`a di Roma, Roma, Italy}
\begin{document}

\maketitle

\begin{abstract}
    Bilateral trade, a fundamental topic in economics, models the problem of intermediating between two strategic agents, a seller and a buyer, willing to trade a good for which they hold private valuations. Despite the simplicity of this problem, a classical result by Myerson and Satterthwaite (\citeyear{MyersonS83}) affirms the impossibility of designing a mechanism which is simultaneously efficient, incentive compatible, individually rational, and budget balanced. 
    
    This impossibility result fostered an intense investigation of meaningful trade-offs between these desired properties. Much work has focused on approximately efficient fixed-price mechanisms, i.e., Blumrosen and Dobzinski (\citeyear{BlumrosenD14, BlumrosenD16}), Colini-Baldeschi et al. (\citeyear{Colini-Baldeschi16}), which have been shown to fully characterize strong budget balanced and ex-post individually rational direct revelation mechanisms.
    All these results, however, either assume some knowledge on the priors of the seller/buyer valuations, or a black box access to some samples of the distributions, as in Dütting et al. (\citeyear{Duetting20}). 
    
    In this paper, we cast for the first time the bilateral trade problem in a regret minimization framework over $T$ rounds of seller/buyer interactions, with no prior knowledge on the private seller/buyer valuations. Our main contribution is a complete characterization of the regret regimes for fixed-price mechanisms with different models of feedback and private valuations, using as benchmark the best fixed price in hindsight. More precisely, we prove the following bounds on the regret:
    \begin{itemize}
        \item $\widetilde{\Theta}(\sqrt{T})$ for full-feedback (i.e., direct revelation mechanisms);
        \item $\widetilde{\Theta}(T^{2/3})$ for realistic feedback (i.e., posted-price mechanisms) and independent seller/buyer valuations with bounded densities; 
        \item  $\Theta(T)$ for realistic feedback and seller/buyer valuations with bounded densities; 
        \item  $\Theta(T)$ for realistic feedback and independent seller/buyer valuations;
        \item $\Theta(T)$ for the adversarial setting.
    \end{itemize}
\end{abstract}

\clearpage

\tableofcontents

\clearpage

\section{Introduction \tccheck}

In the bilateral trade problem, two strategic agents ---a seller and a buyer--- wish to trade some good. They both privately hold a personal valuation for it, and strive to maximize their own quasi-linear utility. 
An ideal mechanism for this problem would optimize the efficiency, i.e., the social welfare resulting by trading the item, while enforcing incentive compatibility (IC) and individual rationality (IR). 
The assumption that makes two-sided mechanism design  more complex than the one-sided counterpart is budget balance (BB): the mechanism cannot subsidize or make a profit from the market. 
Unfortunately, as Vickrey observed in his seminal work \cite{Vickrey61}, the optimal incentive compatible mechanism maximizing social welfare for bilateral trade may not be budget balanced.

A more general result due to Myerson and Satterthwaite \cite{MyersonS83} shows that a fully efficient mechanism for bilateral trade that satisfies IC, IR, and BB may not exist at all. 
This impossibility result holds even if prior information on the buyer and seller's valuations is available, the truthful notion is relaxed to Bayesian incentive compatibility (BIC), and the exact budget balance constraint is loosened to weak budget balance (WBB).
To circumvent this obstacle, a long line of research has focused on the design of approximating mechanisms that satisfy the above requirements while being nearly efficient. 

These approximation results build on a Bayesian assumption: seller and buyer valuations are drawn from two distributions, which are both known to the mechanism designer. 
Although in some sense necessary ---without any information on the priors there is no way to extract any meaningful approximation result  \cite{Duetting20}--- this assumption is unrealistic.  
Following a recent line of research \cite{Cesa-BianchiGM15,LykourisST16,DaskalakisS16}, in this work we study this basic mechanism design problem in a regret minimization setting.
Our goal is bounding the total loss in efficiency experienced by the mechanism in the long period by learning the salient features of the prior distributions. 

At each time $t$, a new seller/buyer pair arrives. 
The seller has a private valuation $s_t \in [0,1]$ representing the smallest price she is willing to accept in order to trade. 
Similarly, the buyer has a private value $b_t \in [0,1]$ representing the highest price that she will pay for the item. 
The mechanism sets a price $p_t$ which results in a trade if and only if $s_t \le p_t \le b_t$. 

There are two common utility functions that reflect the performance of the mechanism at each time step: the social welfare, which sums the utilities of the two players after the trade (and remains equal to the seller's valuation if no trade occurs), and the gain from trade, consisting in the net gain in the utilities. 
In formulae, 
\begin{itemize}
    \item \textbf{Social Welfare}: $\mathrm{SW} ( p_t, s_t,b_t ) = \mathrm{SW}_t(p_t)= s_t + (b_t-s_t) \I \{ s_t \leq p \leq b_t \}$;
     \item \textbf{Gain from Trade}: $\gft(p_t,s_t,b_t)= \gft\nolimits_t(p_t)=(b_t-s_t)\I \{ s_t \leq p \leq b_t \}$.
\end{itemize}
We begin by investigating the standard assumption in which $s_t$ and $b_t$ are realizations of $S_t$ and $B_t$, where $(S_1,B_1),(S_2,B_2),\ldots$ are i.i.d.\ random variables, supported in $[0,1]^2$, representing the valuations of seller and buyer respectively (stochastic i.i.d.\ setting). 
We also consider the case where $(s_1,b_1),(s_2,b_2),\dots$ is an arbitrary deterministic process (adversarial setting).

In our online learning framework, we aim at minimizing the \emph{regret} over a time horizon $T$:
\[
    \max_{p \in [0,1]} \exp{\sum_{t=1}^T \gft\nolimits_t(p) - \sum_{t=1}^T \gft\nolimits_t(p_t)} \;.
\]
Note that since $\gft\nolimits_t(p_t) = \mathrm{SW}_t(p_t) -s_t$ and $s_t$ does not depend on the choice of $p$,  gain from trade and social welfare lead to the same notion of regret. 

The regret is hence the difference between the expected total performance of our algorithm, which can only \emph{sequentially learn} the distribution, and our reference benchmark, corresponding to the the best fixed-price strategy assuming \emph{full knowledge} of the distribution. 
Our main goal is to design strategies with asymptotically vanishing time-averaged regret with respect to the best fixed-price strategy or, equivalently, regret sublinear in the time horizon $T$. 

The class of fixed price mechanisms is of particular importance in bilateral trade as they are simple to implement, clearly truthful, individually rational, budget balanced, and enjoy the desirable property of asking the agents very little information. Moreover, it can be shown that fixed prices are the {\em only} direct revelation mechanisms which enjoy strong budget balance, dominant strategy incentive compatibility, and ex-post individual rationality \cite{Colini-Baldeschi16}.

To complete the description of the problem, we need to specify the feedback obtained by the mechanism after each sequential round. We propose two main feedback models: 
\begin{itemize}
    \item {\em Full feedback.} In the full feedback model, the pair $(s_t,b_t)$ is revealed to the mechanism after the $t$-th trading round. The information collected by this feedback model corresponds to {\em direct revelation mechanisms}, where the agents communicate their valuations before each round, and the price proposed by the mechanism at time $t$ only depends on past bids.
    \item {\em Realistic feedback.}    In the harder realistic feedback model, only the relative orderings between $s_t$ and $p_t$ and between $b_t$ and $p_t$ are revealed after the $t$-th round. This model corresponds to {\em posted price mechanisms}, where seller and buyer separately accept or refuse the posted price. The price computed at time $t$ only depends on past bids, and the values $s_t$ and $b_t$ are never revealed to the mechanisms. 
\end{itemize}

\subsection{Overview of our Results \tccheck}

We investigate the stochastic setting (under various assumptions), the adversarial setting, and how regret bounds change depending on the quality of the received feedback.
In all cases, we provide matching upper and lower bounds.
In particular, our positive result are constructive: explicit algorithms are given for each of them.
More precisely, we show (see \cref{tab:general} for a summary):
\begin{table}
\centering
\begin{tabular}{l|l|l|l|l|l|}
\cline{2-6}
\multicolumn{1}{c|}{}      & \multicolumn{4}{c|}{\textbf{Stochastic (iid)}}                                                                              & \multicolumn{1}{c|}{\textbf{Adversarial}} \\ \cline{2-6} 
\multicolumn{1}{c|}{}      & \multicolumn{1}{c|}{\textbf{iid}} & \multicolumn{1}{c|}{\textbf{+iv}} & \multicolumn{1}{c|}{\textbf{+bd}} & \multicolumn{1}{c|}{\textbf{+iv+bd}} & \multicolumn{1}{c|}{\textbf{adv}}         \\ \hline
\multicolumn{1}{|l|}{\textbf{Full}} & $T^{1/2}$ (thm~\ref{thm:upper_full})                & $T^{1/2}$                & $T^{1/2}$                & $T^{1/2}$ (thm~\ref{thm:lower-full})                   & $T$ (thm~\ref{thm:adv-lower})                              \\ \hline
\multicolumn{1}{|l|}{\textbf{Real}} & $T$                      & $T$ (thm~\ref{thm:lower-real-iv})                      & $T$ (thm~\ref{thm:lower-real-bd})                      & $T^{2/3}$ (thms~\ref{thm:upper_real}+\ref{thm:lower-real-iv+bd})                   & $T$                              \\ \hline
\end{tabular}
\caption{Our main results for fixed price mechanisms. 
The rates are both upper and lower bounds, up to a $\log T$ factor.
The slots without references are immediate consequences of the others.}
\label{tab:general}
\end{table}
\begin{itemize}
    \item \cref{alg:followTheBestPrice} (Follow the Best Price) for the full-feedback model achieving a $\widetilde{\cO}(T^{1/2})$ regret in the stochastic (iid) setting (\cref{thm:upper_full});
    this rate cannot be improved by more than a $\log T$ factor, not even under some additional natural assumptions (\cref{thm:lower-full});
    \item \cref{alg:meta} (Scouting Bandits) for the harder realistic-feedback model achieving a $\widetilde{\cO}(T^{2/3})$ regret in a stochastic (iid) setting in which the valuations of the seller and the buyer are independent of each other (iv) and have bounded densities (bd) (\cref{thm:upper_real});
    this rate cannot be improved by more than a $\log T$ factor (\cref{thm:lower-real-iv+bd});
    \item impossibility results:
    \begin{itemize}
        \item for the realistic-feedback model, if either the (iv) or the (bd) assumptions are dropped from the previous stochastic setting, no strategy can achieve sublinear worst-case regret (\cref{thm:lower-real-iv,thm:lower-real-iv+bd});
        \item in an adversarial setting, no strategy can achieve sublinear worst-case regret, not even in the simpler full-feedback model (\cref{thm:lower-full}).
    \end{itemize}
\end{itemize}

\subsection{Technical Challenges \tccheck}

The two feedback models we consider are both challenging for different reasons.
    
\paragraph{Full feedback.} The full feedback model fits nicely in the learning with expert advice framework \cite{Nicolo06}.
Each price $p \in [0,1]$ can be viewed as an expert, and the revelation of $s_t$ and $b_t$ allows the mechanism to compute $\gft\nolimits_t(p)$ for all $p$, including the mechanism's own reward $\gft\nolimits_t(p_t)$. 
A common approach to reduce the cardinality of a continuous expert space is to assume some regularity (e.g., \lip{}ness) of the reward function, so that a finite grid of {\em representative} prices can be used. This approach yields a $\widetilde{O}(\sqrt{T})$ bound under density boundedness assumptions on the joint distribution of the seller and the buyer. 
By exploiting the structure of the reward function $\E \bsb{\gft\nolimits_t(\cdot)}$, we obtain the same regret bound without any assumptions on the distribution (other than iid).
In \cref{thm:upper_full}, we show how to decompose the expression of the expected gain from trade in pieces that can be quickly learned via sampling. The full feedback received in each new round is used to refine the estimate of the actual gain from trade as a function of the price, while the posted prices are chosen so to maximize it. Our {\em Follow the Leader} strategy is shown to achieve a $\widetilde{\cO}(\sqrt{T})$ bound in the stochastic (iid) setting. 
This holds for arbitrary joint distributions of the seller and the buyer.
In particular, even when the buyer and seller have a correlated behavior.
The main challenge for the lower bounds is how to embed a hard instance in a setting where we cannot control the gain from trade, but only the distributions of seller and buyer. We solve this problem by designing a reduction from a $2$-action partial monitoring game to our setting, and then using known lower bounds for partial monitoring.

\paragraph{Realistic Feedback.} Here, at each time $t$, only $\ind\{s_t \le p_t\}$ and $\ind\{p_t \le b_t\}$ are revealed to the mechanism. Hence, we face the two competing goals of estimating the underlying distributions while optimizing the estimated gain from trade. The realistic feedback model does not fit the expert prediction framework, nor the harder bandits model \cite{Nicolo06}, because the observations of $\ind\{s_t \le p_t\}$ and $\ind\{p_t \le b_t\}$ are not enough to reconstruct the gain from trade at time $t$. On the one hand, if the trade occurs, there is no way to directly infer the difference $b_t-s_t$. On the other hand, if the trade does not occur, little can be done to argue which prices would have resulted in a trade. We show how to decompose the expected gain from trade at a posted price $p$ into a {\em global} part that can be be quickly estimated by uniform sampling on the whole $[0,1]$ interval, and a {\em local} part that can be only learned by posting selected prices. \cref{thm:upper_real} shows a general technique (Scouting Bandits, \cref{alg:meta}) which takes advantage of this decomposition, and relies on any bandit algorithm to learn the local part of the expected gain from trade. We derive a sublinear regret of $\widetilde \cO(T^{2/3})$ in a stochastic (iid) setting in which the valuations of the seller and the buyer are independent of each other (iv) and have bounded densities (bd).
Dropping the (iv) assumption leads to a pathological \emph{lack of observability} phenomenon in which it is impossible to distinguish between two scenarios, and the optimal price in one of them is highly suboptimal in the other (\cref{thm:lower-real-bd}).
Dropping the (bd) assumption leads to a pathological {\em needle in a haystack} phenomenon in which all but one prices suffer a high regret and it is essentially impossible to find the optimal one among the continuum amount of suboptimal ones (\cref{thm:lower-real-iv}). 
Similarly to the full feedback lower bound, the realistic feedback lower bound is based on reducing a partial monitoring game to our setting. However, additional challenges arise in this case due to the specific nature of the realistic feedback, see \Cref{thm:lower-real-bd,thm:lower-real-iv}.

\paragraph{Adversarial setting.}
Finally, we investigate the adversarial setting in which the valuations of the buyer and the seller form an arbitrary deterministic process generated by an oblivious adversary.
This setting is significantly more challenging than the stochastic (iid) case.
Indeed, using a construction inspired by Cantor ternary set, we show that even under a full-feedback model, no strategy can lead to a sublinear worst-case regret

\paragraph{Lower Bound Techniques}
Due to space constraints, the proofs of the stochastic lower bounds, i.e., \Cref{thm:lower-full,thm:lower-real-bd,thm:lower-real-iv,thm:lower-real-iv+bd} are only sketched in the main text and completed in the Appendix. In particular, the formal reductions from various instances of partial monitoring rely on a very general notion of sequential games subsuming both partial monitoring and our problems. These reductions are shown through two key lemmas (\ref{l:embedding}, \ref{l:simulation}): an Embedding Lemma and e Simulation Lemma, which may be of independent interest.

\subsection{Further Related Work \tccheck}
The study of the bilateral trade problem dates back to the already mentioned seminal works of Vickrey \cite{Vickrey61} and Myerson and Satterthwaite \cite{MyersonS83}.
A more recent line of research focused on Bayesian mechanisms that achieve the IC, BB, and IR requirements while approximating the optimal social welfare or the gain form trade.  Blumrosen and Dobzinski \cite{BlumrosenD14} proposed the \emph{median mechanism} that sets a posted price equal to the median of the seller distribution and shows that this mechanism obtains an approximation factor of $2$ to the optimal social welfare. Subsequent work by the same authors \cite{BlumrosenD16}  improved the approximation guarantee to $e/(e-1)$ through a randomized mechanism whose prices depend on the seller distribution in a more intricate way. 
In ~\cite{Colini-Baldeschi16} it is demonstrated that all DSIC mechanisms that are BB and IR  must post a fixed price to the buyer and to the seller. 
In a different research direction aimed to characterize the information theoretical requirements of two-sided markets mechanisms, \citep{Duetting20} shows that setting the price equal to a single sample from the seller distribution gives a $2$-approximation to the optimal social welfare.  
In a parallel line of work it has been considered the harder objective of approximating the \emph{gain from trade}. 
An asymptotically tight fixed-price $O\big(\log\frac 1r\big)$ approximation  bound is also achieved in \cite{Colini-Baldeschi17}, with $r$ being the probability that a trade happens (i.e., the value of the buyer is higher than the value of the seller).  A BIC $2$-approximation of the second best with a simple mechanism is obtained in \cite{BrustleCWZ17}.

In the following we discuss the relationship between the approximation results mentioned above and the regret analysis we develop in this work that compares online learning mechanisms against the best ex-ante fixed price mechanism. First of all, in the realistic feedback setting, the approximation mechanisms for bilateral trade cannot be easily implemented.  For example, the single sample $2$-approximation to the optimal social welfare \cite{Duetting20} requires multiple rounds of interaction in order to obtain, approximately, a random sample from the distribution.  The median mechanism of \cite{BlumrosenD14} requires an even larger number of rounds in order to estimate the median of the seller distribution.  
It is also  interesting to relate the guarantee of our online algorithms with the one provided by the approximation mechanisms. Here we notice that the two approaches cannot be directly compared as there exist simple examples%
\footnote{Consider a seller with value $\varepsilon>0$ or $0$ with equal probability and a buyer with value $1$. The best fixed price has welfare of $1$. For small $\varepsilon$, the median and the sample mechanism, respectively, obtains a welfare close to $1/2$ and $3/4$.} 
showing for the median and the sample mechanisms, respectively, a factor of $2$ and $4/3$ away from the optimum fixed price ex-ante, whereas our online learning approach provides a strictly better sublinear regret.

There is a vast body of literature on regret analysis in (one-sided) dynamic pricing and online posted price auctions ---see, e.g., the excellent survey published by \cite{den2015dynamic} and the tutorial slides by \cite{SZ15}. In their seminal paper, Kleinberg and Leighton prove a $O(T^{2/3})$ upper bound (ignoring logarithmic factors) on the regret in the adversarial setting \cite{kleinberg2003value}. Later works show simultaneous multiplicative and additive bounds on the regret when prices have range $[1,h]$ \cite{blum2004online,blum2005near}. These bounds have the form $\varepsilon\,G_T^{\star} + O\big((h\ln h)/\varepsilon^2\big)$ ignoring $\ln\ln h$ factors, where $G_T^{\star}$ is the total revenue of the optimal price $p^{\star}$. Recent improvements on these results prove that the additive term can be made $O(p^{\star}\brb{ \ln h)/\varepsilon^2 }$, where the linear scaling is now with respect to the optimal price rather than the maximum price $h$ \cite{bubeck2017online}. Other variants consider settings in which the number of copies of the item to sell is limited \cite{agrawal2014bandits,babaioff2015dynamic,badanidiyuru2013bandits}, buyers act strategically in order to maximize their utility in future rounds \cite{amin2013learning,devanur2014perfect,mohri2014optimal,drutsa2018weakly}, or there are features associated with the goods on sale \cite{DBLP:journals/mansci/CohenLL20}. In the stochastic setting, previous works typically assume parametric \cite{broder2012dynamic}, locally smooth \cite{kleinberg2003value}, or piecewise constant demand curves \cite{cesa2019dynamic,den2020discontinuous}.

\section{The Bilateral Trade learning protocol \tccheck} 
\label{s:bil-tr-model}
In this section, we present the learning protocol for the sequential problem of bilateral trade (see learning protocol~\ref{a:learning-model}).
We recall that the reward collected from a trade is the gain from trade, defined for all $p,s,b \in [0,1]$, by $\gft(p,s,b) := (b-s) \I \{s\le p \le b\}$.
{
\renewcommand*{\algorithmcfname}{LEARNING PROTOCOL}
\begin{algorithm}
\For
{%
    time $t=1,2,\ldots$
}
{
    a new seller/buyer pair arrives with (hidden) valuations $(S_t,B_t) \in [0,1]^2$\;
    the learner posts a price $P_t \in [0,1]$\;
    the learner receives a (hidden) reward $ \GFT_t(P_t) := \gft ( P_t, S_t,B_t) \in [0,1]$\;
    a feedback $Z_t$ is revealed\;
}
 \caption{Bilateral Trade}
 \label{a:learning-model}
\end{algorithm}
}

At each time step $t$, a seller and a buyer arrive, each with a privately held valuation $S_t,B_t \in [0,1]$.
The learner then posts a price $P_t \in [0,1]$ and a trade occurs if and only if $S_t \le P_t \le B_t$.
When this happens, the learner gains a reward $\gft (P_t, S_t,B_t)$, which is not revealed. 
Some feedback $Z_t$ is revealed instead.
The nature of the sequence of valuations $(S_1,B_1),(S_2,B_2),\ldots$ and feedbacks $Z_1,Z_2,\ldots$ depends on the specific instance of the problem and is described below.

The goal of the learner is to determine a strategy $\alpha$ generating the prices $P_1,P_2, \ldots$ (as in Learning~Model~\ref{a:learning-model}) achieving sublinear \emph{regret}
\[
    R_T(\alpha)
:=
    \max_{p \in [0,1]} \E\lsb{\sum_{t=1}^T \gft ( p, S_t,B_t)  - \sum_{t=1}^T \gft ( P_t, S_t,B_t)}
    \;,
\]
where the expectation is taken with respect to the sequence of buyers and sellers and (possibly) the internal randomization of $\alpha$.
To lighten the notation, we denote by $\ps$ (one of) the $p\in[0,1]$ maximizing the previous expectation.

We now introduce several instances of bilateral trade, depending on the type of the received feedback and the nature of the environment.

\subsection{Feedback} 

\begin{description}
    \item[Full feedback:] the feedback $Z_t$ received at time $t$ is the entire seller/buyer pair $(S_t,B_t)$;
    in this setting, the seller and the buyer reveal their valuations at the end of a trade.
    \item[Realistic feedback:] the feedback $Z_t$ received at time $t$ is the pair $\brb{ \I\{S_t\le P_t\}, \, \I\{P_t \le B_t \} }$;
    in this setting, the seller and the buyer only reveal whether or not they accept the trade at price $P_t$.
\end{description}

\subsection{Environment} 
\begin{description}
    \item[Stochastic (iid):] 
    $(S_1,B_1),(S_2,B_2),\ldots$ is an i.i.d.\ sequence of seller/buyer pairs, where $S_t$ and $B_t$ could be (arbitrarily) correlated.
    
    We will also investigate the (iid) setting under the following further assumptions.
    \begin{description}
        \item[Independent valuations (iv):]
        $S_t$ and $B_t$ are independent of each other.
        \item[Bounded density (bd):] 
        $(S_t,B_t)$ admits a joint density bounded by some $M \ge 1$.
    \end{description}
    \item[Adversarial (adv):] 
    $(S_t,B_t)_{t\in\N}$ is an arbitrary deterministic sequence $(s_t,b_t)_{t\in\N} \s [0,1]^2$.
\end{description}

\section{Full-Feedback Stochastic (iid) Setting \tccheck}

We begin by considering the full-feedback model (corresponding to revelation mechanisms) under the assumption that the seller/buyer pairs $(S_1,B_1),(S_2,B_2),\ldots$ are $[0,1]^2$-valued i.i.d.\ random variables, without any further assumptions on their common distribution $(S,B)$ (in particular, $S$ and $B$ could be arbitrarily correlated). 
Here, sellers and buyers declare their actual valuations to the mechanism. The incentive compatibility is guaranteed by the fact that the posted prices does not depend on the declared valuations at each specific round, but only on past ones, so that there is no point in misreporting.

In \cref{sec:DKW}, we show that a {\em Follow the Leader} approach, which we call Follow the Best Price (FBP), whose pseudocode is given in \cref{alg:followTheBestPrice}, achieves a $O(\sqrt{T\log T})$ upper bound.
In \cref{sec:lower_bound_indep}, we provide a lower bound that matches this rate, up to a $\sqrt{\log T}$ factor.

\subsection{Follow the Best Price (FBP) \tccheck}
\label{sec:DKW}

We begin by presenting our Follow the Best Price (FBP) algorithm. It consists in posting the best price with respect to the samples that have been observed so far. Notably, it does not need preliminary knowledge of the time horizon $T$.

\begin{algorithm}
    \SetKwInput{kwInit}{init}
    \kwInit{select $P_1 \in [0,1]$ arbitrarily}
    \For{$t=1,2, \ldots$}
    {
        post price $P_t$\;
        receive feedback $(S_t,B_t)$\;
        compute $P_{t+1} \gets \argmax_{p \in\{S_1,B_1,\dots,S_t,B_t\}} \sum_{i=1}^t \GFT(p, S_i, B_i) \in [0,1] $\Comment*[r]{Ties broken arbitrarily}
    }
    \caption{Follow the Best Price (FBP)}
    \label{alg:followTheBestPrice}
\end{algorithm}

For each time $t\ge 2$, let $O_t$ be the sequence containing all the $t-1$ pairs of valuations observed so far, i.e., $O_t:=\brb{ (S_1,B_1), \dots, (S_{t-1},B_{t-1}) }$. 
Given $O_t$, one can reconstruct the actual $\GFT_i(\cdot) := \gft(\cdot, S_i, B_i)$ function at each past time step $i \le t-1$ and compute (one of) the best price(s)
\begin{equation}
    \label{eq:follow_best}
    P_t \in \argmax_{p \in [0,1]} \sum_{i=1}^{t-1} \gft\nolimits_i(p) \;.
\end{equation}
Note that at least one of the elements in the $\argmax$ belongs to the set of past valuations, given the structure of the gain from trade, so even a naive enumeration approach is computationally efficient.

Before moving on, we describe a property on $P_t$ which will be useful in the analysis.
Given the samples in $O_t$, it is possible to build an estimate of the random pair $(S,B)$ of which they are i.i.d.\ samples.
More precisely, one can consider the random pair $(\st,\bt)$ which follows the empirical distribution of the data, i.e., for all $(s,b) \in [0,1]^2$
\[
    \P \bsb{ (\st,\bt) = (s, b) \mid O_t }
=
    \frac{1}{t-1}\sum_{i=1}^{t-1} \I \bcb{ (S_i,B_i) = (s,b) } \;.
\] 
The formulation in \Cref{eq:follow_best} is then equivalent to finding a price $P_t$ that maximizes the expected gain from trade for the seller and buyer's valuations with respect to $(\st,\bt)$. 
Indeed, let $\gft'\nolimits_t$ be the gain from trade associated to $(\st,\bt)$, i.e., $\GFT_t'(\cdot) := \gft(\cdot, \st, \bt)$, then, for all $p \in [0,1]$,
\begin{equation}
    \label{eq:empirical}
    \exp{\gft'\nolimits_t(p) \mid O_t}
= 
    \frac{1}{t-1}\sum_{i=1}^{t-1} \gft\nolimits_i(p)\;,
\end{equation}
where the expectation conditioned to $O_t$ is with respect to a random sample of $(\st,\bt)$.

We show now that if at a certain time $t\ge 2$ the distribution of $(\st,\bt)$ given $O_t$ is close to the distribution of $(S,B)$, then our strategy performs well, on expectation, if compared to the optimal price $\ps$. 
In order to do so, we first write the following decomposition:
\begin{align*}
    \E\bsb{\gft\nolimits_t(\ps)} - \E\bsb{\gft\nolimits_t(P_t) \mid O_t} 
    & = \E\bsb{\gft\nolimits_t(\ps)} - \E\bsb{\gft'\nolimits_t(\ps)\mid O_t} \\
    & \qquad + \E\bsb{\gft'\nolimits_t(\ps)\mid O_t} - \E\bsb{\gft'\nolimits_t(p)\mid O_t}\big|_{p = P_t} \\
    & \qquad \qquad + \E\bsb{\gft'\nolimits_t(p)\mid O_t}\big|_{p = P_t} - \E\bsb{\gft\nolimits_t(p) }\big|_{p = P_t} \;.
\end{align*}
where, for a function $g$, we denoted $g(a):=g(p)\big|_{p=a}$.
Note that the middle term is always non-positive, since $P_t$ maximizes the expected gain from trade of $(\st,\bt)$ given $O_t$ by \eqref{eq:empirical}. 
Hence, we have
\begin{equation}
\label{eq:triangular}
    \E\bsb{\gft\nolimits_t(\ps)} - \E\bsb{\gft\nolimits_t(P_t) \mid O_t} \le   2\max_{q \in [0,1]} \Babs{ \E\bsb{\gft\nolimits_t(q)} - \E\bsb{\gft'\nolimits_t(q) \mid O_t} } \;.
\end{equation}
If at each time step $t\ge 2$ the mechanism inherits a good estimate of the distribution of $(S_t,B_t)$, then we show that the best price given the past performs almost as well as the optimal price $\ps$. 
This is a consequence of the following lemma, which reduces the problem to accurately estimating the distribution of $(S_t,B_t)$ on the rectangles $\mathcal{R}=\bcb{[a,b]\times[c,d] \mid a,b,c,d \in [0,1] }$.

\begin{lemma}[First Decomposition Lemma]
\label{lem:first_decomposition}
Let $\mu$ be any probability measure on $[0,1]^2$ and $p \in [0,1]$, then
\[
    \E_{(s,b)\sim \mu} \bsb{ \gft(p,s,b) }
= 
    \int_0^1 \mu\Bsb{[0,p]\times\bsb{ \max\{\lambda,p\},1 }} \dif \lambda - \int_0^1 \mu\bsb{[\lambda,p]\times [p,1]}\dif \lambda.
\]
As a consequence, for any $\e_t>0$, in the event $\bcb{ \forall R \in \mathcal{R}, \  \babs{ \psb[R] - \mathbb{P}_{(\st,\bt) \mid O_t}[R] } \le \e_t }$ (i.e., if the distribution of $(\st,\bt)$ given $O_t$ is $\e_t$-close to that of $(S_t,B_t)$, uniformly over rectangles), we have that
\begin{equation}
    \label{eq:first_decomposition}
    \max_{q \in [0,1]} \Babs{ \E\bsb{\gft\nolimits_t(q)} - \E\bsb{\gft'\nolimits_t(q)} } \le 2 \e_t \;.
\end{equation}
\end{lemma}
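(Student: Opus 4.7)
The plan is to prove the integral identity by splitting $b-s$ and recognizing each of the two claimed integrals as $\E_\mu[b \I\{s \le p \le b\}]$ and $\E_\mu[s \I\{s \le p \le b\}]$, then subtracting. For the first integral, I would split the domain $[0,1]$ at $\lambda = p$: on $[0,p]$ the integrand is the constant $\mu([0,p]\times[p,1])$, contributing $p\cdot \P_\mu(s \le p \le b)$; on $[p,1]$ the integrand becomes $\mu([0,p]\times[\lambda,1])$, and Fubini rewrites this piece as $\E_\mu\bsb{\I\{s\le p\}\int_p^1 \I\{\lambda \le b\}\dif \lambda} = \E_\mu[(b-p)\I\{s \le p \le b\}]$. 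Adding the two pieces yields $\E_\mu[b\I\{s \le p \le b\}]$. For the second integral, $[\lambda,p]$ is empty whenever $\lambda > p$, so integration effectively runs over $[0,p]$; another Fubini step identifies it with $\E_\mu[s\I\{s \le p \le b\}]$. Subtracting the two identities yields $\E_\mu[(b-s)\I\{s\le p \le b\}] = \E_\mu[\gft(p,s,b)]$, which is the claimed formula.

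For the consequence in \eqref{eq:first_decomposition}, I would insert the just-proved identity into $\E\bsb{\gft\nolimits_t(q)} - \E\bsb{\gft'\nolimits_t(q)\mid O_t}$, apply the triangle inequality, and pass the absolute value under the integral sign. Every resulting integrand is then the absolute difference of the two measures on a rectangle in $\mathcal{R}$ (with the convention that the empty rectangle has measure zero under any probability measure), so by hypothesis each integrand is pointwise bounded by $\e_t$. Integrating over $\lambda \in [0,1]$ bounds each of the two integrals by $\e_t$, for a total of $2\e_t$, uniformly in $q$.

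The only mildly delicate step is the Fubini manipulation on the tail part $\lambda \in [p,1]$ of the first integral, where one must use $\int_p^1 \I\{\lambda \le b\}\dif \lambda = (b-p)_+$ together with $(b-p)_+\,\I\{s \le p\} = (b-p)\I\{s \le p \le b\}$ (valid because $b \in [0,1]$ forces $(b-p)_+ \le 1-p$). The rest is a routine layer-cake-and-Fubini exercise, so I do not expect a substantial obstacle; both the identity and its consequence are essentially bookkeeping on top of the tail formula.
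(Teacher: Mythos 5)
Your proposal is correct and takes essentially the same approach as the paper: the identity is the layer-cake representation $b-s=\int_0^b\dif\lambda-\int_0^s\dif\lambda$ combined with Fubini (you merely run the computation from the integrals back to $\E_\mu[\gft]$, with an explicit case split at $\lambda=p$, whereas the paper goes in the forward direction), and the $2\e_t$ bound follows exactly as you describe by bounding each integrand by $\e_t$ on rectangles. No gaps.
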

\begin{proof}
Consider $\mu$ and $p\in[0,1]$ as in the statement, we have
\begin{align*}
    \E_{(s,b)\sim \mu} \bsb{ \GFT(p,s,b) } 
&
= 
    \int_{[0,1]^2}(b-s)\ind\{s \le p \le b\}\dif\mu(s,b)
=
    \int_{[0,p]\times[p,1]} (b-s)\dif\mu(s,b)
\\
&
= 
    \int_{[0,p]\times[p,1]} \left(\int_0^b\dif\lambda - \int_0^s\dif\lambda \right)\dif\mu(s,b)
\\
&
=
    \int_0^1 \int_{[0,p]\times[p,1]} \ind\{\lambda\le b\} \dif\mu(s,b) \dif\lambda
    - \int_0^1 \int_{[0,p]\times[p,1]} \ind\{\lambda\le s\} \dif\mu(s,b) \dif\lambda 
\\
&
= 
    \int_0^1 \mu\bsb{[0,p]\times[\max\{\lambda,p\},1]}\dif\lambda 
     - \int_0^1 \mu\bsb{[\lambda,p]\times [p,1]}\dif\lambda \;.
\end{align*}
The consequence follows immediately from the decomposition and the fact that the subsets considered, i.e., $[0,p]\times\bsb{ \max\{\lambda,p\},1 }$ and $[\lambda,p]\times [p,1]$ are indeed rectangles for all choices of $p$ and $\lambda$, over which the two measures coincide up to an $\e_t$ additive factor. 
\end{proof}
We can now prove the regret guarantees of FBP that we claimed at the beginning of the section.
\begin{theorem}
\label{thm:upper_full}
    In the full-feedback stochastic (iid) setting, the regret of Follow the Best Price  satisfies, for all $T\in \N$ \[
        R_T(\emph{FBP}) \le C\sqrt{T \log T}
        \;,
        \qquad \text{ where }
        C \le 90 \;.
    \]
\end{theorem}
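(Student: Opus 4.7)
The plan is to combine the First Decomposition Lemma with a uniform concentration bound over axis-aligned rectangles, and sum the per-round contributions. By \eqref{eq:triangular} and \eqref{eq:first_decomposition}, on the event
\[
    \cE_t := \bcb{ \sup_{R \in \mathcal{R}} \babs{ \psb[R] - \mathbb{P}_{(\st,\bt)\mid O_t}[R] } \le \e_t }
\]
the instantaneous regret $\E\bsb{\gft\nolimits_t(\ps) - \gft\nolimits_t(P_t) \mid O_t}$ is at most $4\e_t$. Since $\gft\nolimits_t \in [0,1]$, the per-round expected regret is therefore at most $4\e_t + \P(\cE_t^c)$, so the task reduces to choosing a sequence $(\e_t)_t$ such that $\sum_t \e_t = O\brb{\sqrt{T\log T}}$ and $\sum_t \P(\cE_t^c) = O(1)$.

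For the concentration step I would invoke a multivariate Dvoretzky--Kiefer--Wolfowitz inequality applied to the empirical CDF of the $t-1$ past samples. Writing every rectangle $[a,b]\times[c,d]$ as a signed combination of the four corner sets $[0,x]\times[0,y]$ and union-bounding the 2D-DKW bound over these four corners (alternatively, a standard VC bound, since axis-aligned rectangles in $\R^2$ have VC dimension $4$) gives an absolute constant $c_0$ such that $\P(\cE_t^c) \le \delta$ whenever $\e_t \ge c_0\sqrt{\log(1/\delta)/(t-1)}$. Taking $\delta_t = 1/t^2$, so that $\e_t = O\brb{\sqrt{\log t/(t-1)}}$ and $\sum_{t\ge 2}\delta_t = O(1)$, and bounding the first round trivially by $1$, I would conclude
\[
    R_T(\mathrm{FBP}) \le 1 + \sum_{t=2}^T \Brb{4\e_t + \P(\cE_t^c)} = O\brb{\sqrt{T\log T}},
\]
using $\sum_{t=2}^T \sqrt{\log t /(t-1)} = O\brb{\sqrt{T\log T}}$.

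The main obstacle is not the asymptotic rate but the explicit constant $C \le 90$ claimed in the statement: it requires careful bookkeeping of (i) the DKW constant together with the four-corner union bound, (ii) the factor $4$ inherited from the two triangle-inequality steps in \eqref{eq:triangular}--\eqref{eq:first_decomposition}, and (iii) the precise value of $\sum_{t=2}^T \sqrt{\log t/(t-1)}$. To sharpen the constant I would optimize the failure budget $\delta_t$ (e.g., $\delta_t \asymp 1/(tT)$ to rebalance the two sums) and invoke the sharpest available form of the multivariate DKW inequality.
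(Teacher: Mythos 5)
Your proposal is correct and follows essentially the same route as the paper's proof: reduce the per-round regret to $4\e_t$ on a good event via \eqref{eq:triangular} and \eqref{eq:first_decomposition}, control the bad event by a uniform (VC/DKW-type) concentration bound over axis-aligned rectangles, and sum over rounds to get $O\brb{\sqrt{T\log T}}$. The only difference is bookkeeping: the paper chooses $\e_t = 17\sqrt{\ln(16T)/t}$ so that $\P[G_t^c]\le\e_t$ and both sums are handled together, whereas you split the failure budget as $\delta_t = 1/t^2$ — either choice yields the claimed rate.
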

\begin{proof}
    For any time $t$, let $\e_t:=17\sqrt{{\ln(16T)}/{t}}$, and $G_t$ be the event that the distribution of $(\st,\bt)$ given $O_t$ is $\e_t$-close to that of $(S_t,B_t)$, uniformly over rectangles, i.e.,
    \[
        G_t
    :=
        \bcb{ \forall R \in \mathcal{R}, \  \babs{ \psb[R] - \mathbb{P}_{(\st,\bt) \mid O_t}[R] } \le \e_t } \;.
    \]
    Since the VC-dimension of $\mathcal{R}$ is $4$, we have that $\P[G_t^c] \le \e_t$. 
    This is an immediate consequence of VC-theory (see, e.g., \cite[Theorem 14.15]{mitzenmacher2017probability}).
    Putting this together with \cref{eq:triangular,eq:first_decomposition}, we conclude that for all $T\in \N$, the regret of the Follow the Best Price algorithm satisfies
\begin{align*}
&
    \exp{\sum_{t=1}^T\gft\nolimits_t(\ps)-\sum_{t=1}^T\gft\nolimits_t(P_t)} 
=
    \sum_{t=1}^T\E\bsb{\E[\gft\nolimits_t(p^{\star})]-\E[\gft\nolimits_t(P_t) \mid O_t]}
\\
&
\le
    \sum_{t=1}^T\E\bsb{\brb{\E[\gft\nolimits_t(p^{\star})]-\E[\gft\nolimits_t(P_t) \mid O_t]}\I_{G_t}} 
    + \sum_{t=1}^T \P[G_t^c]
    \le \sum_{t=1}^T 2\cdot(2\e_t) + \sum_{t=1}^T \e_t \le 90 \sqrt{T\ln T}.
\end{align*}
This concludes the proof.
\end{proof}

\subsection{\texorpdfstring{$\sqrt{T}$}{sqrt(T)} Lower Bound (iv+bd) \tccheck}
\label{sec:lower_bound_indep}

In this section, we show that the upper bound on the minimax regret we proved in \cref{sec:DKW} is tight, up to logarithmic factors.
No strategy can beat the $T^{1/2}$ rate when the seller/buyer pair $(S_t,B_t)$ is drawn i.i.d. from an unknown fixed distribution, even under the further assumptions that the valuations of the seller and buyer are independent of each other and have bounded densities.
For a full proof of the following theorem, see \cref{s:lower-full}.

\begin{theorem}
\label{thm:lower-full}
In the full-feedback 
model, for all horizons $T$, the minimax regret $\Rs_T$ satisfies 
\[
    \Rs_T 
:=
    \inf_{\alpha} \sup_{(S,B) \sim \cD} R_T(\alpha)
\ge
    c \sqrt{T} \;,
\]
where $c \ge {1}/{160}$, the infimum is over all of the learner's strategies $\alpha$, and the supremum is over all distributions $\cD$ of the seller $S$ and buyer $B$ such that:
\begin{itemize}
    \item[\emph{(iid)}] $(S_1,B_1),(S_2,B_2),\ldots \sim (S,B)$ is an i.i.d.\ sequence;
    \item[\emph{(iv)}] $S$ and $B$ are independent of each other;
    \item[\emph{(bd)}] $S$ and $B$ admit densities bounded by $M\ge4$.
\end{itemize}
\end{theorem}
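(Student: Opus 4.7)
The plan is to prove the $\Omega(\sqrt{T})$ lower bound by a two-point reduction, which the paper formalizes through its Embedding and Simulation Lemmas and which amounts to showing that bilateral trade with full feedback contains a two-action partial-monitoring game with a classical $\sqrt{T}$ lower bound.

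The first step is to construct, for a small parameter $\e>0$ (later tuned as $\e \asymp 1/\sqrt{T}$), two joint distributions $\cD^+_\e$ and $\cD^-_\e$ on $(S,B)\in[0,1]^2$ that (i) satisfy all of (iid), (iv), and (bd) with $M=4$, (ii) have total variation distance $O(\e)$, and (iii) admit a constant gap $|p^\star_+ - p^\star_-|\ge c_1$ between their respective optimal prices, with per-round suboptimality at least $c_2\e$ for playing the wrong one of $\{p^\star_+,p^\star_-\}$. The natural recipe is a symmetric base distribution whose expected gain-from-trade curve $p\mapsto \E\bsb{\gft(p,S,B)}$ has two \emph{tied} maximizers at well-separated prices $p_L<p_H$ --- for instance take $S$ uniform on a small interval near $0$ and $B$ an equal mixture of two uniforms supported around $p_L$ and $p_H$, with widths and heights arranged so that the two peaks of the GFT coincide. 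Then $\cD^\pm_\e$ are obtained by tilting the two mixture weights by $\pm\e$, which breaks the tie in opposite directions; perturbing only mixture weights (not density levels) automatically preserves (bd).

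With the two instances in hand, a standard Le~Cam/Pinsker argument closes the proof. The single-round KL divergence $\mathrm{KL}(\cD^+_\e\,\|\,\cD^-_\e)$ is $O(\e^2)$, so by tensorization the KL divergence between the laws of the full $T$-round transcript $(S_t,B_t)_{t\le T}$ under the two hypotheses is $O(T\e^2)$, and Pinsker bounds the corresponding total-variation distance by $O(\e\sqrt{T})$. Hence for any algorithm-dependent event $A$ one has $|\mathbb{P}^+[A]-\mathbb{P}^-[A]|=O(\e\sqrt{T})$. Choosing $A$ to be ``the time-averaged posted price lies on the $p_H$-side of the midpoint $(p_L+p_H)/2$'' forces the learner, on at least one of the two instances and with at least constant probability, to post a price on the wrong side of the midpoint for a constant fraction of the rounds. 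Because each such round costs at least $c_2\e$ of gain from trade by item (iii), the regret on that instance is $\Omega(\e T)$, and tuning $\e=1/(C'\sqrt{T})$ for a suitable universal constant $C'$ gives the quantitative bound $\Rs_T \ge c\sqrt{T}$ with $c\ge 1/160$.

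The main obstacle is the construction itself. Obtaining a \emph{constant} separation $|p^\star_+ - p^\star_-|\ge c_1$ from an $O(\e)$ perturbation is only possible by starting from a near-degenerate base landscape whose GFT has two tied maxima, and one must verify carefully that the tilt still respects the rather tight density bound $M=4$ while creating a \emph{linear}-in-$\e$ (and not $O(\e^2)$) shift in the heights of the two peaks. This reduces to an explicit computation of $p\mapsto\E\bsb{\gft(p,S,B)}$ as a piecewise-smooth function in each instance; factoring it using (iv) into seller and buyer parts makes the calculation tractable, but the choice of supports and densities so that the two peak heights tie exactly in the base case and separate by $\Theta(\e)$ after the tilt is the delicate part of the argument.
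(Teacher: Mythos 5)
Your overall strategy is the right one and is essentially the paper's: build a $\pm\e$-tilted pair of (iv)+(bd) instances whose optimal prices are separated by a constant while the instances themselves are only $O(\e)$ apart statistically, then conclude by a two-hypothesis indistinguishability argument with $\e \asymp 1/\sqrt{T}$. (The paper packages the second half as a chain of reductions, via its Embedding and Simulation Lemmas, to a known two-action game with a $\frac{1}{8}\cdot\frac{1}{20}\sqrt{T}$ lower bound, rather than running Le~Cam/Pinsker by hand; these are the same argument in different clothing.)

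However, your concrete construction does not work, and this is not merely the ``delicate tuning'' you flag at the end. If $S$ is supported on a small interval near $0$ and $B$ is a mixture of bumps at $p_L<p_H$, then for every price $p$ above the seller's support one has $\P[S\le p]=1$, so $\E\bsb{\gft(p,S,B)}=\E\bsb{(B-S)\I\{B\ge p\}}$ is non-increasing in $p$ on that whole range: the unique maximizer is any price just above the seller's support and below \emph{both} buyer bumps, where all trades are captured. There are no two competing peaks to tie, and tilting the buyer's mixture weights by $\pm\e$ does not move the maximizer at all, so requirement (iii) fails with any constant $c_1$. The two-peak structure requires tension between the seller-side and buyer-side acceptance probabilities, which forces the seller's mass to be split across two well-separated blocks \emph{interleaved} with the buyer's blocks. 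This is exactly what the paper's instance does: $f_{S,\pm\e}=2(1\pm\e)\I_{[0,1/4]}+2(1\mp\e)\I_{[1/2,3/4]}$ and $f_B=2\I_{[1/4,1/2]\cup[3/4,1]}$, so that a price near $1/4$ trades only with low-valuation sellers but with all buyers above $1/4$, a price near $3/4$ trades with all sellers but only with high-valuation buyers, the two values tie at $\e=0$, and the $\pm\e$ transfer of seller mass breaks the tie linearly in $\e$. Note also that the perturbation must sit on the \emph{seller} (or at least on the side creating the interleaving), and that the resulting density $2(1+\e)$ is what forces $M\ge 4$. A secondary, fixable issue: your test event $A$ (``the time-averaged price lies on the $p_H$-side of the midpoint'') does not imply that a constant fraction of rounds is posted on the wrong side (a few extreme prices can drag the average across the midpoint); the standard fix is to apply the total-variation bound to the expected \emph{number} of rounds posted on each side and use $N_L+N_H=T$.
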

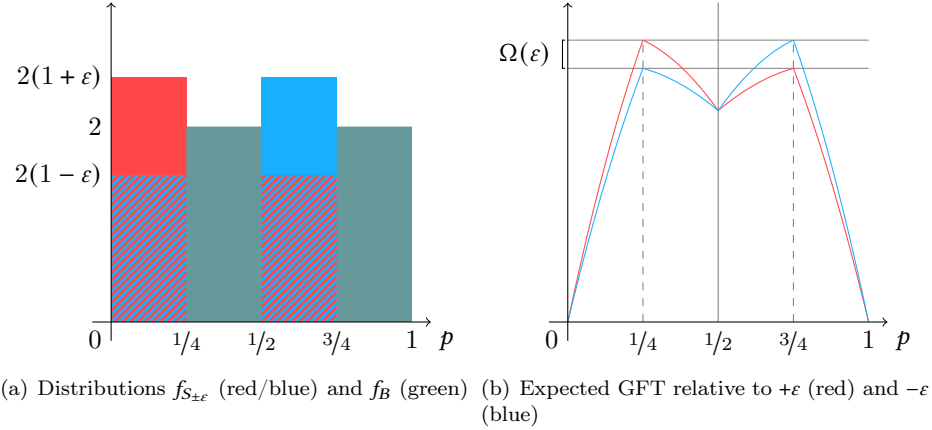
\begin{figure}
    \centering
    \subfigure[\label{f:sqrt-lower-a}Distributions $f_{S_{\pm \e}}$ (red/blue) and $f_B$ (green)]
    {
    \begin{tikzpicture}
    \def\k{0.5}
    \def\stretchY{0.65}
    \definecolor{myblue}{RGB}{25,175,255}
    \definecolor{myred}{RGB}{255,70,75}
    \definecolor{mygreen}{RGB}{64,128,128}
    \fill[myred] (0, 0) rectangle ({2*\k}, {10*\k*\stretchY});
        \begin{scope}
        \clip (0,0) rectangle ({2*\k}, {6*\k*\stretchY});
        \foreach \x in {-20,...,20}
        {
            \draw[myblue,line width=1pt] ({-0.1-0.1*\x}, {0}) -- ({3.9-0.1*\x}, {4});
        }
        \end{scope}
    \fill[myblue] ({4*\k},0) rectangle ({6*\k}, {10*\k*\stretchY});
        \begin{scope}
        \clip ({4*\k}, 0) rectangle ({6*\k}, {6*\k*\stretchY});
        \foreach \x in {-40,...,20}
        {
            \draw[myred,line width=1pt] ({-0.1-0.1*\x}, {0}) -- ({5.9-0.1*\x}, {6});
        }
        \end{scope}
    \fill[mygreen!80] 
        ({2*\k}, 0) rectangle ({4*\k}, {8*\k*\stretchY})
        ({6*\k}, 0) rectangle ({8*\k}, {8*\k*\stretchY})
    ;
    \draw[->] ({-0.5*\k}, {0*\k}) -- ({8.5*\k}, {0*\k}) node[below right] {$p$};
    \draw[->] ({0*\k}, -{0.5*\k}) -- ({0*\k}, {8.5*\k}) ;
    \draw (0,0) node[below left] {$0$}
        ({2*\k},0) node[below] {$\nicefrac{1}{4}$}
        ({4*\k},0) node[below] {$\nicefrac{1}{2}$}
        ({6*\k},0) node[below] {$\nicefrac{3}{4}$}
        ({8*\k},0) node[below] {$1$}
        (0, {10*\k*\stretchY}) node[left] {$2(1+\e)$}
        (0, {8*\k*\stretchY}) node[left] {$2$}
        (0, {6*\k*\stretchY}) node[left] {$2(1-\e)$}
        ;
\end{tikzpicture}
}
\subfigure[\label{f:sqrt-lower-b}Expected $\gft$ relative to $+\e$ (red) and $-\e$ (blue)]
{
    \begin{tikzpicture}[
    declare function={
        func(\x)
    = 
        (\x < 0) * (0)
        + and(\x >= 0, \x < 1/4) * ( (5/16) * \x * ( 5 - 4*\x ) )
        + and(\x >= 1/4, \x < 1/2) * ( (5/32) * ( -4*\x*\x + \x + 2 ) )
        + and(\x >= 1/2, \x < 3/4) * ( \x * ( 21/32 - (3/8)*\x ) )
        + and(\x >= 3/4, \x < 1) * ( (1/8) * ( 1 - \x ) * ( 8*\x + 3 ) )
        + (\x >= 1) * (0)
       ;
      }
    ]
    \def\k{0.5}
    \def\stretchY{3}
    \definecolor{myblue}{RGB}{25,175,255}
    \definecolor{myred}{RGB}{255,70,75}
    \def\colorOne{myblue}
    \def\colorTwo{myred}
    \draw[gray, dashed] ({\k*2}, 0) -- ({\k*2}, {(5/16)*\stretchY*\k*8});
    \draw[gray, dashed] ({\k*6}, 0) -- ({\k*6}, {(5/16)*\stretchY*\k*8});
    \draw[gray, thin] ({\k*4}, 0) -- ({\k*4}, {\k*8.5});
    \draw[gray, very thin] ({\k*0}, {(5/16)*\stretchY*\k*8}) -- ({\k*8}, {(5/16)*\stretchY*\k*8});
    \draw[gray, very thin] ({\k*0}, {(9/32)*\stretchY*\k*8}) -- ({\k*8}, {(9/32)*\stretchY*\k*8});
    \draw[domain = 0:{\k*8}, myred, samples = 1024] plot (\x, {\stretchY*\k*8*func( \x/(\k*8) )});
    \draw[domain = 0:{\k*8}, myblue, samples = 1024] plot (\x, {\stretchY*\k*8*func( 1-\x/(\k*8) )});
    \draw[->] ({-0.5*\k}, {0*\k}) -- ({8.5*\k}, {0*\k}) node[below right] {$p$};
    \draw[->] ({0*\k}, -{0.5*\k}) -- ({0*\k}, {8.5*\k}) ;
    \draw (-1pt, {(5/16)*\stretchY*\k*8}) 
        -- (-2pt, {(5/16)*\stretchY*\k*8}) 
        -- (-2pt, {(9/32)*\stretchY*\k*8})
        -- (-1pt, {(9/32)*\stretchY*\k*8})
    ;
        \draw (-2pt, {((9/32 + 5/16)/2)*\stretchY*\k*8}) node[left, xshift=-1pt] {$\Omega( \e )$};
    \draw (0,0) node[below left] {$0$}
        ({2*\k},0) node[below] {$\nicefrac{1}{4}$}
        ({4*\k},0) node[below] {$\nicefrac{1}{2}$}
        ({6*\k},0) node[below] {$\nicefrac{3}{4}$}
        ({8*\k},0) node[below] {$1$}
        ;
\end{tikzpicture}
}
    \caption{The best posted price is $\nicefrac{1}{4}$ (resp., $\nicefrac{3}{4}$) in the $+\e$ (resp., $-\e$) case.
    By posting $\nicefrac{1}{4}$, the player suffers a $\Omega( \e )$ regret in the $-\e$ case, and the same is true posting $\nicefrac{3}{4}$ if in $+\e$ case.
    }
    \label{f:root-t-full}
\end{figure}
\begin{proof}[Prook sketch]
We build a family of distributions $\cD_{\pm \e}$ of the seller and buyer $(S,B)$ parameterized by $\e\in[0,1]$.
For the seller, for any $\e \in [0,1]$, we define the density
\[
    f_{S,\pm\e} 
:= 
    2(1 \pm\e)\ind_{\lsb{0, \frac{1}{4}}} + 2(1\mp\e)\ind_{\lsb{\frac{1}{2},\frac{3}{4}}}\;. 
    \tag{\text{\cref{f:sqrt-lower-a}, in red/blue}}
\]
For the buyer, we define a single density (independently of $\e$)
\[
    f_B
:=
    2 \I_{\lsb{ \frac{1}{4}, \frac{1}{2} } \cup \lsb{ \frac{3}{4}, 1 }} \;.
    \tag{\text{\cref{f:sqrt-lower-a}, in green}}
\]
In the $+\e$ (resp., $-\e$) case, the optimal price belongs to the region $[0,\nicefrac{1}{2}]$ (resp., $(\nicefrac{1}{2}, 1]$, see \cref{f:sqrt-lower-b}).
By posting prices in the wrong region $(\nicefrac{1}{2}, 1]$ (resp., $[0,\nicefrac{1}{2}]$) in the $+\e$ (resp., $-\e$) case, the learner incurs a $\Omega(\e)$ regret.
Thus, if $\e$ is bounded-away from zero, the only way to avoid suffering linear regret is to identify the sign of $\pm\e$ and play accordingly.

This closely resembles the learning dilemma present in two-armed bandits.
In fact, a technical proof (see \cref{s:lower-full}), shows that our setting is harder (i.e., it has a higher minimax regret) than an instance of a stochastic two-armed bandit problem, which has a known lower bound on its minimax regret of $\frac{1}{8} \brb{ \frac{1}{20} \sqrt{T} }$ \cite{Nicolo06,BubeckC12}.
\end{proof}

\section{Realistic-Feedback Stochastic (iid) Setting \tccheck}
\label{s:real-feddb}

In this section, we tackle the problem in the more challenging realistic-feedback model, again under the assumption that the seller/buyer pairs $(S_1,B_1),(S_2,B_2),\ldots$ are $[0,1]^2$-valued i.i.d.\ random variables, all distributed as a common $(S,B)$.
We will first study the case in which $S$ and $B$ are independent (iv) and have bounded densities (bd), then discuss what happens if either of the two assumptions is lifted.

We recall that in the realistic-feedback model, the only information collected by the mechanism at the end of each round $t$ consists of $\ind\{S_t \le P_t\}$ and $\ind\{P_t \le B_t\}$. 
The main tool we use to leverage the structure of the objective function is the following decomposition Lemma. The first part of this result follows directly from specialising \Cref{eq:first_decomposition} to independent distributions; alternatively, it can be directly derived, as in \cite{MyersonS83}. 
\begin{lemma}[Second decomposition Lemma]
\label{lem:second_decomposition}
Let $S$ and $B$ be independent random variables in $[0,1]$, then for all prices $p \in [0,1]$ it holds
\begin{equation}
    \label{eq:decompostion}
    \E\bsb{ \GFT(p, S, B) }
=
    \P[S \le p]\int_p^1 \P[B \ge \lambda] \dif \lambda + \P[B\ge p]\int_0^p \P[S \le \lambda] \dif \lambda \;. 
\end{equation}
Moreover, if $S$ and $B$ admit densities bounded above by $M\ge 1$, then $\exp{\gft(\cdot)}$ is Lipschitz in the prices, with constant $4M$.
\end{lemma}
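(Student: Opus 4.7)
The plan is to handle the two parts of the lemma separately, with the identity feeding directly into the derivative bound.

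\textbf{Identity.} The cleanest route is to specialize \Cref{lem:first_decomposition} to the product measure $\mu = \mu_S \otimes \mu_B$. Under independence the two integrands factor: $\mu\bsb{[0,p]\times[\max\{\lambda,p\},1]} = \P[S\le p]\,\P[B\ge\max\{\lambda,p\}]$, and $\mu\bsb{[\lambda,p]\times[p,1]} = \brb{F_S(p)-F_S(\lambda)}\,\P[B\ge p]$ for $\lambda\le p$ (zero otherwise). Splitting the first integral at $\lambda=p$ and using $\int_0^p \brb{F_S(p)-F_S(\lambda)}\dif\lambda = pF_S(p) - \int_0^p F_S(\lambda)\dif\lambda$ on the second, the two identical $pF_S(p)\P[B\ge p]$ contributions cancel and what remains is exactly \eqref{eq:decompostion}. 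A self-contained alternative (in the spirit of Myerson--Satterthwaite) is to write $B-S = (p-S) + (B-p)$ on the trading event $\{S\le p \le B\}$, factor via independence of $S$ and $B$, and express each nonnegative expectation $\exp{(p-S)\ind\{S\le p\}}$ and $\exp{(B-p)\ind\{B\ge p\}}$ as an integral of probabilities through Fubini.

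\textbf{Lipschitz bound.} Introduce $F_S(p) := \P[S\le p]$, $G_B(p) := \P[B\ge p]$, $H(p) := \int_0^p F_S(\lambda)\dif\lambda$, and $K(p) := \int_p^1 G_B(\lambda)\dif\lambda$, so that \eqref{eq:decompostion} reads $\exp{\gft(p,S,B)} = F_S(p)K(p) + G_B(p)H(p)$. Under (bd), $F_S$ and $G_B$ are absolutely continuous with derivatives bounded a.e.\ by $M$, while all four factors $F_S, G_B, H, K$ take values in $[0,1]$ on $[0,1]$. Differentiating via the product rule (with $K' = -G_B$ and $H' = F_S$) and bounding the four resulting summands naively gives an a.e.\ derivative of magnitude at most $M\cdot 1 + 1\cdot 1 + M\cdot 1 + 1\cdot 1 = 2M+2 \le 4M$, using $M\ge 1$. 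Absolute continuity of the left-hand side then turns this pointwise bound into the claimed $4M$-Lipschitz property by integration. Incidentally, spotting the cancellation $F_S K' + G_B H' = -F_S G_B + F_S G_B = 0$ would sharpen the constant to $M$, but $4M$ is all that is needed downstream.

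\textbf{Main difficulty.} There is no real obstacle: the only point requiring a small amount of care is justifying the a.e.\ product-rule differentiation in order to convert the pointwise derivative estimate into a Lipschitz bound. This is routine once one notes that $F_S$, $G_B$, $H$, and $K$ are all absolutely continuous on $[0,1]$, hence so are their products, so the fundamental theorem of calculus applies. The remainder of the work is elementary bookkeeping on the decomposition.
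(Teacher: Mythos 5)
Your proposal is correct and follows essentially the same route as the paper: the identity is obtained exactly as in the text by specializing the First Decomposition Lemma to the product measure and cancelling the $pF_S(p)\P[B\ge p]$ terms. For the Lipschitz part you bound the a.e.\ derivative of $F_S K + G_B H$ and integrate, whereas the paper bounds the finite difference of the same four-term expansion directly; the estimates involved are identical and both give the constant $4M$ (your observation that the cancellation $F_SK'+G_BH'=0$ sharpens it to $M$ is also correct, though unnecessary).
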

\begin{proof}
    We start form \Cref{eq:first_decomposition} and use the independence of the distributions:
    \begin{align*}
        \E\bsb{\gft(p,S,B)} &=
    \int_0^1 \P_{(S,B)} \bsb{ [0,p]\times[\max\{\lambda,p\},1] } \dif \lambda - \int_0^1 \P_{(S,B)} \bsb{ [\lambda,p]\times [p,1] } \dif \lambda\\
    &=\P[S \le p]\int_0^1 \P\bsb{B \ge \max\{\lambda,p\}} \dif \lambda - \P[B \ge p]\int_0^1 \P[ \lambda\le S \le p]\dif \lambda\\
    &=\P[S \le p]\int_p^1 \P[B \ge \lambda] \dif \lambda + \P[B\ge p]\int_0^p \P[S \le \lambda] \dif \lambda \;.
    \end{align*}
Now, we just need to address the Lipschitzness given that $S$ and $B$ admit densities bounded from above by some $M\ge 1$.
Note that assumption implies that the cumulative distribution functions (CDFs) of $S$ and $B$, denoted by $F_S$ and $F_B$, are $M$-\lip{}. 
Let $0 \le p < q \le 1$, then 
\begin{multline*}
    \babs{ \gft(q,S,B) - \gft(p,S,B) } = \Biggl\lvert F_S(p) \int_{p}^q\brb{ 1-F_B(\lambda)}\dif\lambda - \brb{ 1-F_B(q) } \int_{p }^q F_S(\lambda) \dif \lambda + \\
\begin{aligned}
    &  \qquad \brb{ F_B(q)-F_B(p) } \int_0^p \brb{ 1-F_B(\lambda) } \dif\lambda - \brb{ F_S(q)-F_S(p) } \int_q^1 F_S(\lambda) \dif\lambda \Biggr\rvert \\
    &\le M \left[2\int_{p}^q \dif\lambda + 2(q-p)\right]\le 4M \labs{q-p} \;,
\end{aligned}
\end{multline*}
where in the inequality we used that $F_S$ and $F_B$ are upper bounded by $1$ and that they are both $M$-\lip{}.
\end{proof}
The previous result is also important since it relates the regularity of the distributions, specifically the boundedness of the densities, to the regularity of the objective function, i.e., its \lip{}ness.

\subsection{Scouting Bandits (SB): from Realistic Feedback to Multi-Armed Bandits \tccheck}
\label{sec:lipschitz}
    The main challenge in designing a low-regret algorithm for this problem lies in the fact that posting a price does not reveal the corresponding gain from trade.
    This uncertainty then translates to a slow learning of the function $\E \bsb{\gft(\cdot, S,B)}$ over the interval of possible prices.

    This can be overcome by sampling. The structure of the gain from trade, however, is such that this sampling needs to be spread over the entire unit interval, i.e., to estimate $\E\bsb{\gft(p, S, B)}$ for a given price $p$ it is \emph{not} sufficient to simply post $p$ repeatedly. \cref{eq:decompostion} helps visualizing this phenomenon.
    While the local terms, i.e., $\P[S \le p]$ and $\P[B \ge p]$, can be reconstructed by multiple posting of $p$, the integral terms exhibit a global behaviour: they depend on what happens in  $(p,1]$ or $[0,p)$, and hence need prices to be posted in those regions to be estimated accurately.
    This rules out direct applications of well established algorithms, like action elimination or UCB \cite{Slivkins19}, which crucially depend on the locality of the exploration. Similarly, estimating naively the CDFs on a grid of prices and using this information to reconstruct both the global and the local terms falls short of yielding the desired $T^{2/3}$ regret bound.
    
    \begin{algorithm}
         \textbf{input:} bandit algorithm $\alpha$, upper bound on the densities $M$, time horizon $T$, and precision parameter $\e$\;
         $\ell \gets M^{2/3}$, $\delta \gets \e M^{1/3}$, $K \gets \lce{\ell/\e}$, $T_0 \gets \bce{ \ln({4K}/{\delta})/(2\e^2)}$\;
         $q_i \gets i({\e}/{\ell})$, for all $i=0,1,2,\dots,K-1$\;
         \For{$t=1,2,\dots,T_0$ }{
         draw $P_t$ from $[0,1]$ uniformly at random\;
         post price $P_t$ and observe feedback $\brb{ \ind\{S_t \le P_t\}, \, \ind\{P_t \le B_t\} }$\;
         let $\hat I^t_i \gets \ind\{S_t \le P_t\le q_i\}$, and 
        $\hat J^t_i \gets \hat \ind\{q_i \le P_t\le B_t\}$,  for all $i=0,1,\dots, K-1$
         }
         let $\hat I_i \gets \frac{1}{T_0}\sum_{t=1}^{T_0} \hat I_i^t$ and $\hat J_i \gets \frac{1}{T_0}\sum_{t=1}^{T_0} \hat J_i^t$, for all $i=0,1,\dots, K-1$\;
         initialize the bandit algorithm $\alpha$ with the number of arms $K$ and the horizon $T-T_0$\;
         \For{$t=T_0+1,\dots, T$}{
         receive an arm $i_t$ from $\alpha$\;
         post price $P_t \gets q_{i_t}$ and observe $\lrb{ \ind\{S_t \le P_t\}, \, \ind\{P_t \le B_t\}}$\;
         feed to $\alpha$ reward $r_t = \ind\{S_t \le P_t\} \hat{J}_{i_t} + \ind\{P_t \le B_t\} \hat{I}_{i_t}$
         }
          \caption{Scouting Bandits}
          \label{alg:meta}
    \end{algorithm}
   
    Our \cref{alg:meta} (Scouting Bandits) consists of exploiting the decomposition in \cref{eq:decompostion} and learns separately the global and local part of the gain from trade.
    First, a global exploration phase is run (Scouting), in which prices uniformly sampled in $[0,1]$ are posted and used to simultaneously estimate all the integral terms on a suitable grid. 
    Once this is done, we can run any bandit algorithm (Bandits) on the prices of the grid, complementing the realistic feedback received with the estimated integrals. 
    We use the assumption on the independence of $S$ and $B$ (iv) to apply \Cref{lem:second_decomposition} and bounded densities (bd) to have Lipschitzness of the expected gain from trade. 
    Later, we show how dropping either of these assumptions leads to linear regret (\Cref{thm:lower-real-iv,thm:lower-real-bd}).

    \begin{theorem}
    \label{thm:upper_real}
    In the realistic-feedback stochastic (iid) setting where the distributions of the seller and buyer are independent (iv) and have densities bounded by $M \ge 1$, the regret of Scouting Bandits (SB) run with a bandit algorithm $\alpha$, upper bound on the densities $M\ge 1$, time horizon $T$, and parameter $\e>0$ satisfies
    \[
        R_T(\emph{SB}) 
    =
        \cO \lrb{
            \frac{1}{\e^2}\ln{\frac{M}{\e}} + \e M^{1/3} T + \mathcal{R} \lrb{ \lce{ \frac{M^{2/3}}{\e} },T }
        } \;,
    \]
    where $\mathcal{R}(\kappa,\tau)$ is any monotone bound (in $\tau$) on the regret of $\alpha$ on $\kappa$ arms for time horizon $\tau$.
    In particular, if $\alpha$ is either the Action Elimination or UCB algorithm \cite{Slivkins19}, the resulting regret in $\cO\brb{ M^{1/3}T^{2/3}\ln( MT) }.$
    \end{theorem}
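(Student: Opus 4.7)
The plan is to decompose the regret of Scouting Bandits into four contributions: (i) the worst-case loss paid during the $T_0$ exploration rounds; (ii) the discretization error incurred by restricting attention to the grid $\{q_0,\dots,q_{K-1}\}$; (iii) the estimation error in the integrals $\hat{I}_i,\hat{J}_i$; and (iv) the regret of the black-box bandit algorithm $\alpha$ on the $K$-armed problem that Scouting Bandits presents to it. The crucial observation is that, by the Second Decomposition Lemma (\cref{lem:second_decomposition}), for independent $S,B$ the expected gain from trade at any $q_i$ has the product structure
\[
    \E\bsb{\gft(q_i,S,B)} = F_S(q_i)\cdot\Psi(q_i) + \brb{1-F_B(q_i)}\cdot\Phi(q_i) \;,
\]
where $\Phi(q_i)=\int_0^{q_i}F_S(\lambda)\dif\lambda$ and $\Psi(q_i)=\int_{q_i}^1(1-F_B(\lambda))\dif\lambda$. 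When $P$ is uniform on $[0,1]$, one checks that $\E[\ind\{S\le P\le q_i\}]=\Phi(q_i)$ and $\E[\ind\{q_i\le P\le B\}]=\Psi(q_i)$, so the scouting statistics $\hat{I}_i$ and $\hat{J}_i$ are unbiased Monte Carlo estimates of $\Phi(q_i)$ and $\Psi(q_i)$.

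\textbf{First} I would handle (i) trivially: each of the $T_0=\lceil \ln(4K/\delta)/(2\e^2)\rceil$ uniform rounds contributes at most $1$ to the regret, giving $T_0=\cO(\e^{-2}\ln(M/\e))$ once $K=\lceil M^{2/3}/\e\rceil$ and $\delta=\e M^{1/3}$ are substituted. \textbf{Next}, for (iii), I would apply Hoeffding's inequality to each of the $2K$ empirical averages $\hat{I}_i,\hat{J}_i$ (each a mean of $T_0$ i.i.d.\ $[0,1]$-bounded variables) and union-bound to get the ``good'' event
\[
    G := \bcb{\forall i\in\{0,\dots,K-1\},\;\babs{\hat{I}_i-\Phi(q_i)}\le\e\text{ and }\babs{\hat{J}_i-\Psi(q_i)}\le\e}
\]
with $\P[G^c]\le\delta$; the failure event contributes at most $\delta T=\e M^{1/3}T$ to the regret. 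For (ii), the Lipschitzness part of \cref{lem:second_decomposition} gives that $\E[\gft(\cdot)]$ is $4M$-Lipschitz, so the best grid price $q_{i^\star}$ satisfies $\E[\gft(p^\star)]-\E[\gft(q_{i^\star})]\le 4M\cdot(\e/\ell)=4\e M^{1/3}$, yielding $\cO(\e M^{1/3}T)$ over $T$ rounds.

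\textbf{The main step} is (iv). Conditionally on the scouting outputs $\hat I,\hat J$ (which freeze the bandit environment), the reward $r_t$ presented to $\alpha$ at round $t>T_0$ has conditional expectation
\[
    \mu_{i_t} := F_S(q_{i_t})\,\hat{J}_{i_t} + \brb{1-F_B(q_{i_t})}\,\hat{I}_{i_t} \;,
\]
so $\alpha$ faces a stochastic $K$-armed bandit with means $\mu_1,\dots,\mu_K$ and collects regret at most $\mathcal{R}(K,T-T_0)$ against $\max_i\mu_i$. On the good event $G$, $|\mu_i-\E[\gft(q_i)]|\le 2\e$ uniformly in $i$, so adding and subtracting and telescoping yields
\[
    \sum_{t=T_0+1}^{T}\brb{\E[\gft(q_{i^\star})]-\E[\gft(q_{i_t})]}\le 4\e(T-T_0)+\mathcal{R}(K,T-T_0) \;.
\]
\textbf{Finally}, I would combine (i)--(iv) to obtain
\[
    R_T(\text{SB})=\cO\lrb{\tfrac{1}{\e^2}\ln\tfrac{M}{\e} + \e M^{1/3}T + \mathcal{R}\lrb{\lce{\tfrac{M^{2/3}}{\e}},T}} \;,
\]
using monotonicity of $\mathcal{R}$ in the horizon. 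The specialization to UCB/Action Elimination, where $\mathcal{R}(K,T)=\cO(\sqrt{KT\ln T})$, follows by optimizing over $\e$: setting $\e=\Theta((\ln T/T)^{1/3})$ balances the discretization/estimation and bandit terms and produces the claimed $\cO(M^{1/3}T^{2/3}\ln(MT))$ bound.

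\textbf{Main obstacle.} The subtle point is (iv): one must argue that the bandit algorithm, although its fed rewards $r_t$ are not unbiased estimators of the true $\E[\gft(q_{i_t})]$, is nevertheless solving a legitimate stochastic bandit instance (on the good event $G$) whose arm means $\mu_i$ are \emph{uniformly} $2\e$-close to the true expected gains from trade. This requires treating the scouting phase and the bandit phase as conditionally decoupled, and carefully verifying that $\alpha$'s guarantee $\mathcal{R}(K,T-T_0)$ applies to the realized (random but measurable) means $\mu_i$ rather than to $\E[\gft(q_i)]$.
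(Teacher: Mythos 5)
Your proposal is correct and follows essentially the same route as the paper: the same exploration/discretization/estimation/bandit decomposition, the same use of the Second Decomposition Lemma and of Hoeffding plus a union bound for the scouting estimates, and the same reduction of the post-scouting phase to a $K$-armed stochastic bandit whose arm means are uniformly $2\e$-close to the true expected gains from trade. Your handling of the ``main obstacle''---conditioning on the realized $\hat I,\hat J$ so that the bandit guarantee applies to the frozen means $\mu_i$---is if anything a slightly cleaner phrasing of the paper's conditioning on the good event $\mathcal{E}$.
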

    We prove the result in two steps, first we show how the global exploration phase is indeed enough to build good estimates of the integrals for a suitable grid of prices, then we analyze the rest of the algorithm, conditioning on the event that such estimates are accurate.
    \begin{lemma}
    \label{lem:estimating_integrals}
        Fix any precision $\e>0$, probability $\delta>0$, regularity term $\ell>0$.
        Let $K =\lce{
        {\ell}/{\e}}$ and fix the grid of prices $q_i = i ({\e}/{\ell})$, for all $i=0,1,\dots, K-1$. 
        Moreover, for all $i = 0,1,\dots,K-1$,
        \[
            I_i
        =
            \int_0^{q_i} \P[S \le \lambda] \dif\lambda\;, 
            \qquad 
            J_i=\int_{q_i}^1 \P[B \ge \lambda] \dif\lambda \;.
        \]
        Consider the estimators $\hat I_i,\hat J_i$ determined at the end of the exploration phase of \Cref{alg:meta}, i.e., $\hat I_i = \frac{1}{T_0}\sum_{t=1}^{T_0} \hat I_i^t,$ and $\hat J_i =\frac{1}{T_0}\sum_{t=1}^{T_0} \hat J_i^t$. If $T_0 \ge\frac{1}{2\e^2}\ln\frac{4K}{\delta}$, it holds that \[
        \max_{i=0,1,\dots, K-1}\bcb{ \lvert I_i-\hat I_i\rvert,\, \lvert J_i-\hat J_i\rvert}<\e,
        \]
        with probability at least $1-\delta$, where the probability is with respect to $(S_1,B_1),\dots,(S_{T_0},B_{T_0})$.
    \end{lemma}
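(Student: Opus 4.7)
The plan is to reduce the claim to a routine Hoeffding-plus-union-bound argument, once we verify that each per-round indicator is an unbiased estimator of the corresponding integral. The key observation is that $P_t$ is drawn uniformly from $[0,1]$ and is independent of $(S_t,B_t)$, which allows us to turn integrals against the (unknown) marginals of $S$ and $B$ into expectations of simple Bernoulli quantities we can observe.

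First I would compute $\E\bsb{\hat I_i^t}$ and $\E\bsb{\hat J_i^t}$. For $\hat I_i^t = \ind\{S_t\le P_t\le q_i\}$, conditioning on $P_t$ and using independence yields
\[
\E\bsb{\hat I_i^t \mid P_t}
=
\P[S\le P_t]\ind\{P_t\le q_i\},
\]
and taking expectation over the uniform $P_t$ gives
\[
\E\bsb{\hat I_i^t}
=
\int_0^1 \P[S\le p]\ind\{p\le q_i\}\dif p
=
\int_0^{q_i}\P[S\le \lambda]\dif\lambda
=
I_i.
\]
The analogous computation for $\hat J_i^t = \ind\{q_i\le P_t\le B_t\}$ gives $\E\bsb{\hat J_i^t}=J_i$.

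Second, since the tuples $(S_t,B_t,P_t)$ are i.i.d.\ across $t=1,\dots,T_0$ and the indicators take values in $[0,1]$, for each fixed $i$ the random variables $\hat I_i^1,\dots,\hat I_i^{T_0}$ are i.i.d.\ $[0,1]$-valued with mean $I_i$, and similarly for $\hat J_i$. Hoeffding's inequality then yields, for every $i=0,1,\dots,K-1$,
\[
\P\bsb{\babs{\hat I_i - I_i}\ge \e}
\le 2e^{-2T_0 \e^2},
\qquad
\P\bsb{\babs{\hat J_i - J_i}\ge \e}
\le 2e^{-2T_0 \e^2}.
\]

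Finally I would apply a union bound over the $2K$ events. This gives
\[
\P\Bsb{\max_{0\le i<K}\bcb{\babs{\hat I_i-I_i},\babs{\hat J_i-J_i}}\ge \e}
\le 4K\, e^{-2T_0\e^2}
\le \delta,
\]
where the last inequality uses the hypothesis $T_0\ge \frac{1}{2\e^2}\ln\frac{4K}{\delta}$. Taking complements concludes the proof. There is no genuine obstacle here: the only subtlety is making sure the expectations come out as the desired integrals, which is immediate from the uniform sampling of $P_t$ and its independence of $(S_t,B_t)$; everything else is a standard concentration-and-union-bound argument.
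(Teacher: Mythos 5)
Your proof is correct and follows essentially the same route as the paper's: establish that $\hat I_i^t,\hat J_i^t$ are unbiased estimators of $I_i,J_i$ (the paper conditions on the event $\{P_t\le q_i\}$ rather than on the value of $P_t$, but the computation is the same), then apply Hoeffding's inequality and a union bound over the $2K$ events. No gaps.
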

\begin{proof}
    We first show that the $\hat{I}_i^t, \hat{J}_i^t$ are unbiased estimators of $I_i,J_i$.
    For all $t \le T_0$, conditioning on $P_t \le q_i$, $P_t$ follows a uniform distribution in $[0,q_i]$, hence, calling $U_i$ a random variable uniformly distributed in that interval, one has:
\begin{align*}
    \E\bsb{ \hat I^t_i } &=  \P[S_t \le P_t \le q_i] = \P[P_t \le q_i]\P[S_t \le P_t \le q_i|P_t \le q_i]  \\
    &= \P[P_t \le q_i]\P[S \le U_i] = \int_{0}^{q_i} \P[S \le \lambda] \dif \lambda = I_i \;.
\end{align*}
A similar argument, conditioning on $P_t \ge q_i$, gives that $\E \bsb{\hat J_i} = J_i$.

    For all $i$, let $E_i=\bcb{\lvert\hat I_i - I_i\rvert> \e}$ and $F_i=\bcb{\lvert\hat J_i - J_i\rvert> \e}$ be the events in which there is an error greater that $\e$ in the estimates. 
    By the Chernoff-Hoeffding inequality one has that the probabilities of each event is upper bounded by $2e^{-2\e^2T_0}$. 
    Let $\mathcal{E}$ be the good event corresponding to all the integrals being estimated within an $\e$ accuracy. Clearly $\mathcal{E}$ is the complement of $\bigcup_{i = 0}^{K-1}(E_i \cup F_i)$. 
    Hence, we have that:
\[
    \P[\mathcal{E}^c] \le \sum_{i=0}^{K-1} \P[E_i]+ \sum_{i=0}^{K-1} \P[F_i]  \le  4K e^{-2\e^2T_0} \le \delta \;,
\]
where the last inequality has been obtained by simply plugging in $T_0\ge\frac{1}{2\e^2}\ln{\frac{4K}{\delta}}$.
\end{proof}
We are  now ready to prove the main result of this section.
After an initial global exploration phase, any optimal multi-armed bandit algorithm gives a regret for our bilateral trade problem that is optimal, up to logarithmic terms (by \cref{thm:lower-real-iv+bd}). 
\begin{proof}[Proof of \Cref{thm:upper_real}]
    Let $\e>0$ be a precision parameter we set later, then consider the result of \Cref{lem:estimating_integrals} on an initial exploration phase with $\delta \le \e M^{1/3}$ and $\ell = M^{2/3}$.
    Recall that, for all $t$, $\GFT_t(p) := \gft(p, S_t,B_t)$.
    By the Lipschitzness of the gain from trade (with constant $4M$, as shown in \Cref{lem:second_decomposition}) and the fact that the grid is spaced by $\frac{\e}{M^{2/3}}$, we get a discretization error at each time step which can be bounded as follows
    \begin{multline*}
        \max_{p \in [0,1]} \E\bsb{\gft(p,S,B)} - \max_{i=0,1,\dots, K-1}\E\bsb{\gft(q_i,S,B)} 
    \\
    \le 
        \E \bsb{\gft(p^{\star},S,B)} - \max_{i =i(p^{\star}), i(p^{\star})+1} \E\bsb{\gft(q_i,S,B)}
    \le 
        4M \min_{i =i(p^{\star}), i(p^{\star})+1} |q_i-p^{\star}| \le 2\e M^{1/3}
    \end{multline*}
    where $q_{i(p^{\star})}$ is the largest element in the grid smaller or equal to $p^{\star}$ and $q_{i(p^{\star})+1}$ is set to $1$ if $q_i(p^{\star}) = K-1$.
    Now, we have
    \begin{align*}
    R_T(\text{SB}) &= \max_{p \in [0,1]}\sum_{t=1}^T \E\bsb{\gft\nolimits_t(p)-\gft\nolimits_t(P_t)} \le T_0 + \sum_{t=T_0+1}^T \E\bsb{\gft\nolimits_t(p^{\star})-\gft\nolimits_t(P_t)} \\
    &\le T_0 +  2\e M^{1/3} T+ \max_{i=0,\dots, K-1}\exp{\sum_{t=T_0+1}^T\gft\nolimits_t(q_{i})-\sum_{t=T_0+1}^T\gft\nolimits_t(P_t)}
    \end{align*}
    Let $i^\star \in \argmax_{i=0,\ldots,K-1} \E \bsb{ \sum_{t=T_0+1}^T \gft\nolimits_t(q_i) }$.
    Let also $\mathcal{E}$ be the same ``good'' event as in the proof of \Cref{lem:estimating_integrals} for our choice of parameters.
    Conditioning with respect to it, we get
    \begin{align}
    \nonumber
    R_T(\text{SB}) &\le T_0 + 2\e M^{1/3} T + T\P\bsb{ \mathcal{E}^c } + \sum_{t=T_0+1}^T \E\bsb{\gft\nolimits_t(q_{i^{\star}})-\gft\nolimits_t(P_t)|\mathcal{E}}\\
      \label{eq:clean_event} 
     &\le T_0 +  2\e M^{1/3} T + T \delta + \sum_{t=T_0+1}^T \E\bsb{\gft\nolimits_t(q_{i^{\star}})-\gft\nolimits_t(P_t)|\mathcal{E}} \;.
    \end{align}
    We now focus on the last term.
    Conditioning on $\mathcal{E}$, for all $t > T_0$ and all $i \in \{0,1,\dots,K-1\}$ we have that the expected gain of posting price $q_i$ is $\e$-near to the expected reward for the multi-armed bandit instance associated. In fact on the one hand, by \Cref{lem:second_decomposition} and the fact that what $(S_t,B_t)$ is independent from $\mathcal{E}$
    \[
        \exp{\gft\nolimits_t(q_i)|\mathcal{E}}=\exp{\gft\nolimits_t(q_i)} = \P[S \le q_i]J_i + \P[q_i \le B]I_i.
    \]
    On the other hand, defining $r_t(i) := \ind\{S_t \le q_i\} \hat{J}_{i} + \ind\{q_i \le B_t\} \hat{I}_{i}$ and conditioning on $\mathcal{E}$, we get
    \[
        \exp{r_t(i)|\mathcal{E}} = \P[S_t \le q_i]\E\bsb{\hat{J_i}|\mathcal{E}} + \P [q_i \le B_t]\E\bsb{\hat{I_i}|\mathcal{E}}.
    \]
    Putting those two formulae together, we have the claimed inequality
    \begin{align*}
        \Babs{ \E\bsb{\gft\nolimits_t(q_i)-r_t(i)|\mathcal{E}} } 
    \le 
        \P[S_t \le q_i] \Babs{ J_i-\E\bsb{\hat{J_i}|\mathcal{E}} } + \P[q_i \le B_t] \Babs{I_i -\E \bsb{\hat{I_i}|\mathcal{E}} } \le 2\e \;.
    \end{align*}
    Plugging in this result, we get
    \[
        R_T(\text{SB}) \le T_0 + T \delta + 2\e M^{1/3} T + 4 \e T + \sum_{t=T_0+1}^T \E\bsb{r_t(i^\star)-r_t(i_t) \mid \mathcal{E}} \;.
    \]
    We now focus on the last term.
    Note that, for any arm $i$, the sequence $r_{T_0+1}(i),r_{T_0+2}(i), \ldots \in [0,2]$ is a $\P[\cdot \mid \cE]$-i.i.d. sequence of random variables.
    Thus, we can exploit the worst-case regret guarantees of $\alpha$ and the monotonicity of the regret, obtaining
    \[
        \sum_{t=T_0+1}^T \E\bsb{r_t(i^\star)-r_t(i_t) \mid \mathcal{E}}
    \le 
        \mathcal{R}(K,T-T_0)
    \;.
    \]
    Putting everything together, gives the first part of the result.
    
    For the second part, pick $\e= T^{-1/3}$ and consider any algorithm (e.g, Action Elimination or UCB) with $\mathcal{R}(\kappa, \tau) = \cO \brb{ \sqrt{\kappa \tau \log \tau } }$.
\end{proof}
    Note that Scouting Bandits needs to know in advance the time horizon $T$ to set the length of the initial exploration phase and also to pass that information to the multi-armed bandit algorithm embedded, if needed. This dependence, however, can be lifted with a standard doubling trick \cite{Nicolo06}.

\subsection{\texorpdfstring{$T^{2/3}$}{T\^{}(2/3)} Lower Bound Under Realistic Feedback (iv+bd) \tccheck}
\label{sec:candidate}

In this section, we show that the upper bound on the minimax regret we proved in \cref{sec:lipschitz} is tight, up to logarithmic factors.
No strategy can beat the $T^{2/3}$ rate when the seller/buyer pair $(S_t,B_t)$ is drawn i.i.d. from an unknown fixed distribution, even under the further assumptions that the valuations of the seller and buyer are independent of each other and have bounded densities.
For a full proof of the following theorem, see \cref{s:proof-t-two-thrid-lower-bound-appe}.
\begin{theorem}
\label{thm:lower-real-iv+bd}
In the realistic-feedback model, for all horizons $T$, the minimax regret $\Rs_T$ satisfies 
\[
    \Rs_T 
:=
    \inf_{\alpha} \sup_{(S,B) \sim \cD} R_T(\alpha)
\ge
    c T^{2/3} \;,
\]
where $c \ge 11/672$, the infimum is over all of the learner's strategies $\alpha$, and the supremum is over all distributions $\cD$ of the seller $S$ and buyer $B$ such that:
\begin{itemize}
    \item[\emph{(iid)}] $(S_1,B_1),(S_2,B_2),\ldots \sim (S,B)$ is an i.i.d.\ sequence;
    \item[\emph{(iv)}] $S$ and $B$ are independent of each other;
    \item[\emph{(bd)}] $S$ and $B$ admit densities bounded by $M\ge24$.
\end{itemize}
\end{theorem}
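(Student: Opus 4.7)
The plan is to reduce from a two-hypothesis testing problem whose minimax realistic-feedback regret is $\Omega(T^{2/3})$, in the same spirit as the $\sqrt{T}$ lower bound of \cref{thm:lower-full} but with a key twist: the valuations will be engineered so that the realistic feedback at the optimal candidate prices is statistically \emph{identical} under the two hypotheses. I will pick a scale parameter $\e^\star$ and two distributions $\cD_{\pm \e^\star}$ of $(S,B)$, both satisfying (iid)+(iv)+(bd), and then tune $\e^\star \asymp T^{-1/3}$.

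The construction specifies piecewise-constant marginal densities $f_{S,\pm\e}$ and $f_{B,\pm\e}$ bounded by $M$, partitioning $[0,1]$ into three kinds of sub-intervals. There is a ``plus'' region $R_+$ containing the unique maximizer of $p\mapsto \E_{\cD_{+\e}}\bsb{\gft(p,S,B)}$ and a symmetric ``minus'' region $R_-$ playing the same role for $\cD_{-\e}$, so that posting a price in the wrong one of $R_\pm$ costs $\Omega(\e)$ per round. Crucially, the two densities $f_{S,+\e}$ and $f_{S,-\e}$ (and similarly for $B$) are chosen to differ only on a third ``identifying'' region $R_0$ and to integrate identically over $R_0$, so that the CDFs $F_{S,+\e}$ and $F_{S,-\e}$ agree \emph{pointwise} outside $R_0$; this ensures that for any $p\notin R_0$ the realistic feedback $(\ind\{S\le p\},\ind\{p\le B\})$ has the same distribution under $\cD_{+\e}$ and $\cD_{-\e}$. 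The third property, essential to push the rate from $\sqrt T$ up to $T^{2/3}$, is that $R_0$ is designed to be $\Omega(1)$-suboptimal under both environments, so that probing inside it is expensive.

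With this three-region structure in place, the lower bound follows from a standard change-of-measure argument. Let $N$ be the number of rounds in which the learner posts a price in $R_0$. Each such round contributes at most $O(\e^2)$ to the KL divergence between the two environment laws of the observation histories (the Bernoulli feedback has means differing by $O(\e)$), so the total KL is $O(N\e^2)$. To identify the sign of $\pm\e^\star$ with probability bounded away from $1/2$, a Le~Cam / Pinsker argument forces $N=\Omega(1/(\e^\star)^2)$, each such round costing $\Omega(1)$. Averaging over the two environments, on the remaining $T-N$ rounds a constant fraction must land in the wrong one of $R_\pm$, each bleeding $\Omega(\e^\star)$ regret. Summing, one gets total regret $\Omega\brb{ 1/(\e^\star)^2 + T\e^\star}$, minimized at $\e^\star\asymp T^{-1/3}$, yielding $\Omega(T^{2/3})$.

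The main obstacle is the simultaneous engineering of the four required properties: both marginal densities must be bounded by $M$ and independent (to satisfy (iv)+(bd)); the CDFs must truly coincide outside $R_0$ (so that the realistic feedback there is uninformative); within $R_0$, queries must move Bernoulli means by exactly $\Theta(\e)$ while having per-round regret $\Omega(1)$; and posting in the ``wrong'' side of $R_\pm$ must cost $\Omega(\e)$. Verifying the gain-from-trade expressions via \cref{lem:second_decomposition} under these piecewise-constant densities is the bulk of the technical work. Once the instance is set up, the reduction can be packaged cleanly through the Embedding and Simulation lemmas mentioned in the introduction, translating a classical $T^{2/3}$ partial-monitoring lower bound into one for realistic-feedback bilateral trade.
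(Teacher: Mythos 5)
Your proposal matches the paper's proof in all essentials: the paper's hard instance is exactly your three-region construction (an $\Omega(1)$-suboptimal identifying region $a_1$ near $[0,\nicefrac{3}{16}]$ where the two seller densities differ but carry the same total mass, so the seller CDFs---and hence the realistic feedback---coincide everywhere else, plus two near-optimal regions $a_2,a_3$ whose expected gains from trade are separated by $\Omega(\e)$), and the paper likewise concludes by reducing, via the Embedding and Simulation lemmas, to the revealing-action partial monitoring game whose known $\Omega(T^{2/3})$ lower bound encapsulates your explore/exploit dichotomy with $\e\asymp T^{-1/3}$. The only piece you leave unexhibited is the explicit densities (the paper uses four width-$\nicefrac{1}{48}$ uniform blocks for each of $S$ and $B$, with only the seller's first two blocks carrying the $\pm\e$ perturbation and the buyer's density independent of $\e$), which you correctly identify as the remaining engineering work.
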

\begin{proof}[Proof sketch]

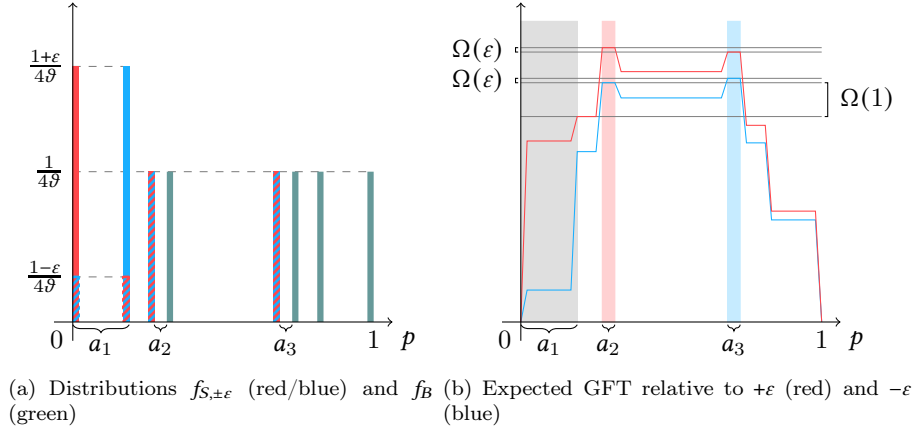
\begin{figure}
    \centering
    \subfigure[\label{f:t-two-third-lower-bound-a}Distributions $f_{S,\pm \e}$ (red/blue) and $f_B$ (green)]
    {
    \begin{tikzpicture}
    \def\k{0.5}
    \def\stretchY{0.5}
    \definecolor{myblue}{RGB}{25,175,255}
    \definecolor{myred}{RGB}{255,70,75}
    \definecolor{mygreen}{RGB}{64,128,128}
    \draw[gray, dashed] (0, {8*\k*\stretchY*(1+0.7)}) -- ({8*\k/6}, {8*\k*\stretchY*(1+0.7)});
    \draw[gray, dashed] (0, {8*\k*\stretchY*(1-0.7)}) -- ({8*\k/6}, {8*\k*\stretchY*(1-0.7)});
    \draw[gray, dashed] (0, {8*\k*\stretchY)}) -- ({8*\k}, {8*\k*\stretchY});
    \fill[myred] (0, 0) rectangle ({8*\k/48}, {8*\k*\stretchY*(1+0.7)});
        \begin{scope}
        \clip (0,0) rectangle ({8*\k/48}, {8*\k*\stretchY*(1-0.7)});
        \foreach \x in {-2,...,10}
        {
            \draw[myblue,line width=1pt] ({-0.1-0.1*\x}, {0}) -- ({1.9-0.1*\x}, {2});
        }
        \end{scope}
        \draw[myblue, line width=1pt] 
        (0, {8*\k*\stretchY*(1-0.7)}) 
            -- ({8*\k/48}, {8*\k*\stretchY*(1-0.7)})
        ;
    \fill[myblue] ({8*\k/6}, 0) rectangle ({8*\k*3/16}, {8*\k*\stretchY*(1+0.7)});
        \begin{scope}
        \clip ({8*\k/6}, 0)  rectangle ({8*\k*3/16}, {8*\k*\stretchY*(1-0.7)});
        \foreach \x in {-2,...,10}
        {
            \draw[myred,line width=1pt] ({8*\k/6-0.1-0.1*\x}, {0}) -- ({8*\k/6+1.9-0.1*\x}, {2});
        }
        \end{scope}
        \draw[myred, line width=1pt] 
        ({8*\k/6}, {8*\k*\stretchY*(1-0.7)}) 
            -- ({8*\k*3/16}, {8*\k*\stretchY*(1-0.7)})
        ;
    \fill[myred] ({8*\k/4}, 0) rectangle ({8*\k*13/48}, {8*\k*\stretchY});
        \begin{scope}
        \clip ({8*\k/4}, 0) rectangle ({8*\k*13/48}, {8*\k*\stretchY});
        \foreach \x in {-2,...,20}
        {
            \draw[myblue,line width=1pt] ({8*\k/4-0.1-0.1*\x}, {0}) -- ({8*\k/4+1.9-0.1*\x}, {2});
        }
        \end{scope}
    \fill[myred] ({8*\k*2/3}, 0) rectangle ({8*\k*11/16}, {8*\k*\stretchY});
        \begin{scope}
        \clip ({8*\k*2/3}, 0) rectangle ({8*\k*11/16}, {8*\k*\stretchY});
        \foreach \x in {-2,...,20}
        {
            \draw[myblue,line width=1pt] ({8*\k*2/3-0.1-0.1*\x}, {0}) -- ({8*\k*2/3+1.9-0.1*\x}, {2});
        }
        \end{scope}
    \fill[mygreen!80] 
        ({8*\k*5/16}, 0) rectangle ({8*\k*1/3}, {8*\k*\stretchY})
        ({8*\k*35/48}, 0) rectangle ({8*\k*3/4}, {8*\k*\stretchY})
        ({8*\k*13/16}, 0) rectangle ({8*\k*5/6}, {8*\k*\stretchY})
        ({8*\k*47/48}, 0) rectangle ({8*\k}, {8*\k*\stretchY})
    ;
    \draw[->] ({-0.5*\k}, {0*\k}) -- ({8.5*\k}, {0*\k}) node[below right] {$p$};
    \draw[->] ({0*\k}, -{0.5*\k}) -- ({0*\k}, {8.5*\k}) ;
    \draw [decorate,decoration={brace, amplitude=3pt, mirror}, yshift=-1pt]
        (0,0) -- ({\k*8*3/16},0) node [black, midway, below, yshift=-2pt] {$a_1$};
    \draw [decorate,decoration={brace, amplitude=2.4pt, mirror}, yshift=-1pt]
        ({\k*8*13/48},0) -- ({\k*8*5/16},0) node [black, midway, below, yshift=-2pt] {$a_2$};
    \draw [decorate,decoration={brace, amplitude=2.4pt, mirror}, yshift=-1pt]
        ({\k*8*11/16},0) -- ({\k*8*35/48},0) node [black, midway, below, yshift=-2pt] {$a_3$};
    \draw (0,0) node[below left] {$0$}
        ({8*\k},0) node[below] {$1$}
        (0, {8*\k*\stretchY*(1+0.7)}) node[left] {$\frac{1+\e}{4 \tht}$}
        (0, {8*\k*\stretchY}) node[left] {$\frac{1}{4 \tht}$}
        (0, {8*\k*\stretchY*(1-0.7)}) node[left] {$\frac{1-\e}{4\tht}$}
        ;
\end{tikzpicture}
}
\subfigure[\label{f:t-two-third-lower-bound-b}Expected $\gft$ relative to $+\e$ (red) and $-\e$ (blue)]
{\begin{tikzpicture}[
    declare function={
        func(\x,\eps)
    = 
        (\x < 0) * (0)
        + and(\x >= 0, \x < 1/48) * ( ( 3 * (46 - 32 * \x) * \x * ( \eps + 1 ) )/16 )
        + and(\x >= 1/48, \x < 1/6) * ( (17/96)*(1+\eps) )
        + and(\x >= 1/6, \x < 3/16) * ( -(69/8)*\x*(-1+\eps) + 6*\x*\x*(-1+\eps) + (1/96)*(-105+139*\eps) )
        + and(\x >= 3/16, \x < 1/4) * ( \eps/24 + 5/16 )
        + and(\x >= 1/4, \x < 13/48) * ( \eps/24 - 47/32 + 69*\x/8 - 6*\x*\x )
        + and(\x >= 13/48, \x < 5/16) * ( \eps/24 + 41/96 )
        + and(\x >= 5/16, \x < 1/3) * ( ( -3456*\x^2 + (1032 - 384*\eps)*\x + 152*\eps + 343 )/768 )
        + and(\x >= 1/3, \x < 2/3) * ( \eps/32 + 101/256 )
        + and(\x > 2/3, \x <= 11/16) * ( \eps/32 + (1/768) * (-2081 + 5880* \x - 3456*\x^2) )
        + and(\x > 11/16, \x < 35/48) * ( \eps / 32 + 41/96)
        + and(\x >= 35/48, \x < 3/4) * ( 37/32 + 27/8 * \x - 6*\x*\x + 19/48 * \eps - \x*\eps/2)
        + and(\x >= 3/4, \x < 13/16) * ( (\eps + 15) / 48 )
        + and(\x >= 13/16, \x < 5/6) * ( 49/32 + 27*\x/8 - 6*\x*\x + 41/96 * \eps - \x * \eps / 2)
        + and(\x >= 5/6, \x < 47/48) * ( ( \eps + 17) / 96 )
        + and(\x >= 47/48, \x < 1) * ( -1/8 * (-1 + \x) * (21 + 48 * \x + 4 * \eps) )
        + (\x >= 1) * (0)
       ;
      }
    ]
    \def\k{0.5}
    \def\stretchY{2}
    \definecolor{myblue}{RGB}{25,175,255}
    \definecolor{myred}{RGB}{255,70,75}
    \def\colorOne{myblue}
    \def\colorTwo{myred}
    \draw[gray!25, fill = gray!25] (0,0) rectangle ({\k*8*3/16}, {8*\k});
    \draw[myred!25, fill = myred!25] ({\k*8*13/48},0) rectangle ({\k*8*5/16}, {8*\k});
    \draw[myblue!25, fill = myblue!25] ({\k*8*11/16},0) rectangle ({\k*8*35/48}, {8*\k});
    \draw[gray, very thin] ({\k*0}, {\stretchY*\k*8*(0.7/24 + 41/96)}) -- ({\k*8}, {\stretchY*\k*8*(0.7/24 + 41/96)});
    \draw[gray, very thin] ({\k*0}, {\stretchY*\k*8*(0.7 / 32 + 41/96)}) -- ({\k*8}, {\stretchY*\k*8*(0.7 / 32 + 41/96)});
    \draw[gray, very thin] ({\k*0}, {\stretchY*\k*8*(-0.7/32 + 41/96)}) -- ({\k*8}, {\stretchY*\k*8*(-0.7/32 + 41/96)});
    \draw[gray, very thin] ({\k*0}, {\stretchY*\k*8*(-0.7/24 + 41/96)}) -- ({\k*8}, {\stretchY*\k*8*(-0.7/24 + 41/96)});
    \draw[gray, very thin] ({\k*0}, {8*\k*\stretchY*(0.7/24 + 5/16)}) -- ({\k*8}, {8*\k*\stretchY*(0.7/24 + 5/16)});
    \draw[myblue] (0,0)
        -- ({8*\k/48}, {8*\k*\stretchY*(17/96)*(1-0.7)})
        -- ({8*\k/6}, {8*\k*\stretchY*(17/96)*(1-0.7)})
        -- ({8*\k*3/16}, {8*\k*\stretchY*(-0.7/24 + 5/16)})
        -- ({8*\k/4}, {8*\k*\stretchY*(-0.7/24 + 5/16)})
        -- ({8*\k*13/48}, {8*\k*\stretchY*( -0.7/24 + 41/96 )})
        -- ({8*\k*5/16}, {8*\k*\stretchY*( -0.7/24 + 41/96 )})
        -- ({8*\k*1/3}, {8*\k*\stretchY*( -0.7/32 + 101/256 )})
        -- ({8*\k*2/3}, {8*\k*\stretchY*( -0.7/32 + 101/256 )})
        -- ({8*\k*11/16}, {8*\k*\stretchY*( -0.7 / 32 + 41/96)})
        -- ({8*\k*35/48}, {8*\k*\stretchY*( -0.7 / 32 + 41/96)})
        -- ({8*\k*3/4}, {8*\k*\stretchY*( (-0.7 + 15) / 48 )})
        -- ({8*\k*13/16}, {8*\k*\stretchY*( (-0.7 + 15) / 48 )})
        -- ({8*\k*5/6}, {8*\k*\stretchY*( ( -0.7 + 17) / 96 )})
        -- ({8*\k*47/48}, {8*\k*\stretchY*( ( -0.7 + 17) / 96 )})
        -- ({8*\k}, {8*\k*\stretchY*0})
    ;
    \draw[myred] (0,0)
        -- ({8*\k/48}, {8*\k*\stretchY*(17/96)*(1+0.7)})
        -- ({8*\k/6}, {8*\k*\stretchY*(17/96)*(1+0.7)})
        -- ({8*\k*3/16}, {8*\k*\stretchY*(0.7/24 + 5/16)})
        -- ({8*\k/4}, {8*\k*\stretchY*(0.7/24 + 5/16)})
        -- ({8*\k*13/48}, {8*\k*\stretchY*( 0.7/24 + 41/96 )})
        -- ({8*\k*5/16}, {8*\k*\stretchY*( 0.7/24 + 41/96 )})
        -- ({8*\k*1/3}, {8*\k*\stretchY*( 0.7/32 + 101/256 )})
        -- ({8*\k*2/3}, {8*\k*\stretchY*( 0.7/32 + 101/256 )})
        -- ({8*\k*11/16}, {8*\k*\stretchY*( 0.7 / 32 + 41/96)})
        -- ({8*\k*35/48}, {8*\k*\stretchY*( 0.7 / 32 + 41/96)})
        -- ({8*\k*3/4}, {8*\k*\stretchY*( (0.7 + 15) / 48 )})
        -- ({8*\k*13/16}, {8*\k*\stretchY*( (0.7 + 15) / 48 )})
        -- ({8*\k*5/6}, {8*\k*\stretchY*( ( 0.7 + 17) / 96 )})
        -- ({8*\k*47/48}, {8*\k*\stretchY*( ( 0.7 + 17) / 96 )})
        -- ({8*\k}, {8*\k*\stretchY*0})
    ;
    \draw[->] ({-0.5*\k}, {0*\k}) -- ({8.5*\k}, {0*\k}) node[below right] {$p$};
    \draw[->] ({0*\k}, -{0.5*\k}) -- ({0*\k}, {8.5*\k}) ;
    \draw [decorate,decoration={brace, amplitude=3pt, mirror}, yshift=-1pt]
        (0,0) -- ({\k*8*3/16},0) node [black, midway, below, yshift=-2pt] {$a_1$};
    \draw [decorate,decoration={brace, amplitude=2.4pt, mirror}, yshift=-1pt]
        ({\k*8*13/48},0) -- ({\k*8*5/16},0) node [black, midway, below, yshift=-2pt] {$a_2$};
    \draw [decorate,decoration={brace, amplitude=2.4pt, mirror}, yshift=-1pt]
        ({\k*8*11/16},0) -- ({\k*8*35/48},0) node [black, midway, below, yshift=-2pt] {$a_3$};
    \draw (-1pt, {\stretchY*\k*8*(0.7/24 + 41/96)}) 
        -- (-2pt, {\stretchY*\k*8*(0.7/24 + 41/96)}) 
        -- (-2pt, {\stretchY*\k*8*(0.7 / 32 + 41/96)})
        -- (-1pt, {\stretchY*\k*8*(0.7 / 32 + 41/96)})
    ;
    \draw (-2pt, {\stretchY*\k*8*((0.7/24 + 41/96) + (0.7 / 32 + 41/96))/2}) node[left, xshift=-1pt] {$\Omega(\e)$};
    \draw (-1pt, {\stretchY*\k*8*(-0.7/24 + 41/96)}) 
        -- (-2pt, {\stretchY*\k*8*(-0.7/24 + 41/96)}) 
        -- (-2pt, {\stretchY*\k*8*(-0.7 / 32 + 41/96)})
        -- (-1pt, {\stretchY*\k*8*(-0.7 / 32 + 41/96)})
    ;
    \draw (-2pt, {\stretchY*\k*8*((-0.7/24 + 41/96) + (-0.7 / 32 + 41/96))/2}) node[left, xshift=-1pt] {$\Omega(\e)$};
    \draw ({(8*\k*1.01)}, {\stretchY*\k*8*(-0.7 / 24 + 41/96)}) 
        -- ({(8*\k*1.02)}, {\stretchY*\k*8*(-0.7 / 24 + 41/96)})
        -- ({(8*\k*1.02)}, {8*\k*\stretchY*(0.7/24 + 5/16)})
        -- ({(8*\k*1.01)}, {8*\k*\stretchY*(0.7/24 + 5/16)})
    ;
    \draw ({(8*\k*1.02)}, {\stretchY*\k*8*((-0.7 / 24 + 41/96) + (0.7/24 + 5/16))/2}) node[right, xshift=1pt] {$\Omega(1)$};
    \draw (0,0) node[below left] {$0$}
        ({8*\k},0) node[below] {$1$};
\end{tikzpicture}
}
    \caption{
    The only three regions where it makes sense for the learner to post prices are $a_1, a_2, a_3$. Prices in  $a_1$ reveal information about the sign of $\pm\e$ suffering a $\Omega(1)$ regret; prices in $a_2$ are optimal if the distribution of the seller is the red one $(+\e)$ but incur $\Omega(\e)$ regret if it is the blue one $(-\e)$; the converse happens in $a_3$.%
    }
    \label{f:t-two-third-lower-bound}
\end{figure}

We build a family of distributions $\cD_{\pm \e}$ of the seller and buyer $(S,B)$ parameterized by $\e\in[0,1]$.
For the seller, for any $\e \in [0,1]$, we define the density
\[
    f_{S,\pm\e}
:=
    \frac{1}{4\tht} \lrb{
    (1\pm\e) \I_{[0,\tht]}
+ 
    (1\mp\e) \I_{\lsb{ \frac{1}{6}, \frac{1}{6} + \tht }} 
+
    \I_{\lsb{ \frac{1}{4}, \frac{1}{4} + \tht }}
+
    \I_{\lsb{ \frac{2}{3}, \frac{2}{3} + \tht }}
    } \;,
    \tag{\text{\cref{f:t-two-third-lower-bound-a}, in red/blue}}
\]
where $\tht := \nicefrac{1}{48}$ is a normalization constant.
For the buyer, we define a single density (independently of $\e$)
\[
    f_B
:=
    \frac{1}{4\tht} \lrb{
    \I_{\lsb{ \frac{1}{3}-\tht,\,\frac{1}{3} }}
+ 
    \I_{\lsb{ \frac{3}{4} - \tht,\, \frac{3}{4} }} 
+
    \I_{\lsb{ \frac{5}{6} - \tht,\, \frac{5}{6} }}
+
    \I_{\lsb{ 1-\tht,\, 1 }}
    }\;.
    \tag{\text{\cref{f:t-two-third-lower-bound-a}, in green}}
\]
In the $+\e$ (resp., $-\e$) case, the optimal price belongs to a region $a_2$ (resp., $a_3$, see \cref{f:t-two-third-lower-bound-b}).
By posting prices in the wrong region $a_3$ (resp., $a_2$) in the $+\e$ (resp., $-\e$) case, the learner incurs $\Omega(\e)$ regret.
Thus, if $\e$ is bounded-away from zero, the only way to avoid suffering linear regret is to identify the sign of $\pm\e$ and play accordingly.
Clearly, the feedback received from the buyer gives no information on $\pm\e$. 
Since the feedback received from the seller at time $t$ by posting a price $p$ is $\I\{S_t \le p\}$, one can obtain information about (the sign of) $\pm\e$ only by posting prices in the costly ($\Omega(1)$-regret) sub-optimal region $a_1$.

This closely resembles the learning dilemma present in the so-called \emph{revealing action} partial monitoring game \cite{Nicolo06}.
In fact, a technical proof (see \cref{s:proof-t-two-thrid-lower-bound-appe}), shows that our setting is harder (i.e., it has a higher minimax regret) than an instance of a revealing action problem, 
which has a known lower bound on its minimax regret of $\frac{11}{96} \brb{ \frac{1}{7} T^{2/3} }$ \cite{cesa2006regret}.
\end{proof}

\subsection{Linear Lower Bound Under Realistic Feedback (bd) \tccheck}

In this section, we show that no strategy that can achieve worst-case sublinear regret when the seller/buyer pair $(S_t,B_t)$ is drawn i.i.d. from an unknown fixed distribution, even under the further assumption that the valuations of the seller and buyer have bounded densities.
This is due to a lack of observability.
For a full proof of the following theorem, see \cref{s:lower-bd-appe}.
\begin{theorem}
\label{thm:lower-real-bd}
In the realistic-feedback model, for all horizons $T$, the minimax regret $\Rs_T$ satisfies 
\[
    \Rs_T 
:=
    \inf_{\alpha} \sup_{(S,B) \sim \cD} R_T(\alpha)
\ge
    c T \;,
\]
where $c \ge 1/24$, the infimum is over all of the learner's strategies $\alpha$, and the supremum is over all distributions $\cD$ of the seller $S$ and buyer $B$ such that:
\begin{itemize}
    \item[\emph{(iid)}] $(S_1,B_1),(S_2,B_2),\ldots \sim (S,B)$ is an i.i.d.\ sequence;
    \item[\emph{(bd)}] $S$ and $B$ admit densities bounded by $M\ge24$.
\end{itemize}
\end{theorem}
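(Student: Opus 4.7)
My plan is to mimic the argument underlying \cref{thm:lower-real-iv+bd}: exhibit two distributions $\cD_0, \cD_1$ of the pair $(S,B)$ satisfying (iid) and (bd) that are statistically indistinguishable under realistic feedback yet have well-separated optimal prices, so that no strategy can achieve sublinear regret against both simultaneously. Once such a pair is in hand, a Simulation Lemma analogous to the one used for \cref{thm:lower-real-iv+bd} guarantees that any learner's marginal distribution over the posted prices $(P_1,\dots,P_T)$ is the same under $\cD_0$ and under $\cD_1$; averaging the regrets across the two scenarios then yields
\[
    \max\brb{R_T(\alpha;\cD_0),R_T(\alpha;\cD_1)} \,\ge\, \frac{T}{2}\Brb{\gft\nolimits_0^\star+\gft\nolimits_1^\star - \max_{p\in[0,1]}\brb{\gft\nolimits_0(p)+\gft\nolimits_1(p)}} \,\ge\, cT,
\]
where $\gft_i^\star := \max_p \gft_i(p)$, and the final inequality is the ``separation'' property built into the construction.

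The first technical step is a reduction of indistinguishability to three scalar quantities. At a posted price $p$, the realistic feedback $\brb{\I\{S \le p\},\,\I\{p \le B\}}$ is a two-bit observation whose joint law is pinned down by $\brb{F_S(p),\,1-F_B(p^-),\,T(p)}$, with $T(p):=\P[S \le p \le B]$. Hence it suffices to make the marginal cdfs $F_S,F_B$ and the trade-probability curve $T$ coincide under $\cD_0$ and $\cD_1$. This is precisely where the failure of (iv) buys flexibility: under (iv) the joint is determined by its marginals (and hence $T$ too), so no two distinct scenarios can share all three observables, whereas allowing arbitrary dependence opens up a rich family of couplings sharing a prescribed triple $(F_S,F_B,T)$.

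Second, I would construct $\cD_0$ and $\cD_1$ explicitly. Take marginals supported on a handful of short intervals inside $[0,1]$, smoothed enough to satisfy the density bound $M \ge 24$, and place the joint mass on a few small rectangles of $[0,1]^2$. Pick $\cD_0$ so that ``large-spread'' trading pairs live in a region $I_0 \subset [0,1]$ of posted prices, and then obtain $\cD_1$ by applying a ``checker-board'' perturbation that swaps some of the joint mass to analogous rectangles producing ``large-spread'' trades in a disjoint region $I_1$. If the support rectangles are chosen so that the column- and row-fractions of the trade region $\{s \le p \le b\}$ through the four perturbed rectangles balance at every $p$, the perturbation preserves $T$ (and trivially preserves marginals), which gives indistinguishability; at the same time, because $\cD_0$ and $\cD_1$ expose the high-spread pairs at disjoint price intervals, the optimum of $\gft_i$ is achieved only on $I_i$, and any single posted price pays an $\Omega(1)$ shortfall in one of the two scenarios --- yielding the separation inequality above with $c \ge 1/24$.

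The main obstacle is the explicit construction in the second step: the ``checker-board'' trick that preserves both marginals and trade probability is only available in a very specific geometric arrangement of the support rectangles, and one must carry out the joint design so that all three observables $(F_S,F_B,T)$ coincide exactly while a genuine constant-sized gap between $\gft_0$ and $\gft_1$ is engineered at two disjoint regions of prices, all under the density bound $M \ge 24$. The detailed verification is deferred to \cref{s:lower-bd-appe}.
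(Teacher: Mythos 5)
Your plan is essentially the paper's argument: reduce indistinguishability under realistic feedback to matching the three observables $\brb{F_S(p),F_B(p),\P[S\le p\le B]}$ at every $p$, exploit correlation (the failure of (iv)) to build two such couplings whose high-spread trades sit in disjoint price regions, and average the two regrets after a Simulation-type argument shows the learner's price law is the same in both scenarios. Your separation inequality is correct and, with the paper's instance, yields exactly $\frac{T}{2}\brb{\frac{1}{3}+\frac{1}{3}-\frac{7}{12}}=\frac{T}{24}$.

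The one genuine gap is the step you yourself flag: the pair $(\cD_0,\cD_1)$ is asserted, not exhibited, and the whole theorem lives or dies on its existence. For the record, the paper realizes precisely the arrangement you describe with three axis-aligned squares of side $\frac{1}{8}$: $f=\frac{64}{3}\Brb{\I_{[0,\frac18]\times[\frac38,\frac12]}+\I_{[\frac28,\frac38]\times[\frac78,1]}+\I_{[\frac48,\frac58]\times[\frac58,\frac68]}}$ and $g(s,b)=f(1-b,1-s)$. The anti-diagonal reflection is the clean way to implement your ``checker-board swap'': for this particular support the reflected squares project onto the same $s$- and $b$-intervals as the original ones (so both marginals are preserved), and for every $p$ the mass of the trade rectangle $[0,p]\times[p,1]$ is unchanged, so all three observables coincide; yet the optimum moves from $\frac38$ (value $\frac13$) to $\frac58$, and every price loses at least $\frac13-\frac14=\frac1{12}$ in one scenario. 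So your desiderata are satisfiable, but a complete proof must pin down such a configuration and verify the three identities --- the symmetry does not do this for an arbitrary support, only for one chosen so that the reflected support has the same coordinate projections.
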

\begin{proof}[Proof sketch]

\begin{figure}
    \centering
    \subfigure[\label{f:linear-lower-bound-lip-a}Supports of distributions $f$ (blue) and $g$ (red)]
    {
    \begin{tikzpicture}
    \def\k{0.5}
    \definecolor{myblue}{RGB}{25,175,255}
    \definecolor{myred}{RGB}{255,70,75}
    \def\colorOne{myblue}
    \def\colorTwo{myred}
    \fill[\colorOne] ({0*\k},{3*\k}) rectangle ({1*\k},{4*\k});
    \fill[\colorOne] ({4*\k},{5*\k}) rectangle ({5*\k},{6*\k});
    \fill[\colorOne] ({2*\k},{7*\k}) rectangle ({3*\k},{8*\k});
    \fill[\colorTwo] ({2*\k},{3*\k}) rectangle ({3*\k},{4*\k});
    \fill[\colorTwo] ({4*\k},{7*\k}) rectangle ({5*\k},{8*\k});
    \fill[\colorTwo] ({0*\k},{5*\k}) rectangle ({1*\k},{6*\k});
    \draw ({0*\k}, {8*\k}) -- ({8*\k}, {8*\k}) -- ({8*\k}, {0*\k});
    \draw ({0*\k}, {0*\k}) -- ({8*\k}, {8*\k});
    \draw ({\k*0}, {3*\k}) -- ({\k*3}, {3*\k}) -- ({\k*3}, {8*\k});
    \draw[gray, dashed] ({\k*3}, {0*\k}) -- ({\k*3}, {3*\k});
    \draw ({0*\k}, {\k*5}) -- ({\k*5}, {5*\k}) -- ({\k*5}, {8*\k});
    \draw[gray, dashed] ({5*\k}, {\k*0}) -- ({\k*5}, {5*\k});
    \draw ({0*\k}, {\k*4}) -- ({\k*4}, {4*\k}) -- ({\k*4}, {8*\k});
    \draw[gray, dashed] ({4*\k}, {\k*0}) -- ({\k*4}, {4*\k});
    \draw[->] ({-0.5*\k}, {0*\k}) -- ({8.5*\k}, {0*\k}) node[below right] {$s$};
    \draw[->] ({0*\k}, -{0.5*\k}) -- ({0*\k}, {8.5*\k}) node[above left] {$b$};
    \draw (0,0) node[below left] {$0$}
        ({\k*3, 0}) node[below] {$\nicefrac{3}{8}$}
        ({\k*4, 0}) node[below] {$\nicefrac{1}{2}$}
        ({\k*5, 0}) node[below] {$\nicefrac{5}{8}$}
        ({\k*8, 0}) node[below] {$1$}
        ({0, \k*3}) node[left] {$\nicefrac{3}{8}$}
        ({0, \k*4}) node[left] {$\nicefrac{4}{8}$}
        ({0, \k*5}) node[left] {$\nicefrac{5}{8}$}
        ({0, \k*8}) node[left] {$1$}
        ;
    \draw ({\k*3}, {\k*3}) node[circle, draw, fill, inner sep=0pt, minimum width=1pt] {};
        \draw ({\k*(3+0.4)}, {\k*(3-0.3)}) node[\colorOne] {$_{\ps}$};
    \draw ({\k*5}, {\k*5}) node[circle, draw, fill, inner sep=0pt, minimum width=1pt] {};
        \draw ({\k*(5+0.4)}, {\k*(5-0.3)}) node[\colorTwo] {$_{\qs}$};
    \end{tikzpicture}
    }
    \quad
    \subfigure[\label{f:linear-lower-bound-lip-b}Expected $\gft$ relative to $f$ (blue) and $g$ (red)]
    {
    \begin{tikzpicture}[
    declare function={
        func(\x)
    = 
        (\x < 0) * (0)
        + and(\x >= 0, \x < 1/8) * ( (1/6)*\x*(7-8*\x) )
        + and(\x >= 1/8, \x < 2/8) * ( 1/8 )
        + and(\x >= 2/8, \x < 3/8) * ( (1/6)*(-8*\x*\x + 15*\x - 5/2) )
        + and(\x >= 3/8, \x < 4/8) * ( (1/24)*(-32*\x*\x + 4*\x + 11) )
        + and(\x >= 4/8, \x < 5/8) * ( (1/24)*(-32*\x*\x + 44*\x - 9) )
        + and(\x >= 5/8, \x < 6/8) * ( (1/6)*(-8*\x*\x + 9*\x - 1) )
        + and(\x >= 6/8, \x < 7/8) * ( 5/24 )
        + and(\x >= 7/8, \x < 1) * ( (1/6)*(-8*\x*\x + 5*\x +3) )
        + (\x >= 1) * (0)
      ;
      }
    ]
    \def\k{0.5}
    \def\stretchY{3}
    \definecolor{myblue}{RGB}{25,175,255}
    \definecolor{myred}{RGB}{255,70,75}
    \def\colorOne{myblue}
    \def\colorTwo{myred}
    \draw[gray, dashed] ({3*\k}, {\k*0}) -- ({\k*3}, {\stretchY*\k*8*(1/3)});
    \draw[gray, dashed] ({5*\k}, {\k*0}) -- ({\k*5}, {\stretchY*\k*8*(1/3)});
    \draw[gray, dashed] ({\k*0}, {\stretchY*\k*8*(1/3)}) -- ({\k*5}, {\stretchY*\k*8*(1/3)});
    \draw[gray, dashed] ({\k*0}, {\stretchY*\k*8*(1/4)}) -- ({\k*5}, {\stretchY*\k*8*(1/4)});
    \draw[gray, thin] ({\k*4}, 0) -- ({\k*4}, {\k*8.5});
    \draw[domain = 0:{\k*8}, myred, samples = 1024] plot (\x, {\stretchY*\k*8*func(1-\x/(\k*8))});
    \draw[domain = 0:{\k*8}, myblue, samples = 1024] plot (\x, {\stretchY*\k*8*func(\x/(\k*8))});
    \draw[->] ({-0.5*\k}, {0*\k}) -- ({8.5*\k}, {0*\k}) node[below right] {$p$};
    \draw[->] ({0*\k}, -{0.5*\k}) -- ({0*\k}, {8.5*\k});
    \draw (0,0) node[below left] {$0$}
        ({\k*3, 0}) node[below] {$\nicefrac{3}{8}$}
        ({\k*4, 0}) node[below] {$\nicefrac{1}{2}$}
        ({\k*5, 0}) node[below] {$\nicefrac{5}{8}$}
        ({\k*8, 0}) node[below] {$1$}
        ({0, 8*\k*\stretchY/4}) node[left] {$\nicefrac{1}{4}$}
        ({0, 8*\k*\stretchY/3}) node[left] {$\nicefrac{1}{3}$}
        ;
\end{tikzpicture} 
}
    \caption{Under realistic feedback, the two densities $f$ and $g$ are indistinguishable. The optimal price $p^\star$ for $f$ gives constant regret under $g$ and $q^\star$ does the converse.}
    \label{f:linear-lower-bound-lip}
\end{figure}
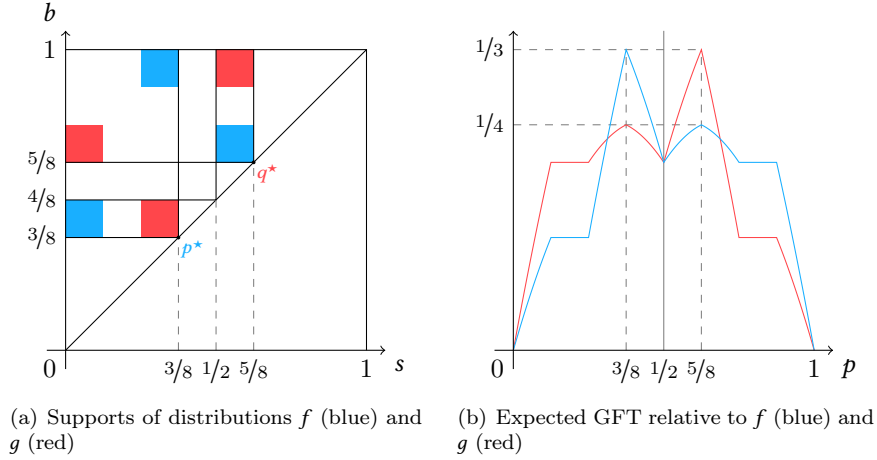

Consider the two joint densities $f$ and $g$ of the seller and buyer $(S,B)$ as the normalized indicator functions of the red and blue squares in \cref{f:linear-lower-bound-lip-a}. Formally 
\[
    f
= 
    \frac{64}{3}  \Brb{ \I_{\lsb{ \nicefrac{0}{8}, \,\nicefrac{1}{8} } \times \lsb{ \nicefrac{3}{8}, \,\nicefrac{4}{8} } } + \I_{\lsb{ \nicefrac{2}{8}, \,\nicefrac{3}{8} } \times \lsb{ \nicefrac{7}{8}, \,\nicefrac{8}{8} }} + \I_{\lsb{ \nicefrac{4}{8}, \,\nicefrac{5}{8} } \times \lsb{ \nicefrac{5}{8}, \, \nicefrac{6}{8} }} }
\]
and $g(s,b)=f(1-b,1-s)$.
In the $f$ (resp., $g$) case, the optimal price belongs to the region $[0,\nicefrac{1}{2}]$ (resp., $(\nicefrac{1}{2}, 1]$, see \cref{f:linear-lower-bound-lip-b}).
By posting prices in the wrong region $(\nicefrac{1}{2}, 1]$ (resp., $[0,\nicefrac{1}{2}]$) in the $f$ (resp., $g$) case, the learner incurs at least a $\nicefrac{1}{3}-\nicefrac{1}{4} = \nicefrac{1}{12}$ regret.
Thus, the only way to avoid suffering linear regret is to determine if the valuations of the seller and buyer are generated by $f$ or $g$.
For each price $p \in [0,1]$, consider the four rectangles with opposite vertices $(p,p)$ and $(u_i,v_i)$, where $\lcb{ (u_i,v_i) }_{i=1,\ldots,4}$ are the four vertices of the unit square.
Note that the only information on the distribution of $(S,B)$ that the learner can gather from the realistic feedback $\brb{ \I\{S_t \le p\}, \, \I \{p\le B_t\} }$ received after posting a price $p$ is (an estimate of) the area of the portion of the support of the distribution included in each of these four rectangles.
However, these areas coincide in the cases $f$ and $g$.
Hence, in under realistic feedback, $f$ and $g$ are completely indistinguishable.
Therefore, given that the optimal price in the $f$ (resp., $g$) case is $\nicefrac{3}{8}$ (resp., \nicefrac{5}{8}), the best that the learner can do is to sample prices uniformly at random in the set $\{\nicefrac{3}{8},\nicefrac{5}{8}\}$, incurring a regret of $\nicefrac{T}{24}$. 
For a formalization of this argument that leverages the techniques we described in the introduction, see \cref{s:lower-bd-appe}.
\end{proof}

\subsection{Linear Lower Bound Under Realistic Feedback (iv) \tccheck}
\label{sec:linear_real}

In this section, we prove that in the realistic-feedback case, no strategy can achieve sublinear regret without any limitations on how concentrated the distributions of the valuations of the seller and buyer are, not even if they are independent of each other (iv).

At a high level, if the two distributions are of the seller and the buyer are very concentrated in a small region, finding an optimal price is like finding a needle in a haystack.
For a full proof of the following theorem, see \cref{sec:linear_real-appe}. 
\begin{theorem}
\label{thm:lower-real-iv}
In the realistic-feedback model, for all horizons $T$, the minimax regret $\Rs_T$ satisfies 
\[
    \Rs_T 
:=
    \inf_{\alpha} \sup_{(S,B) \sim \cD} R_T(\alpha)
\ge
    c T \;,
\]
where $c \ge 1/8$, the infimum is over all of the learner's strategies $\alpha$, and the supremum is over all distributions $\cD$ of the seller $S$ and buyer $B$ such that:
\begin{itemize}
    \item[\emph{(iid)}] $(S_1,B_1),(S_2,B_2),\ldots \sim (S,B)$ is an i.i.d.\ sequence;
    \item[\emph{(iv)}] $S$ and $B$ are independent of each other.
\end{itemize}
\end{theorem}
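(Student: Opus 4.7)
The plan is to construct a family of distributions $\cD_\theta$ indexed by a hidden real parameter $\theta$ such that the unique optimal price is $\theta$ itself, every $p \neq \theta$ incurs a constant per-round regret, and the realistic feedback in each round carries only two bits of information. Since two-bit feedback can resolve $\theta$ only up to a finite partition of $[0,1]$, a uniformly random $\theta$ forces the learner to miss $p = \theta$ almost surely, which produces an $\Omega(T)$ lower bound.

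Concretely, for $\theta \in [\nicefrac{1}{4}, \nicefrac{3}{4}]$ I would take $S_\theta$ uniform on $\{0, \theta\}$ and independently $B_\theta$ uniform on $\{\theta, 1\}$: these are two-atom distributions with no bounded density but satisfying (iv). A short case analysis over the four equiprobable realisations of $(S_\theta, B_\theta)$ gives
\[
    \E\bsb{\gft(p, S_\theta, B_\theta)} = \tfrac{1}{4} + \tfrac{\theta}{4}\,\I\{p \le \theta\} + \tfrac{1-\theta}{4}\,\I\{p \ge \theta\},
\]
which is uniquely maximised at $p^\star = \theta$ with value $\nicefrac{1}{2}$, and whose optimality gap is $\min(\theta/4, (1-\theta)/4) \ge \nicefrac{1}{16}$ on $[\nicefrac{1}{4}, \nicefrac{3}{4}]$. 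So posting any $p \neq \theta$ incurs at least $\nicefrac{1}{16}$ expected regret per round.

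The heart of the argument is showing that no strategy $\alpha$ can achieve $P_t = \theta$ with positive probability when $\theta$ is drawn from a non-atomic prior. I would condition on $\alpha$'s internal randomness and on auxiliary fair coins $(U_\tau, V_\tau)_\tau$ that pick which atom $S_\tau$ and $B_\tau$ take. Conditional on this, every observation $Z_\tau = (\I\{S_\tau \le P_\tau\}, \I\{P_\tau \le B_\tau\})$ depends on $\theta$ only through indicators of the form $\I\{\theta \le P_\tau\}$ and $\I\{P_\tau \le \theta\}$, so the partial history $(Z_1,\dots,Z_{t-1})$ takes only finitely many values as $\theta$ varies over $[\nicefrac{1}{4}, \nicefrac{3}{4}]$. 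By induction on $t$, $P_t$ is a piecewise-constant function of $\theta$ with at most $4^{t-1}$ pieces; on each piece $\{P_t \equiv c\}$, the equation $P_t(\theta) = \theta$ has at most one solution, so $\{\theta : P_t(\theta) = \theta\}$ is finite and hence Lebesgue-null. Unconditioning, $\P_\theta[P_t = \theta] = 0$ for every $t$.

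Averaging regret over $\theta \sim \mathrm{Unif}([\nicefrac{1}{4}, \nicefrac{3}{4}])$ then yields
\[
    \E_\theta\bsb{R_T(\alpha, \cD_\theta)} \ge \sum_{t=1}^T \tfrac{1}{16}\,\P_\theta[P_t \neq \theta] = \tfrac{T}{16},
\]
and since the worst-case regret dominates the Bayesian regret, $\Rs_T \ge T/16$. The sharper constant $c \ge \nicefrac{1}{8}$ would follow by localising $\theta$ near $\nicefrac{1}{2}$ or by a mild modification of the atoms so that the gap $\min(\theta/4, (1-\theta)/4)$ is uniformly close to $\nicefrac{1}{8}$. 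The main technical hurdle is the randomised-strategy version of the ``$P_t \neq \theta$ almost surely'' argument; the trick of conditioning on all external randomness so that $P_t(\cdot)$ becomes a deterministic, finitely-piecewise-constant step function of $\theta$ is what makes the measure-zero conclusion clean.
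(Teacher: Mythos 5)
Your proposal is correct and uses essentially the same construction as the paper: the identical two-atom distributions $S_\theta\sim\mathrm{Unif}\{0,\theta\}$, $B_\theta\sim\mathrm{Unif}\{\theta,1\}$, the same expected gain-from-trade computation, and the same "needle in a haystack" insight that two-bit feedback lets the strategy single out only a small set of prices. The only cosmetic difference is in formalizing the measure-zero step: you average over a non-atomic prior on $\theta$ and argue that $P_t(\cdot)$ is piecewise constant with finitely many pieces, whereas the paper directly picks a single $x^\star$ outside the countable union of atoms of the conditional price distributions over all feedback histories; both yield $\P[P_t=\theta]=0$ and the same $T/8$ constant after localizing $\theta$ near $\nicefrac{1}{2}$.
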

\begin{proof}[Proof sketch]

\begin{figure}
    \centering
    \subfigure[\label{fig:linear_lower_real_iv-a}Distribution of $S^x$ (red) and $B^x$ (green)]{
    \begin{tikzpicture}
    \def\k{0.45}
    \def\x{7}
    \def\stretchY{3}
    \definecolor{myblue}{RGB}{25,175,255}
    \definecolor{myred}{RGB}{255,70,75}
    \def\colorOne{myblue}
    \def\colorTwo{myred}
    \definecolor{mygreen}{RGB}{64,128,128}
    \draw[->] ({0*\k}, {-0.5*\k}) -- ({0*\k}, {8*\k});
    \draw[->, myred, thick] (0,0) -- (0, {2*\k*\stretchY});
    \draw[->, myred, thick] ({\x*(0.995)*\k}, 0) -- ({\x*(0.995)*\k}, {2*\k*\stretchY});
    \draw[->, mygreen, thick] ({\x*(1.005)*\k}, 0) -- ({\x*(1.005)*\k}, {2*\k*\stretchY});
    \draw[->, mygreen, thick] ({10*\k}, 0) -- ({10*\k}, {2*\k*\stretchY});
    \draw (0,0) node[below left] {$0$}
        ({\x*\k}, 0) node [below] {$x$}
        ({10*\k}, 0) node [below] {$1$}
        ({10.3*\k}, 0) node [below right] {$p$}
    ;
    \draw[->] ({-0.5*\k}, {0*\k}) -- ({10.4*\k}, {0*\k});
    \end{tikzpicture}}
    \subfigure[\label{fig:linear_lower_real_iv-b}Expected gain from trade relative to $S^x$ and $B^x$]
    {
    \begin{tikzpicture}
    \def\k{0.45}
    \def\x{7}
    \def\stretchY{3.5}
    \definecolor{myblue}{RGB}{25,175,255}
    \definecolor{myred}{RGB}{255,70,75}
    \def\colorOne{myblue}
    \def\colorTwo{myred}
    \definecolor{mygreen}{RGB}{64,128,128}
    \draw[gray, dashed] ({\x*\k},0) -- ({\x*\k},{2*\k*\stretchY});
    \draw[gray, dashed] ({10*\k},0) -- ({10*\k},{\x+10)/10*\k*\stretchY});
    \draw[gray, dashed] (0,{(\x+10)/10*\k*\stretchY}) -- ({\x*\k},{(\x+10)/10*\k*\stretchY});
    \draw[gray, dashed] (0,{2*\k*\stretchY}) -- ({\x*\k},{2*\k*\stretchY});
    \draw (0,{(20-\x)/10*\k*\stretchY}) -- ({\x*\k},{(20-\x)/10*\k*\stretchY});
    \draw ({\x*\k},{(\x+10)/10*\k*\stretchY}) -- ({10*\k},{(\x+10)/10*\k*\stretchY});
     \draw[->] (0, {-0.5*\k}) -- (0,{8*\k});
     \draw[->] ({-0.5*\k}, 0) -- ({10.3*\k},0);
     \draw ({\x*\k},{2*\k*\stretchY}) node[circle, draw, fill, inner sep = 0pt, minimum width = 1pt] {};
    \draw (0,0) node[below left] {$0$}
        ({\x*\k},0) node[below] {$x$}
        ({10*\k},0) node[below] {$1$}
        (0,{(\x+10)/10*\k*\stretchY}) node[left] {$(1+x)/4$} 
        (0,{(20-\x)/10*\k*\stretchY}) node[left] {$(2-x)/4$} 
        (0,{2*\k*\stretchY}) node[left] {$1/2$} 
        ({10.3*\k},0) node[below right] {$p$}
    ;
    \end{tikzpicture}
    }
    \caption{All prices, except for $x$, have high regret. However, under realistic feedback, finding $x$ in a finite time is as harder than finding a needle in a haystack.}
    \label{fig:linear_lower_real_iv}
\end{figure}
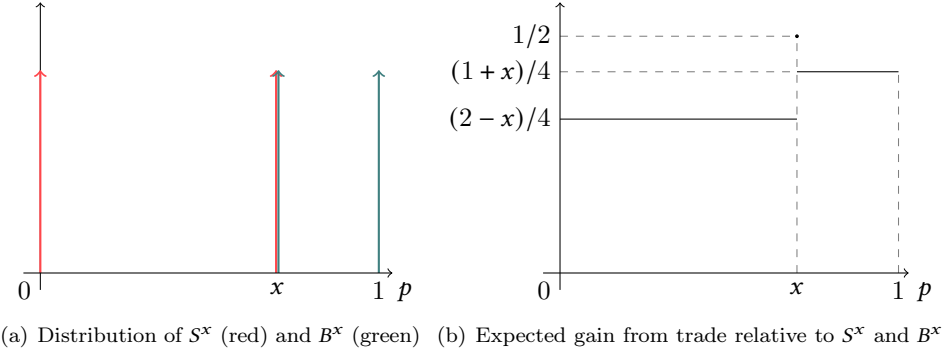

Consider a family of seller/buyer distributions $(S^x,B^x)$, parameterized by $x\in I$, where $I$ is a small interval centered in $\nicefrac 12$, $S^x$ and $B^x$ are independent of each other, and they satisfy
\[
S^x=\begin{cases}
    x &\text{with probability $\frac 12$}\\
    0 &\text{with probability $\frac 12$} 
\end{cases}\;,
\qquad
B^x=\begin{cases}
    x &\text{with probability $\frac 12$}\\
    1 &\text{with probability $\frac 12$} 
\end{cases}\;.
\]
The distributions and the corresponding gain from trade are represented in \cref{fig:linear_lower_real_iv-a} and \cref{fig:linear_lower_real_iv-b}, respectively. 
A direct verification shows that the best fixed price with respect to $(S^x,B^x)$ is $p=x$. 
Furthermore, by posting any other prices, the learner incurs a regret of approximately $1/2$ with probability $1/4$.
Since it is intuitively clear that no strategy can locate (exactly!) each possible $x\in I$ in a finite number of steps. 
This results, for any strategy, in regret of at least (approximately) $T/8$.
For the interested reader, a more detailed analysis is carried over in the \cref{sec:linear_real-appe}.
\end{proof}

\section{Adversarial Setting: Linear Lower Bound Under Full Feedback \tccheck}
\label{sec:adversarial}

In this section, we prove that even in the simpler full-feedback case, no strategy can achieve worst-case sublinear regret in an adversarial setting. 
Lower bounds for the adversarial setting have a slightly different structure that the stochastic ones. The idea of the proof is to build, for any strategy, an {\em hard} sequence of sellers and buyers' valuations $(s_1,b_1),(s_2,b_2),\ldots$ which causes the algorithm to suffer linear regret for any horizon $T$.
\begin{theorem}
\label{thm:adv-lower}
In the full-feedback adversarial (adv) setting, for all horizons $T \in \N$, the minimax regret $\Rs_T$ satisfies
\[
    \Rs_T 
:=
    \inf_{\alpha} \sup_{(s_1,b_1),(s_2,b_2),\ldots} R_T(\alpha)
\ge 
    c T \;,
\]
where $c \ge 1/4$, the infimum is over all of the learner's strategies $\alpha$, and the supremum is over all deterministic sequences $(s_1,b_1),(s_2,b_2),\ldots \in [0,1]^2$ of the seller and buyer's valuations.
\end{theorem}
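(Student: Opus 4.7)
The plan is to construct, for every learner strategy $\alpha$, a single infinite deterministic sequence $\sigma = \sigma(\alpha) = (s_t,b_t)_{t\ge 1}$ of seller/buyer valuations such that $R_T(\alpha) \ge T/4$ for \emph{every} horizon $T$, using a recursive construction inspired by the Cantor ternary set. The self-similarity of the construction is what allows a single sequence to produce linear regret at every scale $T$, rather than needing a $T$-dependent sequence.

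First, I would exploit the fact that the adversary is oblivious but may tailor $\sigma$ to $\alpha$: at each round $t$, the learner's induced distribution $\mu_t$ over $[0,1]$ is entirely determined by $\alpha$ and the already-committed past $(s_i,b_i)_{i<t}$, so I can select $(s_t,b_t)$ as a deterministic function of $\mu_t$. I maintain a nested family of candidate intervals $J_0 \supseteq J_1 \supseteq J_2 \supseteq \cdots$ in $[0,1]$, defined by a Cantor-style recursion: at each level, $J_{n+1}$ is obtained from $J_n$ by removing the middle third and keeping as new candidate the left or right third carrying \emph{smaller} $\mu$-mass (with the other third and the middle third ``activated'' as valuations across several rounds). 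Multiple rounds are spent at each Cantor level before advancing, and at each such round $(s_t,b_t)$ are the endpoints of an interval designed so that (i) a specific target price $p^\star \in \bigcap_n J_n \subseteq C$ lies inside it and (ii) the learner's trading probability $\mu_t([s_t,b_t])$ is at most $\tfrac{1}{2}$.

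Next, I would prove two estimates. For the benchmark, a careful schedule of rounds-per-level together with the interval lengths used at each level yields $\sum_{t\le T}(b_t-s_t)\ge T/2$ for every $T$; since $p^\star$ trades at every round by construction, this gives $\sum_{t\le T}\gft_t(p^\star)\ge T/2$. For the learner, the per-round bound $\mathbb{E}[\gft_t(P_t)]=\mu_t([s_t,b_t])\cdot(b_t-s_t)\le \tfrac{1}{2}(b_t-s_t)$ combined with the same length sum yields $\sum_{t\le T}\mathbb{E}[\gft_t(P_t)]\le T/4$. Subtracting gives $R_T(\alpha)\ge T/2-T/4=T/4$, as required.

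The hard part will be making the two requirements (i) and (ii) simultaneously achievable against \emph{every} $\alpha$, including pathological ones that concentrate mass at a single point. A naive two-interval construction on $\{[0,1/2],[1/2,1]\}$ fails because a learner playing the deterministic price $p=1/2$ lies in both intervals and matches the benchmark exactly. The Cantor-inspired recursion is introduced precisely to defeat such concentrated strategies: by recursively shifting the trading interval around $p^\star$ while removing a middle third, the adversary can, at each step, move the interval away from any single concentration point of $\mu_t$ without ever losing $p^\star$. Formalizing this ``exclusion plus containment'' property—showing that for every distribution $\mu_t$ the min-mass Cantor third has mass at most $\tfrac{1}{2}$ while still containing $p^\star$—and carefully choosing the rounds-per-level schedule to preserve the length sum $\sum_t(b_t-s_t)\ge T/2$ for all $T$ is the technical core of the proof.
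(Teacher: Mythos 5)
Your overall plan coincides with the paper's: for each strategy $\alpha$, adaptively build a single deterministic sequence via a Cantor-ternary nest of pivot intervals, exhibit a common price $p^\star$ in the intersection that trades at every round, and force the learner's per-round trading probability to be at most $\nicefrac{1}{2}$. The gap is in your benchmark estimate. If the traded intervals $[s_t,b_t]$ are tied to the Cantor levels --- as your phrases ``the interval lengths used at each level'' and ``rounds-per-level schedule'' indicate --- their lengths shrink geometrically, and no schedule of rounds per level can deliver $\sum_{t\le T}(b_t-s_t)\ge T/2$ for all $T$: that bound forces the average length to be at least $\nicefrac{1}{2}$, while all but finitely many of your intervals would have length far below $\nicefrac{1}{2}$. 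The two requirements you do impose ($p^\star\in[s_t,b_t]$ and $\mu_t([s_t,b_t])\le\nicefrac{1}{2}$) are compatible with arbitrarily short traded intervals and so cannot by themselves yield linear regret; the missing third requirement, $b_t-s_t\ge\nicefrac{1}{2}-O(\e)$ at \emph{every} round, is exactly what your sketch leaves unsecured (and indeed you flag it as the undone ``technical core'').

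The resolution, which is the one structural idea your write-up does not reach, is that the shrinking nest $[c_t,d_t]$ (of width $\e\, 3^{-(t-1)}$, centred near $\nicefrac{1}{2}$) serves only as the \emph{pivot}; the traded pair is always a long half-interval, either $(s_t,b_t)=(0,d_t)$ or $(s_t,b_t)=(c_t,1)$, chosen according to whether the learner's current price distribution puts mass at most $\nicefrac{1}{2}$ on $\bsb{0,\,c_{t-1}+\e\,3^{-(t-1)}}$ (a test on the mass of a half-interval, rather than your ``lighter third'' rule, though both are in the same spirit). Every such traded interval has length at least $\frac{1-3\e}{2}$ and contains $p^\star=\lim_t c_t$, so one round per level suffices, the benchmark collects at least $\frac{1-3\e}{2}\,T$, the learner at most half of each round's gain, and letting $\e\to 0$ gives $T/4$. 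If you replace your level-indexed trading intervals by these half-intervals, the rest of your accounting goes through essentially verbatim.
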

\begin{proof}
Let $\e \in (0,\frac 1{18})$.
We begin by fixing any strategy $\alpha$ of the learner.
This is a sequence of functions $\alpha_t$ mapping the past feedback $(s_1,b_1), \ldots, (s_{t-1},b_{t-1})$, together with  some internal randomization, to the price $P_t$ to be posted by the learner at time $t$.
In other words, the strategy maintains a distribution $\nu_t$ over the prices that is updated after observing each new pair $(s_t,b_t)$ and used to draw each new price $P_t$.
We will show how to constructively determine a sequence of seller/buyer valuations that is hard for $\alpha$ to learn.
This sequence is oblivious to the prices $P_1,P_2,\ldots$ posted by $\alpha$, in the sense it does not have access to the realizations of its internal randomization.
The idea is, at any time $t$, to determine a seller/buyer pair $(s_t,b_t)$ either of the form $(c_t,1)$ or $(0,d_t)$, with $c_t \approx \frac{1}{2} \approx d_t$, such that the probability $\nu_t$ that the strategy picks a price $P_t \in [s_t,b_t]$ (i.e., that there is a trade) is at most $\nicefrac{1}{2}$ and, at the same time, there is common price $\ps$ which belongs to $[s_t,b_t]$ for all times $t$.
This way, since $b_t-s_t \approx \frac{1}{2}$ for all $t$, the regret of $\alpha$ with respect to $(s_1,b_1), (s_2,b_2), \ldots$ is at least (approximately) greater than or equal to $\nicefrac{T}{4}$.

The formal construction proceeds inductively as follows. Let
\[
    \begin{cases}
        c_1:=\frac{1}{2}-\frac{3}{2}\varepsilon, \  d_1:=\frac{1}{2}-\frac{1}{2}\varepsilon, \  s_1 := 0,  \ b_1:=d_1,
    &
        \text{ if } \nu_{1}\bsb{\bsb{0,\frac{1}{2}-\frac{1}{2}\varepsilon}}\le\frac{1}{2} \;,
    \\
        c_1:=\frac{1}{2}+\frac{1}{2}\varepsilon, \  d_1:=\frac{1}{2}+\frac{3}{2}\varepsilon, \  s_1 := c_1, \  b_1:=1,
    &
        \text{ otherwise}.
    \end{cases}
\]
Then, for any time $t$, given that $c_i, d_i, s_i, b_i$ are defined for all $i\le t$ and recalling that $\nu_{t+1}$ is the distribution over the prices at time $t+1$ (of the strategy $\alpha$ after observing the feedback $(s_1,b_1), \ldots, (s_t,b_t)$), let
\[
    \begin{cases}
        c_{t+1}:=c_{t}, \ d_{t+1}:=d_{t}-\frac{2\varepsilon}{3^{t}}, \ s_{t+1}:=0, \ b_{t+1}:=d_{t+1},
    &
        \text{if } \nu_{t+1}\bsb{ \bsb{0,c_t+\frac{\varepsilon}{3^{t}}} }\le\frac{1}{2} \;,
    \\
        c_{t+1}:=c_{t}+\frac{2\varepsilon}{3^{t}}, \ d_{t+1}:=d_{t}, \ s_{t+1} := c_{t+1}, \ b_{t+1}:=1,
    &
        \text{otherwise}.
    \end{cases}
\]
Then the sequence of seller/buyer valuations $(s_1,b_1),(s_2,b_2),\ldots$ defined above by induction satisfies:
\begin{itemize}
    \item $\nu_t\bsb{ [s_t,b_t] } \le \frac{1}{2}$, for each time $t$;
    \item there exists $\ps \in [0,1]$ such that $\ps\in[s_t,b_t]$, for each time $t$ (e.g. $\ps:=\lim_{t \to \infty} c_t$);
    \item $b_t - s_t \ge \frac{1-3\e}{2}$, for each time $t$.
\end{itemize}
This implies, for any horizon $T$,
\[
    R_T(\alpha)
=
    \sum_{t=1}^T \gft(\ps,s_t,b_t) -  \sum_{t=1}^T \E \lsb{ \gft ( P_t, s_t, b_t ) }
\ge
    \sum_{t=1}^T (b_t-s_t)\brb{1-\nu_t \bsb{ [s_t,b_t] }}
\ge 
    \frac{1-3\e}{4}T\;.
\]
Since $\e$ and $\alpha$ are arbitrary, 
this yields immediately $\Rs \ge {T}/{4}$.
\end{proof}

\section{Conclusions \tccheck}
This work initiates the study of the bilateral trade problem in a regret minimization framework.  
We prove tight bounds on the regret rates that can be achieved under various feedback and private valuation models.  

Our work opens several possibilities of future investigation. 
One first and natural research direction is related to the more general settings of two-sided markets with multiple buyers and sellers, different prior distributions, and complex valuation functions.  

A second direction is related to the tight characterization of the regret rates for weak budget balance mechanisms. These can be proved to be strictly better than strong budget balance mechanisms, at least for the realistic feedback setting with correlated distributions (details will appear in the full version of this work). 

Finally, we believe other classes of markets, which assume prior knowledge of the agent's preferences, could be fruitfully studied in a regret minimization framework.

\bibliographystyle{ACM-Reference-Format}
\bibliography{references}

\appendix

\clearpage

\section{Model and Notation \tccheck}

For all $T\in \N$, we denote the set of the first $T$ integers $\{1,\ldots,T\}$ by $[T]$.
If $\P$ is a probability measure and $X$ is a random variable, we denote by $\P_X$ the probability measure defined for any (measurable) set $E$, by $\P_X[E] := \P[ X \in E]$.
We denote the expectation of a random variable $X$ with respect to probability measure $\P$ by $\E_\P[X]$.
If a measure $\nu$ is absolutely continuous with respect another measure $\mu$ with density $f$, we denote $\nu$ by $f \mu$, so that for any (measurable) set $E$, $(f \mu) [E] := \nu[E] = \int_E f(x) \dif \mu(x)$. 
We denote the \leb{} measure on the interval $[0,1]$ by $\mu_L$ and the product \leb{} measure on $[0,1]^\N$ by $\bmu_L$.
For any set $E$ and $x\in E$, we denote the Dirac measure on $x$ by $\delta_x$ (the dependence on $E$ will always be clear from context).

\subsection{The Learning Model \tccheck} 
In this section, we introduce an abstract notion of sequential games which encompasses all the settings we discussed in the main part of the paper, providing a unified perspective.
This will be especially useful when proving lower bounds.
\begin{definition}[Sequential game]
\label{d:game-appe}
A \emph{(sequential) game} is a tuple $\sG := (\cX,\cY,\cZ,\rho,\fhi,\sP)$, where:
\begin{itemize}
    \item $\cX,\cY,\cZ$ are sets called the \emph{player's action space}, \emph{adversary's action space}, and \emph{feedback space};
    \item $\rho \colon \cX \times \cY \to [0,1]$ and $\fhi \colon \cX \times \cY \to \cZ$ are called the \emph{reward} and \emph{feedback} functions\footnote{More precisely, we need $\cX,\cY,\cZ$ to be non-empty measurable spaces and $\rho,\fhi$ to be measurable functions. To avoid clutter, in the following we will never mention explicitly these types of standard measurability assumptions unless strictly needed.};
    \item $\sP$ is a set of probabilities on the set $\cY^\N$ of sequences in $\cY$, called the \emph{adversary's behavior}.
\end{itemize}
\end{definition}
This definition generalizes the partial monitoring games of \citep{lattimore2020bandit,bartok2014partial} to settings with infinitely many arms and is able to model adversarial, i.i.d., and more general stochastic settings all at once.
Before proceeding, we introduce another few extra handy definitions that will be used throughout the paper.
\begin{definition}
\label{d:extra-stuff-appe}
If $\sG = (\cX,\cY,\cZ,\rho,\fhi,\sP)$ is a game, then we say the following.
\emph{The sample space} is the set $\Omega := \cY^{\N} \times [0,1]^{\N}$.
\emph{The adversary's actions} $\brb{ Y_t }_{t\in \N}$ and \emph{the player's randomization} $\brb{ U_t }_{t\in\N}$ are sequences of random variables defined, for all $t \in \N$ and $\omega = \brb{ (y_n)_{n\in\N}, (u_n)_{n\in \N} } \in \Omega$, by $Y_t(\omega):=y_t$ and $U_t(\omega) := u_t$.
\emph{The set of scenarios} $\sS$ is the set of probability measures $\P$ on $\Omega$ of the form $\P = \boldsymbol \mu \otimes \boldsymbol{\mu}_L$, where $\boldsymbol \mu \in \sP$.
\end{definition}
For the sake of conciseness, whenever we fix a game $\sG$, we will assume that all the objects (sets, functions, random variables) presented in Definitions~\ref{d:game-appe}--\ref{d:extra-stuff-appe} are fixed and denoted by the same letters without declaring them explicitly each time, unless strictly needed.

Note that this setting models an \emph{oblivious} adversary since its actions are independent of the player's past randomization, i.e., for all $t\in \N$, $\P_{Y_{t+1} \mid Y_1, \ldots, Y_t, U_1, \ldots, U_t } = \P_{Y_{t+1} \mid Y_1, \ldots, Y_t}$.
Note also that we are assuming that the randomization of the player's strategy is carried out by drawing numbers in the interval $[0,1]$ independently and uniformly at random.
We can restrict ourselves to this case in light of the Skorokhod Representation Theorem \cite[Section 17.3]{williams1991probability} without losing (much) generality.
We now introduce formally the strategies of the player, the resulting played actions, and the corresponding feedback.

\begin{definition}[Player's strategies, actions, and feedback]
Given a game $\sG$, we define a \emph{player's strategy} as a sequence of functions $\alpha = (\alpha_t)_{t\in \N}$ such that, for each $t \in \N$, $\alpha_{t} \colon [0,1]^t\times\cZ^{t-1} \to \cX$.\footnote{When $t=1$, $[0,1]^t \times \cZ^{t-1} := [0,1]$. In the following, we will always adopt this type of convention without mention it.}
Given a player's strategy $\alpha$, we define inductively (on $t$) the corresponding sequences of \emph{player's actions} $(X_t)_{t\in \N}$ and \emph{player's feedback} $(Z_t)_{t\in \N}$ by $X_t := \alpha_t(U_1, \ldots, U_t, Z_1, \ldots, Z_{t-1})$, $Z_t := \varphi (X_t, Y_t)$.
In the sequel, we will denote the set of all strategies for a game $\sG$ by $\sA(\sG)$.
\end{definition}
To lighten the notation, we will write $\sA$ instead of $\sA(\sG)$ if it is clear from context.
We can now extend the standard notions of regret, worst-case regret, and minimax regret to our general setting.
\begin{definition}[Regret]

Given a game $\sG$ and a horizon $T\in \N$, we define the \emph{regret} (of $\alpha \in \sA$ in scenario $\P \in \sS$), the \emph{worst-case regret} (of $\alpha \in \sA$), and the \emph{minimax regret} (of $\sG$), respectively, by
\[
R^{\P}_T(\alpha) := \sup_{x \in \cX} \E_{\P}\lsb{\sum_{t=1}^T \rho(x,Y_t) - \sum_{t=1}^T \rho(X_t,Y_t)}\;,
\quad
R_T^{\sS}(\alpha) := \sup_{\P \in \sS} R^{\P}_T(\alpha)\;,
\quad
\Rs_T(\sG) := \inf_{\alpha \in \sA(\sG)} R_T(\alpha)\;.
\]
\end{definition}
Informally, if $\sG$ and $\widetilde{\sG}$ are two games and $\Rs_T(\sG) \ge \Rs_T(\tilde \sG)$, we say that $\tilde{\sG}$ is \emph{easier} than $\sG$ (or equivalently, that $\sG$ is \emph{harder} than $\tilde{\sG}$).
When it is clear from the context, we will omit the dependence on $\sG$ in $\Rs_T(\sG)$.

\subsection{Bilateral Trade as a Game \tccheck}
\label{s:biltrad-setting-appe}

We now formally cast the various instances of bilateral trade we introduced in \cref{s:bil-tr-model} into our sequential game setting.
In this context, we think of the learner as the \emph{player} and the environment as the \emph{adversary}.

\subsubsection{Player's Actions, Adversary's Actions, and Reward}

The player's action space $\cX$ is the unit interval $[0,1]$.
This corresponds to the player posting the same price price to both the seller and the buyer (strong budget balance).
The adversary's action space $\cY$ is $[0,1]^2$. 
They are the pairs of valuations of the seller and buyer.
The reward function $\rho$ is the gain from trade $\gft\colon [0,1] \times [0,1]^2 \to [0,1]$, $\brb{ p, (s,b) } \mapsto (b-s) \I\{ s \le p \le b\}$.

\subsubsection{Available Feedback} 

\begin{description}
    \item[Realistic] the feedback space $\cZ$ is the boolean square $\{0,1\}^2$ and the feedback function is $\fhi \colon [0,1] \times [0,1]^2 \to \{0,1\}^2$, $\brb{ p, (s,b) } \mapsto \brb{ \I\{s \le p\}, \I\{ p\le b \} }$.
    This corresponds to the seller and the buyer accepting or rejecting a trade at a price $p$.
    \item[Full] the feedback space $\cZ$ is the unit square $[0,1]^2$ and the feedback function is $\fhi \colon [0,1] \times [0,1]^2 \to [0,1]^2$, $\brb{ p, (s,b) } \mapsto (s,b)$.
    This corresponds to the seller and the buyer revealing their valuations at the end of a trade.
\end{description}

\subsubsection{Adversary's Behavior} 
\begin{description}
    \item[Stochastic (iid):] 
    the adversary's behavior $\sP = \sPiid$ consists of products of a single probability on $\cY = [0,1]^2$, i.e., $\bmu \in \sPiid$ if and only if there exists a probability measure $\mu$ on $[0,1]^2$ such that $\bmu = \otimes_{t\in \N} \, \mu$.
    This corresponds to a stochastic i.i.d.\ environment, where however the valuations of the seller and the buyer could be correlated.
    
    We will also investigate the following stronger assumptions.
    \begin{description}
        \item[Independent valuations (iv)] 
        the adversary's behavior $\sP = \sPiv$ is the subset of $\sPiid$ in which the valuations of the seller and the buyer are independent, i.e., $\bmu \in \sPiv$ if and only if there exist two probability measures $\mu_S,\mu_B$ on $[0,1]$ such that $\bmu = \otimes_{t\in \N} \, (\mu_S \otimes \mu_B)$.
        \item[Bounded density (bd)] 
        for a fixed $M \ge 1$, the adversary's behavior $\sP = \sPbd$ is the subset of $\sPiid$ in which the joint distribution of the valuations of buyer and seller has a density bounded by $M$, i.e., $\bmu \in \sPbd$ if and only if there exists a density $f\colon [0,1]^2 \to [0,M]$ such that  $\bmu = \otimes_{t\in \N} \, ( f \mu ) $, where $\mu = \mu_L \otimes \mu_L$.
        \item[Independent valuations with bounded density (iv+bd)] 
        for a fixed $M \ge 1$, the adversary's behavior $\sP = \sPivbd$ is the subset $\sPiv \cap \sPbd$ of $\sPiid$.
    \end{description}
    \item[Adversarial (adv):] 
    the adversary's behavior $\sP = \sPadv$ consists of products of Dirac measures on $\cY = [0,1]^2$, i.e., $\bmu \in \sPadv$ if and only if there exists a sequence $(s_t,b_t)_{t\in\N} \s [0,1]^2$ such that $\bmu = \otimes_{t\in \N} \, \delta_{(s_t,b_t)} $.
    This corresponds to a deterministic, oblivious, and adversarial environment (\cref{sec:adversarial}).
\end{description}

\section{Two Key Lemmas on Simplifying Sequential Games \tccheck}
\label{s:keylemmas}
In this section we introduce some useful techniques that could be of independent interest for proving lower bounds in sequential games.
The idea is to give sufficient conditions for given game to be harder than another, where the second one has a known lower bound on its minimax regret.

At a high level, the first lemma shows that if the adversary's actions are independent of each other, a game $\tilde{\sG}$ is easier than game $\sG$ if $\tilde{\sG}$ can be embedded in $\sG$ in such a way that the optimal player's actions of $\tilde{\sG}$ are no better than the ones in $\sG$, the suboptimal player's actions of $\tilde{\sG}$ no worse than the ones in $\sG$, and at distributional level, the quality of the feedback does not decrease in the second game. 
The proof is deferred to \cref{sec:lemmas}.

\begin{restatable}[Embedding]{lemma}{lembedding}
\label{l:embedding}
Let $\sG := (\cX,\cY,\cZ,\rho,\fhi,\sP)$ and $\tilde{\sG} := (\tilde{\cX},\tilde{\cY},\tilde{\cZ},\tilde{\rho},\tilde{\fhi},\tilde{\sP})$ be two games, $\sS, \tilde \sS$ their respective sets of scenarios, $(Y_t)_{t\in\N}, (\tilde Y_t)_{t\in\N}$ their adversaries' actions, and $T\in \N$ a horizon.
Assume that
$Y_1,\ldots,Y_T$ are $\P$-independent for any scenario $\P \in \sS$, $\tilde Y_1, \ldots, \tilde Y_T$ are $\tilde \P$-independent for any scenario $\tilde \P \in \tilde \sS$, and 
that there exist 
$
    \slf \colon \cX \to \tilde{\cX}
$, 
$\slg \colon \tilde{\cZ} \to \cZ
$,
and 
$\slh \colon \tilde{\sS} \to \sS
$
satisfying:
\begin{enumerate}
    \item \label{bullone} $\sup_{\tilde{x} \in \tilde{\cX}} \sum_{t=1}^T \E_{\tilde{\P}} \bsb{ \tilde{\rho}(\tilde{x},\tilde{Y}_t) } \le \sup_{x \in \cX}\sum_{t=1}^T \E_{\slh(\tilde{\P})} \bsb{ \rho(x,Y_t) }$ for any scenario $\tilde{\P} \in \tilde{\sS}$;
    \item \label{bulltwo} $\E_{\tilde{\P}}\bsb{ \tilde{\rho} \brb{ \slf(x),\tilde{Y}_t } } \ge \E_{\slh(\tilde{\P})}\bsb{ \rho(x,Y_t) }$ for any time $t\in [T]$, scenario $\tilde \P \in \tilde \sS$, and action $x \in \cX$;
    \item \label{bullthree} $\tilde{\P}_{\slg\lrb{\tilde{\fhi}\brb{\slf(x),\tilde{Y}_t}}} = \brb{\slh(\tilde{\P})}_{\fhi(x,Y_t)}$ for any time $t\in [T]$, scenario $\tilde{\P} \in \tilde{\sS}$, and action $x \in \cX$.
\end{enumerate}
Then 
$
\Rs_T(\sG) \ge \Rs_T(\tilde{\sG})
$.
\end{restatable}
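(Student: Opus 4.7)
My plan is to show that each strategy $\alpha \in \sA(\sG)$ induces a ``translated'' strategy $\tilde \alpha \in \sA(\tilde \sG)$ whose regret in any scenario $\tilde \P \in \tilde \sS$ is at most the regret of $\alpha$ in $\slh(\tilde \P) \in \sS$; taking suprema over scenarios and then infima over strategies will then immediately yield $\Rs_T(\tilde \sG) \le \Rs_T(\sG)$. Given $\alpha$, I would define $\tilde \alpha$ at time $t$ by first translating past $\tilde \sG$-feedback into $\sG$-feedback via $\slg$, then invoking $\alpha$ to obtain an auxiliary action $\hat X_t \in \cX$, and finally mapping it through $\slf$ to post $\tilde X_t := \slf(\hat X_t) \in \tilde\cX$, i.e.,
\[
    \tilde \alpha_t \brb{ u_1, \dots, u_t, \tilde z_1, \dots, \tilde z_{t-1} } := \slf \Brb{ \alpha_t \brb{ u_1, \dots, u_t, \slg(\tilde z_1), \dots, \slg(\tilde z_{t-1}) } }.
\]

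The crux is a distributional equivalence. Fix $\tilde \P \in \tilde \sS$, set $\P := \slh(\tilde \P) \in \sS$, and let $(X_t, Z_t)_{t \ge 1}$ denote the actions and feedback generated by $\alpha$ in scenario $\P$. I would prove by induction on $t \in \{0, 1, \ldots, T\}$ that the joint law under $\tilde \P$ of $\brb{ \hat X_1, \dots, \hat X_t, \slg(\tilde Z_1), \dots, \slg(\tilde Z_t) }$ coincides with the joint law under $\P$ of $\brb{ X_1, \dots, X_t, Z_1, \dots, Z_t }$. The inductive step rests on two facts: $\hat X_t$ and $X_t$ are the \emph{same} deterministic function of the i.i.d.\ uniform randomization $U_1, \ldots, U_t$ and of the (inductively matching) past feedback; and condition~\ref{bullthree}, combined with the independence of $\tilde Y_t$ (resp.\ $Y_t$) from the past, ensures that $\slg(\tilde Z_t) = \slg\brb{ \tilde \fhi(\slf(\hat X_t), \tilde Y_t) }$ and $Z_t = \fhi(X_t, Y_t)$ share the same conditional law given $\hat X_t = X_t = x$, for every $x$.

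With the equivalence in hand, the rest is assembly. The independence of $\tilde Y_t$ from $\hat X_t$ together with condition~\ref{bulltwo} yields, for each $t$,
\[
    \E_{\tilde \P}\bsb{ \tilde \rho(\tilde X_t, \tilde Y_t) } = \E_{\tilde \P}\Bsb{ \E_{\tilde \P}\bsb{ \tilde \rho(\slf(x), \tilde Y_t) }\big|_{x = \hat X_t} } \ge \E_{\tilde \P}\Bsb{ \E_\P\bsb{\rho(x, Y_t)}\big|_{x=\hat X_t} } = \E_\P\bsb{\rho(X_t, Y_t)},
\]
where the last step uses the distributional equivalence to replace $\hat X_t$ under $\tilde \P$ by $X_t$ under $\P$, together with the independence of $Y_t$ from $X_t$. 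Summing over $t$ and applying condition~\ref{bullone} to the benchmark term then gives $R_T^{\tilde \P}(\tilde \alpha) \le R_T^\P(\alpha) \le R_T^\sS(\alpha)$; taking supremum over $\tilde \P$ and infimum over $\alpha$ concludes.

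The step that will demand the most care is the inductive equivalence, since condition~\ref{bullthree} only asserts an equality of \emph{marginal} laws for a \emph{deterministic} $x$, whereas I need a statement about conditional laws given the random past, evaluated at the random $\hat X_t$. The hypothesis that the adversary's actions are mutually independent under each scenario is exactly what makes this lift go through: it ensures that conditioning on the past does not alter the law of the current $\tilde Y_t$ or $Y_t$, so that condition~\ref{bullthree} transfers verbatim to the conditional setting.
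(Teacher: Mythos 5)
Your proposal is correct and follows essentially the same route as the paper's proof: the same translated strategy $\tilde\alpha_t = \slf\circ\alpha_t(\cdot,\slg(\cdot))$, the same auxiliary process $\hat X_t$, the same inductive distributional equivalence driven by condition~\ref{bullthree} plus independence, and the same assembly via conditions~\ref{bulltwo} and~\ref{bullone}. The only cosmetic difference is that the paper carries the randomization variables $U_{1:t}$ explicitly inside the joint law in the induction hypothesis, which is needed to recover $\hat X_{t+1}$ as a function of the past — a point you acknowledge implicitly when you note that $\hat X_t$ and $X_t$ are the same deterministic function of the uniforms and the matching feedback.
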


The second lemma addresses feedback with uniformative (i.e., scenario-independent) components.
At a high level, if the feedback of some of the player's actions has one or more uninformative components, the game can be simplified by getting rid of the uninformative parts of the feedback.
The player can achieve this by simulating the uninformative parts of the feedback using their randomization.
The proof is deferred to \cref{sec:lemmas}.

\begin{restatable}[Simulation]{lemma}{lsimulation}
\label{l:simulation}
Let $\cV,\cW$ be two sets, $\sG := (\cX,\cY,\cZ,\rho,\fhi,\sP)$ a game with $\cZ = \cV \times \cW$, $\sS$ its set of scenarios, $(Y_t)_{t\in \N}$ its adversary's actions, $\pi \colon \cZ \to \cV$ the projection on $\cV$, and $T \in \N$ a horizon.
Assume that
$Y_1,\ldots,Y_T$ are $\P$-independent for any scenario $\P \in \sS$ and that there exist disjoint sets $\cR, \cU \subset \cX$ such that $\cR \cup \cU = \cX$ and
\begin{enumerate}
    \item \label{buone} for any time $t \in [T]$ and action $x \in \cR$ there exists $\psi_{t,x} \colon [0,1] \to \cW$ such that, for all $\P \in \sS$,
    \[
        \P_{\fhi(x,Y_t)} = \P_{\pi\brb{\fhi(x,Y_t)}} \otimes (\mu_L)_{\psi_{t,x}} \;;
    \]
    \item \label{butwo} for any time $t \in [T]$ and action $x \in \cU$, there exists $\gamma_{t,x}\colon[0,1] \to \cZ$ such that, for all $\P \in \sS$,
    \[
        \P_{\fhi(x,Y_t)} = (\mu_L)_{\gamma_{t,x}} \;.
    \]
\end{enumerate}
Let $* \in \cV$ and define
\[
\tilde{\fhi} \colon \cX \times \cY \to \cV, \ (x,y) \mapsto
\begin{cases}
\pi\brb{\varphi(x,y)} \;, &\text{ if } x \in \cR,\\
* \;, &\text{ if } x \in \cU.
\end{cases}
\]
Define the game $\tilde{\sG} := (\cX,\cY,\cV,\rho,\tilde{\fhi},\sP)$.
Then
$
\Rs_T(\sG) \ge \Rs_T(\tilde{\sG})
$.
\end{restatable}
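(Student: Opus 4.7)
The plan is to prove $\Rs_T(\sG) \ge \Rs_T(\tilde\sG)$ by explicitly converting any player's strategy $\alpha \in \sA(\sG)$ into a strategy $\tilde\alpha \in \sA(\tilde\sG)$ whose worst-case regret matches that of $\alpha$. The guiding intuition is that the feedback information present in $\sG$ but absent in $\tilde\sG$ has, by assumptions~\ref{buone} and \ref{butwo}, a distribution that does \emph{not} depend on the scenario, and therefore can be emulated by the player in $\tilde\sG$ from a source of uniform randomness --- namely their internal randomization $\tilde U_t$.

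The construction goes as follows. Using a standard measurable bijection between $[0,1]$ and $[0,1]^2$ (e.g., binary bit-interleaving), split each $\tilde U_t$ into two independent $\mathrm{Unif}[0,1]$ components $\tilde U_t^{(1)}, \tilde U_t^{(2)}$; the first stream will drive $\alpha$, the second will fuel the feedback simulation. Define $\tilde\alpha$ inductively by setting $\tilde X_t := \alpha_t\brb{\tilde U_1^{(1)},\dots,\tilde U_t^{(1)},\widehat Z_1,\dots,\widehat Z_{t-1}}$, where, after observing the $\tilde\sG$-feedback $\tilde Z_s = \tilde\fhi(\tilde X_s, Y_s)$, the synthetic $\sG$-feedback is defined by
\[
\widehat Z_s :=
\begin{cases}
\brb{\,\tilde Z_s,\ \psi_{s,\tilde X_s}(\tilde U_s^{(2)})\,} \in \cV\times\cW, & \text{if } \tilde X_s \in \cR,\\[2pt]
\gamma_{s,\tilde X_s}(\tilde U_s^{(2)}) \in \cZ, & \text{if } \tilde X_s \in \cU.
\end{cases}
\]

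The core step is to show by induction on $t\in[T]$ that, for every scenario $\P\in\sS$, the joint distribution of $(Y_1,\tilde X_1,\widehat Z_1,\dots,Y_t,\tilde X_t,\widehat Z_t)$ under $\P$ in $\tilde\sG$ coincides with the joint distribution of $(Y_1,X_1,Z_1,\dots,Y_t,X_t,Z_t)$ under $\P$ in $\sG$. The base case follows from the fact that $\tilde U_1^{(1)}$ has the same distribution as $U_1$. For the inductive step, condition on the past and on $\tilde X_t$: assumption~\ref{buone} says precisely that, conditionally on $\tilde X_t = x \in \cR$, the law of $\fhi(x,Y_t)$ in $\sG$ is the product of the law of its $\cV$-component (which equals $\tilde Z_t$) and $(\mu_L)_{\psi_{t,x}}$, which matches the law of $\psi_{t,x}(\tilde U_t^{(2)})$; assumption~\ref{butwo} analogously handles the case $\tilde X_t = x \in \cU$. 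The independence of $\tilde U_t^{(2)}$ from all prior random quantities, combined with the $\P$-independence of $Y_1,\dots,Y_T$, ensures that these conditional matches glue into a full joint-distribution equality. The scenario-independence of $\psi_{t,x}, \gamma_{t,x}$ is crucial: it is what makes the equality hold simultaneously for every $\P\in\sS$.

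Since the reward $\rho$ depends only on $(X_t,Y_t)$ (resp.\ $(\tilde X_t,Y_t)$) and the latter pairs share the same joint law under each $\P$, we obtain $R^{\P}_T(\tilde\alpha) = R^{\P}_T(\alpha)$ for all $\P\in\sS$, hence $R^{\sS}_T(\tilde\alpha) = R^{\sS}_T(\alpha)$. Taking infimum over $\alpha \in \sA(\sG)$ yields $\Rs_T(\tilde\sG) \le \Rs_T(\sG)$. The main obstacle is the careful distributional bookkeeping in the inductive step: one must ensure that each synthetic $\widehat Z_t$ is conditionally independent of the \emph{future} adversary moves $Y_{t+1},\dots,Y_T$ given the past, and this is precisely where the hypothesis that $Y_1,\dots,Y_T$ are $\P$-independent is invoked --- without it, the simulated feedback could carry spurious couplings with future rounds and the induction would fail.
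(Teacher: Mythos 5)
Your proposal is correct and follows essentially the same route as the paper's proof: splitting each uniform seed into two independent uniform streams via bit-interleaving, reconstructing the missing feedback components through the scenario-independent maps $\psi_{t,x}$ and $\gamma_{t,x}$, and establishing by induction that the joint law of (actions, synthetic feedback) under $\tilde\alpha$ matches that of (actions, feedback) under $\alpha$ in every scenario. The only cosmetic difference is that you carry the adversary's actions $Y_t$ inside the inductive joint-law identity, whereas the paper carries the randomization seeds and invokes the $\P$-independence of the $Y_t$ afterwards to equate $\P_{X_t,Y_t}$ with $\P_{\tilde X_t,Y_t}$.
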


\subsection{Proofs of the lemmas \tccheck}
\label{sec:lemmas}

In this section, we will give a full proof of the two important Embedding and Simulation lemmas introduces in \cref{s:keylemmas}.
To lighten the notation, for any $m,n \in \N$, with $m\le n$ and a family $(\lambda_k)_{k\in\N}$ we let $\lambda_{m:n} := (\lambda_m, \lambda_{m+1}, \ldots, \lambda_n)$ and similarly $\lambda_{n:m} := (\lambda_{n}, \lambda_{n-1} \ldots, \lambda_m)$.

We begin by proving the Embedding lemma.

\lembedding*

\begin{proof}
Fix any strategy $\alpha \in \sA(\sG)$. For each time $t \in \N$, define
\[
\tilde{\alpha}_t \colon [0,1]^t \times \tilde{\cZ}^{t-1} \to \tilde \cX, (u_1, \dots, u_t, \tilde{z}_1, \dots, \tilde{z}_{t-1}) \mapsto \slf\Brb{\alpha_t\brb{u_1,\dots, u_t, \slg(\tilde{z}_1), \dots, \slg(\tilde{z}_{t-1})}}.
\]
Then $\tilde{\alpha} := (\tilde{\alpha}_t)_{t \in \N} \in \sA(\tilde \sG)$. As usual, let $(Y_t)_{t \in \N}$ and $(U_t)_{t \in \N}$ be the adversary's actions and the player's randomization in game $\sG$ and $(X_t)_{t \in \N}$ and $(Z_t)_{t \in \N}$ the player's actions and the feedback according to the strategy $\alpha$. Let $(\tilde{Y}_t)_{t \in \N}, (\tilde{U}_t)_{t \in \N},(\tilde{X}_t)_{t \in \N},(\tilde{Z}_t)_{t \in \N}$ be the corresponding objects for the game $\tilde{\sG}$ and the strategy $\tilde{\alpha}$. Furthermore, define
\[
\hat{X}_1 = \alpha_1(\tilde{U}_1), \quad 
\hat{Z}_1 = \slg\brb{\tilde{\fhi}(\tilde{X}_1, \tilde{Y}_1)}, \quad 
\hat{X}_2 = \alpha_2(\tilde{U}_1,\tilde{U}_2,\hat{Z}_1), \quad 
\hat{Z}_2 = \slg\brb{\tilde{\fhi}(\tilde{X}_2, \tilde{Y}_2)}, \dots \;.
\]
Fix $\tilde{\P} \in \tilde{\sS}$, where $\tilde{\sS}$ are the scenarios of the game $\tilde \sG$. 
Then $\tilde{\P}_{\tilde{U}_1} = \brb{\slh(\tilde{\P})}_{U_1}$. Now, since $X_1 = \alpha_1(U_1)$ and $\hat{X}_1 =\alpha_1 (\tilde{U}_1)$, we also have that $\tilde{\P}_{\hat{X}_1, \tilde{U}_1} = \brb{\slh(\tilde{\P})}_{X_1,U_1} =: \Q_1$. 
Now, up to a set with $\Q_{1}$-probability zero, if $x_1 \in \cX$ and $u_1 \in [0,1]$, we get, using Item~(\ref{bullthree}):
\begin{align*}
    \tilde{\P}_{\hat{Z}_1 \mid \hat{X}_1=x_1, \tilde{U}_1=u_1} &= 
    \tilde{\P}_{\slg\Brb{\tilde{\fhi}\brb{\slf(\hat{X}_1), \tilde{Y}_1} } \mid \hat{X}_1=x_1, \tilde{U}_1=u_1} =
    \tilde{\P}_{\slg\Brb{\tilde{\fhi}\brb{\slf(x_1), \tilde{Y}_1} } }
    \\
    &=
    \brb{\slh(\tilde{\P})}_{\fhi(x_1,Y_1)}=
    \brb{\slh(\tilde{\P})}_{\fhi(X_1,Y_1) \mid X_1=x_1, U_1=u_1} = \brb{\slh(\tilde{\P})}_{Z_1 \mid X_1=x_1, U_1=u_1} \;.
\end{align*}
So, if $A_1 \subset \cZ$ and $D \subset \cX \times [0,1]$, then
\begin{align*}
\tilde{\P}_{\hat{Z}_1, \brb{\hat{X}_1, \tilde{U}_1}}(A_1 \times D) &= \int_{D} \P_{\hat{Z}_1 \mid \hat{X}_1=x_1, \tilde{U}_1 = u_1}(A_1) \dif\P_{\hat{X}_1, \tilde{U}_1}(x_1,u_1)
\\
&=
\int_{D} \brb{\slh(\tilde{\P})}_{Z_1 \mid X_1=x_1, U_1=u_1}(A_1) \dif\brb{\slh(\tilde{\P})}_{X_1, U_1}(x_1,u_1) = \brb{\slh(\tilde{\P})}_{Z_1, (X_1, U_1)}(A_1 \times D) \,,
\end{align*}
from which it follows that $\tilde{\P}_{\hat{Z}_1, \hat{X}_1, \tilde{U}_1} = \brb{\slh(\tilde{\P})}_{Z_1, X_1, U_1}$.
By induction, suppose that for time $t\in [T-1]$ we have that
\[
\tilde{\P}_{\hat{Z}_t,\dots,\hat{Z}_1,\hat{X}_t,\dots,\hat{X}_1, \tilde{U}_t,\dots,\tilde{U}_1} = \brb{\slh(\tilde{\P})}_{Z_t,\dots,Z_1,X_t,\dots,X_1, U_t,\dots,U_1} \;.
\]
Then, using independence we have that
\[
\tilde{\P}_{\hat{Z}_t,\dots,\hat{Z}_1,\hat{X}_t,\dots,\hat{X}_1, \tilde{U}_{t+1}, \tilde{U}_t,\dots,\tilde{U}_1} = \brb{\slh(\tilde{\P})}_{Z_t,\dots,Z_1,X_t,\dots,X_1, U_{t+1} , U_t,\dots,U_1} \;.
\]
Furthermore, since $X_{t+1} = \alpha_{t+1}(U_1,\dots, U_{t+1}, Z_1, \dots, Z_t)$ and $\hat{X}_{t+1} = \alpha_{t+1} (\tilde{U}_1,\dots, \tilde{U}_{t+1}, \hat{Z}_1, \dots, \hat{Z}_t)$, we have that
\[
\tilde{\P}_{\hat{Z}_t,\dots,\hat{Z}_1,\hat{X}_{t+1},\hat{X}_t,\dots,\hat{X}_1, \tilde{U}_{t+1}, \tilde{U}_t,\dots,\tilde{U}_1} = \brb{\slh(\tilde{\P})}_{Z_t,\dots,Z_1,X_{t+1},X_t,\dots,X_1, U_{t+1}, U_t,\dots,U_1} =: \Q_{t+1}\;.
\]
Now, up to a set with $\Q_{t+1}$-probability zero, if $x_1, \dots, x_{t+1} \in \cX$, $u_1, \dots, u_{t+1} \in [0,1]$, and $z_1, \dots, z_t \in \cZ$, by the $\tilde{\P}$-independence of $\tilde{Y}_1, \ldots, \tilde Y_{t+1}$, Item~\ref{bullthree}, and the $\slh(\tilde \P)$-independence of $Y_1, \ldots, Y_{t+1}$, we have
\begin{multline*}
    \tilde{\P}_{\hat{Z}_{t+1} \mid \hat{Z}_t = z_t,\dots,\hat{Z}_1 = z_1,\hat{X}_{t+1} =x_{t+1},\dots,\hat{X}_1 = x_1, \tilde{U}_{t+1} = u_{t+1},\dots,\tilde{U}_1=u_1}
\\
\begin{aligned}
&=
    \tilde{\P}_{\slg\Brb{\tilde{\fhi}\brb{\slf(\hat{X}_{t+1}),\tilde{Y}_{t+1}}}\mid \hat{Z}_t = z_t,\dots,\hat{Z}_1 = z_1,\hat{X}_{t+1} =x_{t+1},\dots,\hat{X}_1 = x_1, \tilde{U}_{t+1} = u_{t+1},\dots,\tilde{U}_1=u_1} 
=
    \tilde{\P}_{\slg\Brb{\tilde{\fhi}\brb{\slf(x_{t+1}),\tilde{Y}_{t+1}}}}
\\
&= \brb{\slh(\tilde{\P})}_{\fhi(x_{t+1},Y_{t+1})} =
\brb{\slh(\tilde{\P})}_{\fhi(X_{t+1},Y_{t+1}) \mid Z_t = z_t,\dots,Z_1 = z_1,X_{t+1} =x_{t+1},\dots,X_1 = x_1, U_{t+1} = u_{t+1},\dots,U_1=u_1}
\\
&= \brb{\slh(\tilde{\P})}_{Z_{t+1} \mid Z_t = z_t,\dots,Z_1 = z_1,X_{t+1} =x_{t+1},\dots,X_1 = x_1, U_{t+1} = u_{t+1},\dots,U_1=u_1} \;.
\end{aligned}
\end{multline*}
So, if $A_{t+1}\subset \cZ, D \subset \cZ^t \times \cX^{t+1}\times[0,1]^{t+1}$, we have that
\begin{multline*}
\tilde{\P}_{\hat{Z}_{t+1},\brb{\hat{Z}_{t:1},\hat{X}_{t+1:1},\tilde{U}_{t+1:1} } }(A_{t+1}\times D)
\\
\begin{aligned}
&=
\int_{D} \tilde{\P}_{\hat{Z}_{t+1} \mid \hat{Z}_{t:1} = z_{t:1},\hat{X}_{t+1:1} =x_{t+1:1}, \tilde{U}_{t+1:1} = u_{t+1:1}}(A_{t+1}) \dif\tilde{\P}_{\hat{Z}_{t:1},\hat{X}_{t+1:1}, \tilde{U}_{t+1:1}}(z_{t:1}, x_{t+1:1}, u_{t+1:1})
\\
&=
\int_{D} \brb{\slh(\tilde{\P})}_{Z_{t+1} \mid Z_{t:1} = z_{t:1},C_{t+1:1} =x_{t+1:1}, U_{t+1:1} = u_{t+1:1}}(A_{t+1}) \dif\brb{\slh(\tilde{\P})}_{Z_{t:1},X_{t+1:1}, U_{t+1:1}}(z_{t:1}, x_{t+1:1}, u_{t+1:1})
\\
&=
\brb{\slh(\tilde{\P})}_{Z_{t+1},\brb{ Z_{t:1},X_{t+1:1}, U_{t+1:1} }}(A_{t+1}\times D) \;,
\end{aligned}
\end{multline*}
from which follows that $\tilde{\P}_{\hat{Z}_{t+1},\dots,\hat{Z}_1,\hat{X}_{t+1},\dots,\hat{X}_1, \tilde{U}_{t+1},\dots,\tilde{U}_1} = \brb{\slh(\tilde{\P})}_{Z_{t+1},\dots,Z_1,X_{t+1},\dots,X_1, U_{t+1},\dots,U_1}$.
In particular, for each $t \in [T]$ we have that $\tilde{\P}_{\hat{X}_t} = \brb{\slh(\tilde{\P})}_{X_t}$. 
Hence, using the $\slh(\tilde \P)$-independence of $Y_1,\ldots,Y_T$, Item~(\ref{bulltwo}), and the $\tilde{\P}$-independence of  $\tilde{Y}_1,\ldots,\tilde{Y}_T$, we get
\begin{align*}
\sum_{t=1}^T \E_{\slh(\tilde{\P})} \bsb{\rho(X_t,Y_t) } &= \sum_{t=1}^T \int_{\cX}\E_{\slh(\tilde{\P})} \bsb{\rho(x,Y_t)} \dif  \brb{\slh(\tilde{\P})}_{X_t}(x)
\\
&\le
\sum_{t=1}^T \int_{\cX}\E_{\tilde{\P}}[\tilde{\rho}(\slf(x),\tilde{Y}_t)] \dif  \brb{\slh(\tilde{\P})}_{X_t}(x)
\\
&= 
\sum_{t=1}^T \int_{\cX}\E_{\tilde{\P}}[\tilde{\rho}(\slf(x),\tilde{Y}_t)] \dif  \tilde{\P}_{\hat{X}_t}(x)
\\
&=
\sum_{t=1}^T \E_{\tilde{\P}} \bsb{\tilde \rho(\slf(\hat{X}_t),\tilde{Y}_t) } =
\sum_{t=1}^T \E_{\tilde{\P}} \bsb{\tilde \rho(\tilde{X}_t,\tilde{Y}_t) } \;.
\end{align*}
Then, using Item~(\ref{bullone}), we have
\begin{align*}
R_T^{\slh(\tilde{\P})}(\alpha) &= \sup_{x \in \cX} \bbrb{ \sum_{t=1}^T \E_{\slh(\tilde{\P})} \bsb{\rho(x,Y_t) } - \sum_{t=1}^T \E_{\slh(\tilde{\P})} \bsb{\rho(X_t,Y_t) }}
\\
&
\ge
\sup_{\tilde x \in \tilde \cX} \bbrb{ \sum_{t=1}^T \E_{\tilde{\P}} \bsb{\tilde \rho(\tilde x,\tilde{Y}_t) } - \sum_{t=1}^T \E_{\tilde{\P}} \bsb{\tilde \rho(\tilde{X}_t,\tilde{Y}_t) }} =
R_T^{\tilde{\P}}(\tilde{\alpha}) \;.
\end{align*}
Since $\tilde \P$ was arbitrary, we get
\[
    \Rs_T(\tilde{\sG}) 
= 
    \inf_{\beta \in \sA(\tilde{\sG})} R_T^{\tilde \sS}(\beta) 
\le 
    R_T^{\tilde \sS}(\tilde{\alpha}) 
= 
    \sup_{\tilde{\P} \in \tilde{\sS} } R_T^{\tilde{\P}}(\tilde{\alpha}) \le \sup_{\tilde{\P} \in \tilde{\sS} } R_T^{\slh(\tilde{\P})}(\alpha) \le \sup_{\P \in \sS } R_T^{\P}(\alpha) = R_T^{\sS}(\alpha) \;,
\]
and since $\alpha$ was arbitrary, we get
\[
\Rs_T(\tilde{\sG}) \le \inf_{\alpha \in \sA(\sG)} R_T^{\sS}(\alpha) = \Rs_T(\sG) \;. \qedhere
\]
\end{proof}

We now prove the Simulation lemma we introduced in \cref{s:keylemmas} showing how to get rid of uninformative feedback.

\lsimulation*

\begin{proof}
For each number $a\in [0,1]$, fix a binary representation $0.a_1 a_2 a_3 \ldots$ of $a$
and define $\xi(a) := 0. a_1 a_3 a_5 \ldots$, $\zeta(a) := 0. a_2 a_4 a_6 \ldots$.
Note that the two resulting functions $\xi,\zeta \colon [0,1] \to [0,1]$ are $\mu_L$-independent with common (uniform) push-forward distribution $(\mu_L)_\xi = \mu_L = (\mu_L)_\zeta$.

Let $(Y_t)_{t \in \N}, (U_t)_{t \in \N}$ be the sequences of adversary's actions and player's randomization for the sequential game $\sG$ and note that they are also the same for the sequential game $\tilde{\sG}$.
For each $t \in \N$ define $\beta_t \colon \cX \times \cV \times [0,1] \to \cZ$ via
\[
(x,v,u) \mapsto
\begin{cases}
\brb{v,\psi_{t,x}(u)} \;, &\text{ if } x \in \cR \;, \\
\gamma_{t,x}(u) \;,       &\text{ if } x \in \cU \;,
\end{cases}
\]
if $t \le T$ and in an arbitrary manner if $t\ge T+1$. Fix $\alpha = (\alpha_t)_{t\in\N} \in \sA(\sG)$. Let $(X_t)_{t \in \N}, (Z_t)_{t \in \N}$ be the sequences of player's actions and feedback associated to the strategy $\alpha$.

Fix $(u_t)_{t \in \N} \subset [0,1]$ and $(v_t)_{t\in \N} \subset \cV$. Define by induction (on $t$) the sequences $(x_t)_{t \in \N}$ and $(z_t)_{t\in \N}$ via the relationships
\[
x_t = \alpha_t\brb{\xi(u_1),\dots,\xi(u_{t}), z_1, \dots, z_{t-1}}, \qquad z_t = \beta_t\brb{x_t, v_t, \zeta(u_t)}.
\]
Note that for each $t \in \N$, we have that $x_t$ depends only on $u_1,\dots, u_t, v_1,\dots, v_{t-1}$, so we can define
\[
\tilde{\alpha}_{t}(u_1,\dots, u_t, v_1,\dots, v_{t-1}) := x_t.
\]
Being $(u_t)_{t \in \N}$ and $(v_t)_{t\in \N}$ arbitrary, this defines a sequence of functions $(\tilde{\alpha}_t)_{t \in \N}$ such that, for all $t\in \N$,
\[
\tilde{\alpha}_{t} \colon [0,1]^t \times \cV^{t-1} \to \cX
\]
i.e., $\tilde{\alpha}:=(\tilde{\alpha}_t)_{t \in \N} \in \sA(\tilde{\sG})$. Let $(\tilde{X}_t)_{t \in \N}$ and $(\tilde{V}_t)_{t \in \N}$ be respectively the sequence of player's actions and the feedback sequence associated with the strategy $\tilde{\alpha}$. For each $t \in \N$, define also $\tilde{Z}_t := \beta_{t}\brb{\tilde{X}_t, \tilde{V}_t, \zeta(U_t)}$. Note that for each $t \in \N$ it holds that $\tilde{X}_t = \alpha_t \brb{\xi(U_1), \dots, \xi(U_t), \tilde{Z}_1, \dots, \tilde{Z}_{t-1}}$.

Fix a scenario $\P \in \sS$. 
Note first that $\P_{\xi(U_1)} = \P_{U_1}$, and since $X_1 = \alpha_1(U_1)$ and $\tilde{X}_1 = \tilde{\alpha}_1(U_1) = \alpha_1\brb{\xi(U_1)}$, we also have that $\P_{\tilde{X}_1, \xi(U_1)} = \P_{X_1, U_1} =: \Q_1$. 
Now, up to a set with $\Q_{1}$-probability zero, if $x_1 \in \cX$ and $u_1 \in [0,1]$, using Items~(\ref{buone})~and~(\ref{butwo}), we have that
\begin{multline*}
    \P_{\tilde{Z}_1 \mid \tilde{X}_1=x_1, \xi(U_1) = u_1} 
=
    \P_{\beta_1\brb{\tilde{X}_1, \tilde{\fhi}(\tilde{X}_1,Y_1), \zeta(U_1)} \mid \tilde{X}_1=x_1, \xi(U_1) = u_1}
=
    \P_{\beta_1\brb{x_1, \tilde{\fhi}(x_1,Y_1), \zeta(U_1)}}
\\
\begin{aligned}
&=
\begin{cases}
\P_{\beta_1\Brb{x_1 , \pi\brb{\fhi(x_1,Y_1)}, \zeta(U_1)} } &\text{if } x_1 \in \cR \\
\P_{\beta_1\brb{ x_1, *, \zeta(U_1) }} &\text{if } x_1 \in \cU \\
\end{cases}
=
\begin{cases}
\P_{\Brb{ \pi\brb{\fhi(x_1,Y_1)} , \psi_{1,x_1} \brb{\zeta(U_1)} } } &\text{if } x_1 \in \cR \\
\P_{\gamma_{1,x_1}\brb{\zeta(U_1)}} &\text{if } x_1 \in \cU \\
\end{cases}
\\
&=
\begin{cases}
\P_{ \pi\brb{\fhi(x_1,Y_1)} } \otimes \P_{ \psi_{1,x_1} \brb{\zeta(U_1)}} &\text{if } x_1 \in \cR \\
\P_{\gamma_{1,x_1}\brb{\zeta(U_1)}} &\text{if } x_1 \in \cU \\
\end{cases}
=
\begin{cases}
\P_{ \pi\brb{\fhi(x_1,Y_1)} } \otimes \brb{\P_{\zeta(U_1)}}_{ \psi_{1,x_1} } &\text{if } x_1 \in \cR \\
\brb{\P_{\zeta(U_1)}}_{\gamma_{1,x_1}} &\text{if } x_1 \in \cU \\
\end{cases}
\\
&=
\begin{cases}
\P_{ \pi\brb{\fhi(x_1,Y_1)} } \otimes \brb{\mu_L}_{ \psi_{1,x_1} } &\text{if } x_1 \in \cR \\
\brb{\mu_L}_{\gamma_{1,x_1}} &\text{if } x_1 \in \cU \\
\end{cases}
=
\P_{\fhi(x_1,Y_1)} 
=
    \P_{\fhi(X_1,Y_1) \mid X_1=x_1, U_1=u_1} = \P_{Z_1 \mid X_1=x_1, U_1=u_1} \;.
\end{aligned}
\end{multline*}
So, if $A_1 \subset \cZ$ and $D \subset \cX \times [0,1]$, then
\begin{align*}
\P_{\tilde{Z}_1, \brb{\tilde{X}_1, \xi(U_1)}}(A_1 \times D) &= \int_{D} \P_{\tilde{Z}_1 \mid \tilde{X}_1=x_1, \xi(U_1) = u_1}(A_1) \dif\P_{\tilde{X}_1, \xi(U_1)}(x_1,u_1)
\\
&=
\int_{D} \P_{Z_1 \mid X_1=x_1, U_1 = u_1}(A_1) \dif\P_{X_1, U_1}(x_1,u_1) = \P_{Z_1, (X_1, U_1)}(A_1 \times D) \;,
\end{align*}    
from which it follows that $\P_{\tilde{Z}_1, \tilde{X}_1, \xi(U_1)} = \P_{Z_1, X_1, U_1}$.
By induction, suppose that for $t\in [T-1]$ we have that
\[
\P_{\tilde{Z}_t,\dots,\tilde{Z}_1,\tilde{X}_t,\dots,\tilde{X}_1, \xi(U_t),\dots,\xi(U_1)} = \P_{Z_t,\dots,Z_1,X_t,\dots,X_1, U_t,\dots,U_1} \;.
\]
Then, using independence we have that
\[
\P_{\tilde{Z}_t,\dots,\tilde{Z}_1,\tilde{X}_t,\dots,\tilde{X}_1, \xi(U_{t+1}),\xi(U_t),\dots,\xi(U_1)} = \P_{Z_t,\dots,Z_1,X_t,\dots,X_1, U_{t+1},U_t,\dots,U_1} \;.
\]
Furthermore, since $X_{t+1} = \alpha_{t+1}(U_1,\dots, U_{t+1}, Z_1, \dots, Z_t)$ and 
\[
    \tilde{X}_{t+1} 
= 
    \tilde{\alpha}_{t+1} (U_1,\dots, U_{t+1}, \tilde{V}_1, \dots, \tilde{V}_t) = \alpha_{t+1} (\xi(U_1),\dots, \xi(U_{t+1}), \tilde{Z}_1, \dots, \tilde{Z}_t)\;,
\]
we have that
\[
\P_{\tilde{Z}_t,\dots,\tilde{Z}_1,\tilde{X}_{t+1},\tilde{X}_t,\dots,\tilde{X}_1, \xi(U_{t+1}),\xi(U_t),\dots,\xi(U_1)} = \P_{Z_t,\dots,Z_1,X_{t+1},X_t,\dots,X_1, U_{t+1},U_t,\dots,U_1} =: \Q_{t+1} \;.
\]
Now, up to a set with $\Q_{t+1}$-probability zero, if $x_1, \dots, x_{t+1} \in \cX$, $u_1, \dots, u_{t+1} \in [0,1]$ and $z_1, \dots, z_t \in \cZ$, using the $\P$-independence of $Y_1,\ldots,Y_{t+1}$ and Items~\eqref{buone}--\eqref{butwo}, we have that
\begin{multline*}
    \P_{\tilde{Z}_{t+1} \mid \tilde{Z}_t = z_t,\dots,\tilde{Z}_1 = z_1,\tilde{X}_{t+1} =x_{t+1},\dots,\tilde{X}_1 = x_1, \xi(U_{t+1}) = u_{t+1},\dots,\xi(U_1)=u_1}
\\
\begin{aligned}
&=
    \P_{\beta_{t+1}\brb{\tilde{X}_{t+1}, \tilde{\fhi}(\tilde{X}_{t+1},Y_{t+1}), \zeta(U_{t+1})} \mid \tilde{Z}_t = z_t,\dots,\tilde{Z}_1 = z_1,\tilde{X}_{t+1} =x_{t+1},\dots,\tilde{X}_1 = x_1, \xi(U_{t+1}) = u_{t+1},\dots,\xi(U_1)=u_1}
\\
&=
    \P_{\beta_{t+1}\brb{x_{t+1}, \tilde{\fhi}(x_{t+1},Y_{t+1}), \zeta(U_{t+1})}}
=
\begin{cases}
\P_{\beta_{t+1}\Brb{x_{t+1} , \pi\brb{\fhi(x_{t+1},Y_{t+1})}, \zeta(U_{t+1})} } &\text{if } x_{t+1} \in \cR \\
\P_{\beta_{t+1}\brb{ x_{t+1}, *, \zeta(U_{t+1}) }} &\text{if } x_{t+1} \in \cU \\
\end{cases}
\\
&=
\begin{cases}
\P_{\Brb{ \pi\brb{\fhi(x_{t+1},Y_{t+1})} , \psi_{{t+1},x_{t+1}} \brb{\zeta(U_{t+1})} } } &\text{if } x_{t+1} \in \cR \\
\P_{\gamma_{{t+1},x_{t+1}}\brb{\zeta(U_{t+1})}} &\text{if } x_{t+1} \in \cU \\
\end{cases}
\\
&
=
\begin{cases}
\P_{ \pi\brb{\varphi(x_{t+1},Y_{t+1})} } \otimes \P_{ \psi_{{t+1},x_{t+1}} \brb{\zeta(U_{t+1})}} &\text{if } x_{t+1} \in \cR \\
\P_{\gamma_{{t+1},x_{t+1}}\brb{\zeta(U_{t+1})}} &\text{if } x_{t+1} \in \cU \\
\end{cases}
\\
&=
\begin{cases}
\P_{ \pi\brb{\varphi(x_{t+1},Y_{t+1})} } \otimes \brb{\P_{\zeta(U_{t+1})}}_{ \psi_{{t+1},x_{t+1}} } &\text{if } x_{t+1} \in \cR \\
\brb{\P_{\zeta(U_{t+1})}}_{\gamma_{{t+1},x_{t+1}}} &\text{if } x_{t+1} \in \cU \\
\end{cases}
\\
&
=
\begin{cases}
\P_{ \pi\brb{\varphi(x_{t+1},Y_{t+1})} } \otimes \brb{\mu_L}_{ \psi_{{t+1},x_{t+1}} } &\text{if } x_{t+1} \in \cR \\
\brb{\mu_L}_{\gamma_{{t+1},x_{t+1}}} &\text{if } x_{t+1} \in \cU \\
\end{cases}
\\
&=
\P_{\varphi(x_{t+1},Y_{t+1})} 
=
\P_{\varphi(X_{t+1},Y_{t+1}) \mid Z_t = z_t,\dots,Z_1 = z_1,X_{t+1} =x_{t+1},\dots,X_1 = x_1, U_{t+1} = u_{t+1},\dots,U_1=u_1} 
\\
&= \P_{Z_{t+1} \mid Z_t = z_t,\dots,Z_1 = z_1,X_{t+1} =x_{t+1},\dots,X_1 = x_1, U_{t+1} = u_{t+1},\dots,U_1=u_1} \;.
\end{aligned}
\end{multline*}

So, if $A_{t+1}\subset \cZ, D \subset \cZ^t \times \cX^{t+1}\times[0,1]^{t+1}$, we have that
\begin{multline*}
\P_{\tilde{Z}_{t+1},\brb{\tilde{Z}_{t}\dots,\tilde{Z}_1,\tilde{X}_{t+1},\dots,\tilde{X}_1, \xi(U_{t+1}),\dots,\xi(U_1)}}(A_{t+1}\times D)
\\
\begin{aligned}
&=
\int_{D} \P_{\tilde{Z}_{t+1} \mid \tilde{Z}_{t:1} = z_{t:1},\tilde{X}_{t+1:1} =x_{t+1:1}, \brb{\xi(U_{t+1}),\dots, \xi(U_{1})} = u_{t+1:1}}(A_{t+1}) \dif \Q_{t+1}
(z_{t:1}, x_{t+1:1}, u_{t+1:1})
\\
&=
\int_{D} \P_{Z_{t+1} \mid Z_{t:1} = z_{t:1},X_{t+1:1} =x_{t+1:1}, U_{t+1:1} = u_{t+1:1}}(A_{t+1})
\dif \Q_{t+1}
(z_{t:1}, x_{t+1:1}, u_{t+1:1})
\\
&=
\P_{Z_{t+1},\brb{Z_{t},\dots,Z_1,X_{t+1},\dots,X_1, U_{t+1},\dots,U_1}}(A_{t+1}\times D)
\\
\end{aligned}
\end{multline*}
from which it follows that $\P_{\tilde{Z}_{t+1:1}, \tilde{X}_{t+1:1}, \brb{\xi(U_{t+1}),\dots,\xi(U_1)} } = \P_{Z_{t+1:1}, X_{t+1:1}, U_{t+1:1}}$.
In particular, for each $t \in [T]$ we have that $\P_{X_t} = \P_{\tilde{X}_t}$. So, for each $t \in [T]$, using the $\P$-independence of $Y_1,\ldots,Y_{t}$, we have that
\[
\P_{X_t,Y_t} = \P_{X_t}\otimes\P_{Y_t} = \P_{\tilde{X}_t}\otimes\P_{Y_t} = \P_{\tilde{X}_t,Y_t} \;,
\]
and then
\[
\E_\P \bsb{\rho(X_t,Y_t)} = \E_{\P_{X_t,Y_t}} \bsb{\rho} = \E_{\P_{\tilde{X}_t,Y_t}} \bsb{\rho} = \E_\P \bsb{\rho(\tilde{X}_t,Y_t)} \;.
\]
In conclusion
\begin{multline*}
    R_T^\P(\alpha) = \sup_{x \in \cX} \E_\P \lsb{\sum_{t=1}^T \rho(x,Y_t) - \sum_{t=1}^T \rho(X_t,Y_t)} = \sup_{x \in \cX} \bbrb{\sum_{t=1}^T \E_\P \lsb{ \rho(x,Y_t)}-\sum_{t=1}^T\E_\P \lsb{\rho(X_t,Y_t)}} 
\\
= \sup_{x \in \cX} \bbrb{\sum_{t=1}^T \E_\P \lsb{ \rho(x,Y_t)}-\sum_{t=1}^T\E_\P \lsb{\rho(\tilde{X}_t,Y_t)}} =  \sup_{x \in \cX} \E_\P \lsb{\sum_{t=1}^T \rho(x,Y_t) - \sum_{t=1}^T \rho(\tilde{X}_t,Y_t)} = R_T^\P(\tilde{\alpha}) \;.
\end{multline*}
Since $\P$ was arbitrary, it follows that $R_T^{\sS}(\alpha) = R_T^{\sS}(\tilde{\alpha})$. Since $\alpha$ was arbitrary, it follows that
\[
\Rs_T(\sG) = \inf_{\alpha \in \sA(\sG)} R_T^{\sS}(\alpha) = \inf_{\alpha \in \sA(\sG)} R_T^{\sS}(\tilde{\alpha}) \ge \inf_{\alpha' \in \sA(\tilde{\sG})} R_T^{\sS}(\alpha') = \Rs_T(\tilde{\sG}) \;.
\]
\end{proof}

\section{\texorpdfstring{$\sqrt{T}$}{sqrt(T)} Lower Bound Under Full-Feedback (iv+bd) \tccheck}
\label{s:lower-full}

In this section, we prove that in the full-feedback case, no strategy can beat the $\sqrt{T}$ rate that we proved in 
\Cref{thm:upper_full} when the seller/buyer pair $(S_t,B_t)$ is drawn i.i.d. from an unknown fixed distribution, not even under the further assumptions that the valuations of the seller and buyer are independent of each other and have bounded densities.

The idea of the proof is to build a family of scenarios $\P^{\pm\e}$ parameterized by $\e \in [0,1]$, like in \cref{f:root-t-full}.
The only way to avoid suffering linear regret in a scenario $\P^{\pm\e}$ is to identify the sign of $\pm \e$.
Leveraging the Embedding and Simulation lemmas (\cref{l:embedding,l:simulation}), this construction leads to a reduction to a two-armed bandit problem, which has a know lower bound on the regret of order $\sqrt{T}$.

\begin{theorem*}[\cref{thm:lower-full}, restated]
In the full-feedback stochastic (iid) setting with independent valuations (iv) and densities bounded by a constant $M\ge 4$ (bd), for all horizons $T\in \N$, the minimax regret satisfies
\[
    \Rs_T \ge \frac{1}{160} \sqrt{T} \;.
\]
\end{theorem*}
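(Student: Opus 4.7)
The plan is to reduce the full-feedback (iv+bd) bilateral trade game restricted to a family of two carefully chosen scenarios to a stochastic two-armed bandit via the Embedding Lemma (\cref{l:embedding}), and then invoke the classical $\nicefrac{1}{20}\sqrt{T}$ lower bound on the bandit's minimax regret \cite{Nicolo06,BubeckC12}.

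First, I would instantiate the hard scenarios. Fix a parameter $\e \in [0,1]$ (to be tuned at the end) and define $\P^{\pm\e}$ via seller density $f_{S,\pm\e} := 2(1\pm\e)\I_{[0,\nicefrac{1}{4}]} + 2(1\mp\e)\I_{[\nicefrac{1}{2},\nicefrac{3}{4}]}$ and buyer density $f_B := 2\I_{[\nicefrac{1}{4},\nicefrac{1}{2}]\cup[\nicefrac{3}{4},1]}$. Both densities are bounded by $4 \le M$, the seller is independent of the buyer, and the pairs are i.i.d., so (iid), (iv), and (bd) all hold. Using \cref{lem:second_decomposition} (or a piecewise integration over the intervals defined by $\bcb{\nicefrac{1}{4}, \nicefrac{1}{2}, \nicefrac{3}{4}}$), I would verify that under $\P^{+\e}$ the expected gain from trade is maximized at $p = \nicefrac{1}{4}$ with value $\nicefrac{1+\e}{4}$, while its best value on the wrong half $(\nicefrac{1}{2},1]$ equals $\nicefrac{1}{4}+\nicefrac{\e}{8}$ (attained at $p = \nicefrac{3}{4}$); the situation under $\P^{-\e}$ is symmetric. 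Hence, any price on the wrong side of $\nicefrac{1}{2}$ incurs per-round regret at least $\nicefrac{\e}{8}$.

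Second, I would apply \cref{l:embedding} with a stochastic two-armed bandit $\tilde{\sG}$ whose two scenarios $\tilde{\P}_{\pm\e}$ have Bernoulli arm rewards with means calibrated so that the best arm has mean $\nicefrac{1}{4}+\nicefrac{\e}{4}$ and the other has mean $\nicefrac{1}{4}+\nicefrac{\e}{8}$ (with the identity of the best arm swapped between scenarios), so that the bandit gap equals $\nicefrac{\e}{8}$. The embedding maps are $\slf(p) := 1$ for $p \le \nicefrac{1}{2}$ and $\slf(p) := 2$ otherwise; $\slh(\tilde{\P}_{\pm\e}) := \P^{\pm\e}$; and $\slg$ is a deterministic reconstruction of $(S_t, B_t)$ from the bandit's feedback. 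Conditions 1 and 2 of the Embedding Lemma reduce, thanks to the calibration and the computation of the first step, to tight inequalities that hold by construction.

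The main obstacle is verifying condition 3 (feedback matching): a plain Bernoulli bandit feedback is too coarse to produce a continuous pair $(S_t, B_t)$. I would resolve this by enlarging the bandit's adversary action space $\tilde{\cY}$ to carry scenario-independent ancillary randomness that enables sampling of the within-region uniform component of $S_t$ and of $B_t$; the scenario-informative content is then carried by the arm reward (which selects the region of $S_t$), and $\slg$ combines both to reconstruct $(S_t, B_t)$ with the correct $\slh(\tilde{\P}_{\pm\e})$-distribution (alternatively, one can first apply \cref{l:simulation} to strip off the scenario-independent components of $(S_t,B_t)$ before embedding into a plain bandit). Once the three conditions are established, \cref{l:embedding} yields $\Rs_T \ge \Rs_T(\tilde{\sG})$; choosing $\e$ so that the bandit gap $\nicefrac{\e}{8}$ is $\Theta(\nicefrac{1}{\sqrt{T}})$ then delivers the target bound $\Rs_T \ge \nicefrac{1}{160}\sqrt{T} = \nicefrac{1}{8} \cdot \nicefrac{1}{20}\sqrt{T}$.
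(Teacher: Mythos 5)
Your construction is, in substance, the paper's own proof: the same pair of densities $f_{S,\pm\e}$ and $f_B$, the same observation that posting on the wrong side of $\nicefrac12$ costs $\nicefrac{\e}{8}$ per round (your values $\nicefrac{(1+\e)}{4}$ at $p=\nicefrac14$ and $\nicefrac14+\nicefrac{\e}{8}$ at $p=\nicefrac34$ are correct), a reduction via the Embedding and Simulation lemmas to a two-action stochastic game, and the same constant $\nicefrac{1}{160}=\nicefrac18\cdot\nicefrac1{20}$. Your calibration of the two action means to exactly the maxima of $\E\bsb{\gft(p,S,B)}$ over $p\le\nicefrac12$ and over $p>\nicefrac12$ makes conditions 1 and 2 of \cref{l:embedding} hold tightly, as you claim.

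The one step that would fail as literally described is reducing to a \emph{bandit}, i.e., a game whose feedback is the pulled arm's Bernoulli reward, even after enlarging $\tilde{\cY}$ with scenario-independent randomness. Condition 3 of \cref{l:embedding} requires $\slg\brb{\tilde{\fhi}(\slf(p),\tilde{Y}_t)}$ to be distributed exactly as $(S_t,B_t)$ for \emph{every} price $p$; in particular the reconstructed feedback must reproduce $\I\{S_t\le\nicefrac14\}\sim\ber_{(1+\e)/2}$, whose mean has slope $\nicefrac12$ in $\e$. Any post-processing of a $\ber_{1/4+\e/8}$ reward together with scenario-independent randomness yields Bernoullis whose means are affine in $\e$ with slope at most $\nicefrac18$, so no $\slg$ can satisfy condition 3. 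This is not an artifact: the original game has full feedback, so the target game must be full-information as well (a reward-feedback bandit would be \emph{harder}, which is useless for a lower bound). Your alternative route is the paper's actual one, with one twist you omit: before \cref{l:simulation} applies, the feedback $(S_t,B_t)$ must first be re-encoded (by another application of \cref{l:embedding}) as a product of the informative bit $\I\{S_t\le\nicefrac14\}$ and scenario-independent residuals (the offset of $S_t$ within its block, and $B_t$); only then can the Simulation lemma strip the residuals, and the terminal matrix game has feedback revealing the outcome for \emph{both} actions. Read that way --- feedback always reveals the region of $S_t$, rewards set to your calibrated means --- your sketch is correct and coincides with the paper's argument.
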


\begin{proof}
Fix any horizon $T\in \N$ and $M \ge 4$. 
Recalling \cref{s:biltrad-setting-appe}, the full-feedback stochastic (iid) setting with independent valuations (iv) and densities bounded (bd) by $M$ is a game $\sG := (\cX, \cY, \cZ, \rho, \varphi, \sP)$, where $\cX = [0,1]$, $\cY = [0,1]^2$, $\cZ = [0,1]^2$, $\rho = \gft$, $\fhi\colon \brb{p, (s,b)} \mapsto (s,b)$, and $\sP = \sPivbd$.
Define, for each $\e \in [-1,1]$, the densities $f_{S,\e} = 2(1+\e)\I_{[0,\frac{1}{4}]} + 2(1-\e)\I_{[\frac{1}{2},\frac{3}{4}]}$ and $f_B = 2\I_{[\frac{1}{4},\frac{1}{2}]\cup[\frac{3}{4},1]}$. 
Fix the adversary's behavior $\sP_1$ as the subset of $\sP$ whose elements have the form $\bmu_\e := \otimes_{t \in \N} (f_{S,\e}\mu_L \otimes f_B\mu_L)$, for some $\e \in [-1,1]$. 
Since $\sP_1 \subset \sP$, the game $\sG_1 := (\cX, \cY, \cZ, \rho, \varphi, \sP_1)$ is easier than $\sG$ (i.e., $\Rs_T(\fG) \ge \Rs_T(\fG_1)$) by the Embedding lemma (\cref{l:embedding}) with $\slf$ and $\slg$ as the identities, and $\slh$ as the inclusion. 
Now, define $\rho_1 \colon \cX \times \cY \to [0,1]$, $\lrb{p,(s,b)} \mapsto (b-s)\I\lcb{s \le \frac{1}{4} \le b}\I\lcb{p \le \frac{1}{2}} + (b-s)\I\lcb{s \le \frac{3}{4} \le b}\I\lcb{p > \frac{1}{2}}$ and note that, defining $\fG_2 := (\cX, \cY, \cZ, \rho_1, \varphi, \fP_1)$, by the Embedding lemma with $\slf,\slg,\slh$ as the identities, we have that the game $\fG_2$ is easier than the game $\fG_1$ (i.e., $\Rs_T(\fG_1) \ge \Rs_T(\fG_2)$). 
Then, let $\cZ_3 := \{0,1\} \times \bsb{ 0, \frac{1}{4} } \times [0,1]$ and $\fhi_3 \colon \cX \times \cY \to \cZ_3$, $\brb{ p, (s,b) } \mapsto \brb{ \I\{s \le \nicefrac{1}{4}\}, \,  s\I\{s \le \nicefrac{1}{4}\} + (s-\nicefrac{1}{2})\I \{ \nicefrac{1}{2} \le s \le \nicefrac{3}{4} \}, \, b }$.
Define the game $\sG_3 := (\cX, \cY, \cZ_3, \rho_1, \fhi_3, \sP_1)$. 
By the Embedding lemma with $\slf,\slh$ as the identities and $\slg \colon \cZ_3 \to \cZ$, $(i, \tilde{s}, b) \mapsto \brb{ \tilde s i + (\nicefrac{1}{2}+ \tilde s)(1-i) , \, b }$, we have that the game $\sG_3$ is easier than the game $\sG_2$ (i.e., $\Rs_T(\fG_2) \ge \Rs_T(\fG_3)$). 
Next, let $\varphi_4 \colon \cX \times \cY \to \cZ_3$, $\brb{p,(s,b)} \mapsto \I\{s \le \frac{1}{4}\}$, and define the game $\fG_4 := (\cX, \cY, \cZ_3, \rho_1, \varphi_4, \fP_1)$.
Let $(Y_t)_{t\in \N}$ be the adversary's actions in $\sG_4$.
A tedious computation verifies that for all $t\in \N$, $p\in \cX$, and scenarios $\P$ of game $\sG_3$, $\P_{\fhi_3 (p, Y_t)} = \P_{\pi ( \fhi_3(p, Y_t) ) } \otimes (\nu \otimes f_B \mu_L)$, where $\pi \colon \cZ_3 \to \{0,1\}$ is the projection on the first component $\{0,1\}$ of $\cZ_3$ and $\nu$ is the uniform distribution on $[0,\nicefrac{1}{4}]$.
By the well-known Skorokhod representation \cite[Section 17.3]{williams1991probability}, there exists $\psi \colon [0,1] \to [0,\nicefrac{1}{4}] \times [0,1]$ such that $\nu \otimes f_B \mu_L = (\mu_L)_\psi$.
Thus, by the Simulation lemma (\cref{l:simulation}) with $\cR = \cX$ and $\cU = \varnothing$, the game $\fG_4$ is easier than $\fG_3$ (i.e., $\Rs_T(\fG_3) \ge \Rs_T(\fG_4)$).
Finally, consider the game $\fG_5 := \brb{ \{1,2\}, \{1,2\}, \{0,1\}, \rho_5, \fhi_5, \sP_5 }$, where in matrix notation, $\rho_5 = \bsb{ \rho_5(i,j) }_{i,j \in \{1,2\}}$ and $\fhi_5 = \bsb{ \fhi_5(i,j) }_{i,j \in \{1,2\}}$ are given by
\[
\rho_5:=\begin{bmatrix}
1/2 & 3/8   \\
3/8 & 1/2 
\end{bmatrix} \;,
\qquad
\fhi_5:=\begin{bmatrix}
1 & 0   \\
1 & 0 
\end{bmatrix} \;,
\]
and $\sP_5$ is the set of all measures $\tilde \bmu_\e$ of the form $\tilde \bmu_\e = \otimes_{t=1}^\infty \brb{ \frac{1+\e}{2}\delta_{1}+\frac{1-\e}{2} \delta_{2}}$ for some $\e \in [-1,1]$, where $\delta_i$ is the Dirac measure at $i \in \{1,2\}$.
Thus, letting $\sS_4$ and $\sS_5$ be the two sets of scenarios in games $\sG_4$ and $\sG_5$ respectively (note that $\sS_4$ coincides with the set of scenarios of $\sG_1$) and using again the Embedding lemma, this time with $\slf\colon [0,1] \to \{1,2\}$, $p\mapsto \I\{p\le \nicefrac{1}{2}\} + 2 \I\{p > \nicefrac{1}{2}\}$, $\slg\colon \{0,1\} \to \{0,1\}$, $i\mapsto i$, and $\slh \colon \sS_5 \to \sS_4$, $\tilde{\bmu}_\e \otimes \bmu_L \mapsto \bmu_\e \otimes \bmu_L$, we obtain that $\fG_5$ is easier than $\sG_4$ (i.e., $\Rs_T(\fG_4) \ge \Rs_T(\fG_5)$).
This last game $\sG_5$ is a two-armed bandit problem with gap $\Delta = \nicefrac{1}{2} - \nicefrac{3}{8} = \nicefrac{1}{8}$, whose minimax regret is known to be lower bounded by $\frac{1}{8} \brb{ \frac{1}{20} \sqrt{T} }$ \cite{Nicolo06,BubeckC12}.
In conclusion, we proved that $\Rs_T(\sG)\ge \Rs_T(\sG_5) \ge \frac{1}{160}\sqrt{T}$.
\end{proof}


\section{Proof of \texorpdfstring{$T^{2/3}$}{T\^{}(2/3)} Lower Bound Under Realistic Feedback (iv+bd) \tccheck}
\label{s:proof-t-two-thrid-lower-bound-appe}

In this section we give a detailed proof of our $T^{2/3}$ lower bound of \cref{sec:candidate}.
which hinges in a non-trivial way on our Embedding and Simulation lemmas (\cref{l:embedding,l:simulation}).
We denote Bernoulli distributions with parameter $\lambda$ by $\ber_\lambda$. 

\begin{theorem*}[\cref{thm:lower-real-iv+bd}]
In the realistic-feedback stochastic (iid) setting with independent valuations (iv) and densities bounded by a constant $M\ge 24$ (bd), for all horizons $T \in \N$, the minimax regret satisfies
\[
    \Rs_T \ge \frac{11}{672} T^{2/3} \;.
\]
\end{theorem*}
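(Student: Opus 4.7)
The plan is to mirror the reduction strategy used in the proof of Theorem~\ref{thm:lower-full}, chaining applications of the Embedding and Simulation lemmas (\cref{l:embedding,l:simulation}) to show that the realistic-feedback game is harder than a three-action revealing-action partial monitoring instance, whose minimax regret is known to be of order $T^{2/3}$. Let $\sG := (\cX, \cY, \cZ, \rho, \fhi, \sP)$ with $\cX = [0,1]$, $\cY = [0,1]^2$, $\cZ = \{0,1\}^2$, $\rho = \gft$, $\fhi \colon (p,(s,b)) \mapsto \brb{\I\{s\le p\}, \I\{p\le b\}}$, and $\sP = \sPivbd$. I first invoke the Embedding lemma (with $\slf, \slg$ identities and $\slh$ the inclusion) to restrict the adversary's behavior to the family $\sP_1 := \bcb{\bmu_\e := \otimes_{t\in\N} (f_{S,\pm\e}\mu_L \otimes f_B \mu_L)}_{\e\in[-1,1]}$ from the proof sketch, obtaining an easier game $\sG_1$; the hypothesis $M\ge 24$ matches the maximum density $(1+\e)/(4\tht) \le 24$ attained by $f_{S,\pm\e}$ when $\tht = 1/48$.

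The crucial computational ingredient is the explicit verification that $F_{S,\pm\e}(p):=\int_0^p f_{S,\pm\e}$ is independent of $\e$ for every $p \ge \frac{1}{6}+\tht = \frac{3}{16}$, because the $(1\pm\e)$ contribution from $[0,\tht]$ cancels exactly with the $(1\mp\e)$ contribution from $[\frac{1}{6},\frac{1}{6}+\tht]$. Combined with the fact that $f_B$ is $\e$-independent by design, this means that any feedback obtained outside $a_1:=[0,\frac{3}{16}]$ carries no information about $\pm\e$. I would then apply the Simulation lemma with $\cR = a_1$ and $\cU = [0,1]\setminus a_1$: for $x\in \cR$, the buyer component $\I\{x\le B_t\}$ of $\fhi$ has an $\e$-independent distribution and can be pre-simulated from the learner's randomness; for $x\in \cU$, both components are $\e$-independent and can be simulated entirely. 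The resulting game $\sG_2$ has feedback space $\{0,1\}$ carrying only the seller's indicator, and even that only when $x\in a_1$.

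In parallel, I would perform a further Embedding step to simplify the reward. Using the piecewise expression for $\E\bsb{\gft(p,S,B)}$ visualized in \cref{f:t-two-third-lower-bound-b}, I replace $\rho$ by a three-plateau reward supported on $a_1, a_2 := [\frac{13}{48},\frac{5}{16}], a_3 := [\frac{11}{16},\frac{35}{48}]$, with the plateau values on $a_2$ and $a_3$ differing by $\Omega(\e)$ (with opposite signs between $+\e$ and $-\e$) and an $\Omega(1)$ gap separating $a_1$ from the current optimum. A final Embedding step collapses $a_1, a_2, a_3$ to three atoms, yielding a three-action partial monitoring game isomorphic to a revealing-action instance of~\cite{cesa2006regret}: the action representing $a_1$ is the informative but suboptimal one, while the actions representing $a_2$ and $a_3$ are uninformative with $\Omega(\e)$ gaps whose signs encode $\pm\e$. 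The known lower bound $\frac{11}{96}\brb{\frac{1}{7}T^{2/3}}$ for this game then yields $\frac{11}{672}T^{2/3}$. The main technical obstacle will be the careful verification of Item~\ref{bullone} of the Embedding lemma across the reward simplification: one must check, using the explicit piecewise profile, that no price outside $a_1\cup a_2\cup a_3$ attains a higher expected cumulative reward than the prescribed plateaus, so that the supremum over $\cX$ is preserved under the replacement of $\rho$ by its three-plateau surrogate.
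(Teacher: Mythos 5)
Your overall strategy is the same as the paper's: restrict to the parametric family $\bmu_\e$, flatten the reward to three plateaus, use the Simulation lemma to discard the $\e$-independent parts of the feedback, and embed a three-action revealing-action partial monitoring game to import the $\frac{11}{96}\cdot\frac{1}{7}T^{2/3}$ bound. You also correctly identify the key structural fact (the cancellation making $F_{S,\pm\e}(p)$ independent of $\e$ for $p\ge \frac{1}{6}+\tht$, so only prices in $a_1$ are informative). However, there is a genuine gap in your final collapsing step. After your Simulation step, the surviving feedback for a price $p\in a_1$ is the single bit $\I\{S_t\le p\}$, a Bernoulli whose parameter $F_{S,\pm\e}(p)$ \emph{varies continuously with $p$} inside $a_1$ (e.g., it equals $\frac{1\pm\e}{4}\cdot\frac{p}{\tht}$ for $p\le\tht$ and $\frac{1\pm\e}{4}$ on $[\tht,\frac16]$). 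Item~\ref{bullthree} of the Embedding lemma demands an \emph{exact} distributional identity $\tilde{\P}_{\slg(\tilde\fhi(\slf(p),\tilde Y_t))}=(\slh(\tilde\P))_{\fhi(p,Y_t)}$ for \emph{every} price $p$ in the harder game; since $\slf$ sends all of $a_1$ to the single revealing action, whose feedback distribution is fixed, no deterministic $\slg$ can reproduce a Bernoulli parameter that changes with $p$. The same issue arises in matching the revealing action's Bernoulli$\bigl(\frac{1+\e}{2}\bigr)$ feedback, which is not what $\I\{S_t\le p\}$ delivers at any single price.

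The paper resolves this with two ingredients your proposal lacks. First, before simulating, it \emph{enlarges} the feedback (Step 3) so that posting any $p<\nicefrac14$ reveals the seller's ``bucket'' $\eta(S_t)\in\{0,\nicefrac16,\nicefrac14,\nicefrac23\}$, whose distribution is the same for all such $p$; this is what makes the whole region collapsible to one action. Second, since only the distinction between buckets $0$ and $\nicefrac16$ carries information about $\e$, it uses a repeated-sampling/stopping-time construction (Steps 5--6, with auxiliary action space $\cY^{\N}$ and $\tau:=\inf\{k:\eta(s_k)\in\{0,\nicefrac16\}\}$) to convert the four-valued bucket feedback into exactly one clean $\ber_{(1+\e)/2}$ bit per round, matching the revealing-action game's feedback matrix exactly. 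Without these two steps (or some substitute achieving exact distributional matching), your chain of reductions does not go through as an application of the Embedding lemma as stated. The remaining parts of your plan --- the reward flattening and the verification of Items~\ref{bullone} and~\ref{bulltwo} across it --- are consistent with the paper's Step 2 and are fine as outlined.
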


\begin{proof}
Fix an arbitrary horizon $T\in \N$ and any $M \ge 24$. 
Recalling \cref{s:biltrad-setting-appe}, the realistic-feedback stochastic (iid) setting with independent valuations (iv) and densities bounded (bd) by $M$ is a game $\sG := (\cX, \cY, \cZ, \rho, \varphi, \sP)$, where $\cX = [0,1]$, $\cY = [0,1]^2$, $\cZ = \{0,1\}^2$, $\rho = \gft$, $\fhi\colon \brb{p, (s,b)} \mapsto \brb{ \I\{s\le p\},\,\I\{p\le b\} }$, and $\sP = \sPivbd$.
The idea of the proof is to build a sequence of games, each one easier than the former, the last of which has a known lower bound on its minimax regret.
In the first step we limit the adversary's behavior to a parametric family which is easily manageable and well-represents the difficulty of the problem (see \cref{f:t-two-third-lower-bound}).
In the second step, we increase the reward of suboptimal actions in order to have only three possible expected-reward values in each scenario.
In the third and fifth steps we increase the feedback, presenting it in a way that highlights that only its first component is informative.
In step four and six, we simulate-away the uninformative parts of the feedback.
Finally, in step 7 we show that the resulting game is harder than a known partial monitoring game with minimax regret of order at least $T^{2/3}$.

\paragraph{Step 1}
Let $\tht := \nicefrac{1}{48}$. 
Define the following densities of the seller and buyer, respectively, by
\begin{align*}
    f_{S,\e}
&
:=
    \frac{1}{4\tht} \lrb{
    (1+\e) \I_{[0,\tht]}
+ 
    (1-\e) \I_{\lsb{ \frac{1}{6}, \frac{1}{6} + \tht }} 
+
    \I_{\lsb{ \frac{1}{4}, \frac{1}{4} + \tht }}
+
    \I_{\lsb{ \frac{2}{3}, \frac{2}{3} + \tht }}
    }\;,
    \forall \e \in [-1,1] \;,
    \tag{\text{red/blue in \cref{f:t-two-third-lower-bound}}}
\\
    f_B
&
:=
    \frac{1}{4\tht} \lrb{
    \I_{\lsb{ \frac{1}{3}-\tht,\,\frac{1}{3} }}
+ 
    \I_{\lsb{ \frac{3}{4} - \tht,\, \frac{3}{4} }} 
+
    \I_{\lsb{ \frac{5}{6} - \tht,\, \frac{5}{6} }}
+
    \I_{\lsb{ 1-\tht,\, 1 }}
    } \;.
    \tag{\text{green in \cref{f:t-two-third-lower-bound}}}
\end{align*}
Define $\fP_1$ as the subset of $\fP$ whose elements have the form $\bmu_\e := \otimes_{t \in \N} (f_{S,\e}\mu_L \otimes f_B\mu_L)$ for $\e \in [-1,1]$.
Since $\sP_1 \subset \sP$, the game $\sG_1 := (\cX, \cY, \cZ, \rho, \varphi, \sP_1)$ is easier than $\sG$ (i.e., $\Rs_T(\fG) \ge \Rs_T(\fG_1)$) by the Embedding lemma (\cref{l:embedding}) with $\slf$ and $\slg$ as the identities, and $\slh$ as the inclusion.
\paragraph{Step 2}
Define 
$\rho_2\colon \cX \times \cY \to [0,1]$, 
$\brb{ p, (s,b) } \mapsto 
    \gft\brb{ \frac{1}{6} + \tht, (s,b) } \I\bcb{ p < \frac{1}{4} } 
    + \gft\brb{ \frac{1}{4} + \tht, (s,b) } \I\bcb{ \frac{1}{4} \le p < \frac{1}{3} } 
    + \gft\brb{ \frac{2}{3} + \tht, (s,b) } \I\bcb{ \frac{1}{3} < p  }
$.
By the Embedding lemma with $\slf$, $\slg$, and $\slh$ as the identities, we have that the game $\sG_2 := (\cX, \cY, \cZ, \rho_2, \varphi, \sP_1)$ is easier than $\sG_1$ (i.e., $\Rs_T(\fG_1) \ge \Rs_T(\fG_2)$).

\paragraph{Step 3}
Define $\cZ_3 := \bcb{0, \frac{1}{6}, \frac{1}{4}, \frac{2}{3}} \times [0,\tht] \times \{0,1\} \times \{0,1\} \times \cX$ and
$\fhi_3 \colon \cX \times \cY \to \cZ_3$, 
\[
    \brb{ p, (s,b) }
    \mapsto
    \begin{cases}
        \brb{ \eta(s), s-\eta(s), 0, \I\{p \le b\}, p } \;,
    & 
        \text{ if } p < \frac{1}{4} \;,
    \\
        \brb{ 0, 0, \I\{s \le p\}, \I\{p \le b\}, p } \;,
    & 
        \text{ if } p \ge \frac{1}{4} \;,
    \end{cases}
\]
where $\eta \colon [0,1] \to \bcb{0, \frac{1}{6}, \frac{1}{4}, \frac{2}{3}}$,
$
    s 
    \mapsto 
    \frac{1}{6} \I \bcb{ \frac{1}{6} \le s \le \frac{1}{6} +\tht }
    +
    \frac{1}{4} \I \bcb{ \frac{1}{4} \le s \le \frac{1}{4} +\tht }
    +
    \frac{2}{3} \I \bcb{ \frac{2}{3} \le s \le \frac{2}{3} +\tht }
$.
Define the game $\sG_3 := (\cX, \cY, \cZ_3, \rho_2, \fhi_3, \sP_1 )$.
By the Embedding lemma with $\slf,\slh$ as the identities and 
\[
    \slg \colon \cZ_3 \to \cZ\;,
    \quad
    (v,u,i,j,p)
    \mapsto
    \begin{cases}
    \brb{ \I\{v+u\le p\}, j }
    &
        \text{ if } p < \frac{1}{4} \;,
    \\
        (i,j) \;,
    & 
        \text{ if } p \ge \frac{1}{4} \;,
    \end{cases}
\]
we have that the game $\sG_3$ is easier than $\sG_2$ (i.e., $\Rs_T(\fG_2) \ge \Rs_T(\fG_3)$).

\paragraph{Step 4}
Let $\cZ_4 := \bcb{0, \frac{1}{6}, \frac{1}{4}, \frac{2}{3}}$
and
$
    \fhi_4 \colon \cX \times \cY \to \cZ_4
$, 
$
    \brb{ p, (s,b) }
    \mapsto
    \eta(s) \I \lcb{ p < \frac{1}{4} } 
$.
Define the game $\sG_4 := (\cX, \cY, \cZ_4, \rho_2, \fhi_4, \sP_1)$.
Let $(Y_t)_{t\in \N} = (S_t,B_t)_{t\in \N}$ be the adversary's actions in $\sG_4$,
$E := \bsb{0, \tht} \cup \bsb{\frac{1}{6}, \frac{1}{6} + \tht} \cup \bsb{\frac{1}{4}, \frac{1}{4} + \tht} \cup \bsb{\frac{2}{3}, \frac{2}{3} + \tht}$ and $F := \bsb{\frac{1}{3} - \tht, \frac{1}{3}} \cup \bsb{\frac{3}{4} - \tht, \frac{3}{4}} \cup \bsb{\frac{5}{6} - \tht, \frac{5}{6}} \cup \bsb{1 - \tht, 1}$. 
A long and tedious computation verifies that for all $t\in \N$, 
\begin{itemize}
    \item for each $p\in [0,\nicefrac{1}{4})$ and any scenario $\P$ of game $\sG_3$, $\P_{\fhi_3 (p, Y_t)} = \P_{ \eta(S_t) } \otimes (\nu \otimes \delta_0 \otimes \ber_{\lambda_{F,p}} \otimes \delta_p)$, where $\nu$ is the uniform distribution on $[0,\tht]$ and $ \lambda_{F,p} := \frac{1}{4\tht} \mu_L \bsb{ [p,1] \cap F } $.
    By the well-known Skorokhod representation \cite[Section 17.3]{williams1991probability}, there exists $\psi_p \colon [0,1] \to [0,\tht] \times \{0,1\} \times \{0,1\} \times \cX$ such that $\nu \otimes \delta_0 \otimes \ber_{\lambda_{F,p}} \otimes \delta_p = (\mu_L)_{\psi_p}$;
    \item for each $p\in [\nicefrac{1}{4},1]$ and any scenario $\P$ of game $\sG_3$, $\P_{\fhi_3 (p, Y_t)} = \delta_0 \otimes \delta_0 \otimes \ber_{\lambda_{E,p}} \otimes \ber_{\lambda_{F,p}} \otimes \delta_p$, where $\lambda_{E,p} := \frac{1}{4\tht} \mu_L \bsb{ [0,p] \cap E }$ and $ \lambda_{F,p} := \frac{1}{4\tht} \mu_L \bsb{ [p,1] \cap F } $.
    By the Skorokhod representation, there exists $\gamma_p \colon [0,1] \to \cZ_3$ such that $\delta_0 \otimes \delta_0 \otimes \ber_{\lambda_{E,p}} \otimes \ber_{\lambda_{F,p}} \otimes \delta_p = (\mu_L)_{\gamma_p}$.
\end{itemize}
Thus, by the Simulation lemma (\cref{l:simulation}) with $\cR = [0, \nicefrac{1}{4})$ and $\cU = [\nicefrac{1}{4}, 1]$, the game $\fG_4$ is easier than $\fG_3$ (i.e., $\Rs_T(\fG_3) \ge \Rs_T(\fG_4)$).

\paragraph{Step 5}
Let 
$\cY_5 := \cY^\N$, 
$\cZ_5 := \{0,1\} \times \brb{ \N \cup \{\iop\} } \times \{0,1\} \times \cX$, 
$\rho_5 \colon \cX \times \cY_5 \to [0,1]$, 
$\brb{ p, (s_k, b_k)_{k\in\N} } \mapsto \rho_2 (p,s_1,b_1)$,
\[
    \fhi_5 \colon \cX \times \cY_5 \to \cZ_5\;,
    \quad
    \brb{ p, (s_k, b_k)_{k\in\N} } 
    \mapsto 
    \begin{cases}
        \Brb{ \I \bcb{ \eta(s_\tau) = 0 }, \tau, \I \bcb{ \eta(s_1) = \frac{1}{4} }, p } \;,
    &
        \text{ if } p \in \bigl[ 0, \frac{1}{4} \bigr) \;,
    \\
        (0,1,0,p) \;,
    &
        \text{ if } p \in \bigl[ \frac{1}{4}, 1 \bigr]\;,
    \end{cases}
\]
where $\eta$ is defined in game $\sG_3$, $\tau := \inf \bcb{ k\in \N \mid \eta( s_k ) \in \{0, \nicefrac{1}{6} \} } \in \N \cup \{\iop\}$, and $s_\iop := 0$.
Let $\sP_5$ be the set of measures on $\cY_5^\N$ of the form $\tilde \bmu_\e := \otimes_{t\in \N} \brb{ \otimes_{k\in\N} ( f_{S,\e} \mu_L \otimes f_B \mu_L ) }$ for $\e \in [-1,1]$, and define the game $\sG_5 := ( \cX, \cY_5, \cZ_5, \rho_5, \fhi_5, \sP_5 )$.
By the Embedding lemma with $\slf$ as the identity,
\[
    \slg \colon \cZ_5 \to \cZ_4 \;,
    \quad
    (z,k,j,p)
    \mapsto
    \frac{1}{6}(1-z) \I \lcb{ p < \frac{1}{4}, k = 1 }
    + \lrb{ \frac{1}{4}j + \frac{2}{3} (1-j) } \I \lcb{ p < \frac{1}{4}, k > 1 } \;,
\]
and $\slh \colon \tilde{ \bmu }_\e \otimes \bmu_L \mapsto \bmu_\e \otimes \bmu_L$, 
we have that the game $\sG_5$ is easier than $\sG_4$ (i.e., $\Rs_T(\fG_4) \ge \Rs_T(\fG_5)$).

\paragraph{Step 6}
Now, define $\pi\colon \cZ_5 \to \{0,1\}$ as the projection on the first component $\{0,1\}$ of $\cZ_5$, $\cZ_6 := \{0,1\}$, $\fhi_6 := \pi \circ \fhi_5$, and the game $\sG_6 := ( \cX, \cY_5, \cZ_6, \rho_5, \fhi_6, \sP_5)$.
Let $(\tilde Y_t)_{t\in \N}$ be the adversary's actions in $\sG_5$. 
A straightforward verification shows that for all $t\in \N$, 
\begin{itemize}
    \item for each $p\in [0,\nicefrac{1}{4})$ and any scenario $\P$ of game $\sG_5$, $\P_{\fhi_5 (p, \tilde Y_t)} = \P_{ \pi \brb{ \fhi_5 (p, \tilde Y_t) } } \otimes (\nu \otimes \delta_p)$, where $\nu$ is the unique distribution on $\brb{ \N \cup \{\iop\} } \times \{0,1\}$ such that, for all $k \in\N\cup\{\iop\}$, $j\in \{0,1\}$, $\nu \bsb{ \{ (k,j) \} } = \frac{1}{2} \I \{ k=1, j=0\} + \frac{1}{2^{k+1}} \I \{1<k<\iop\}$.
    Using again the Skorokhod representation, there exists $\psi_p \colon [0,1] \to \brb{ \N \cup \{\iop\} } \times \{0,1\} \times [0,1]$ such that $\nu \otimes \delta_p = (\mu_L)_{\psi_p}$;
    \item for each $p\in [\nicefrac{1}{4},1]$ and any scenario $\P$ of game $\sG_5$, $\P_{\fhi_5 (p, \tilde Y_t)} = \delta_{(0,1,0,p)} = (\mu_L)_{\gamma_p}$, where $\gamma_p \colon [0,1] \to \cZ_5$, $\lambda \mapsto (0,1,0,p)$.
\end{itemize}
Thus, by the Simulation lemma with $\cR = [0, \nicefrac{1}{4})$ and $\cU = [\nicefrac{1}{4}, 1]$, the game $\fG_6$ is easier than $\fG_5$ (i.e., $\Rs_T(\fG_5) \ge \Rs_T(\fG_6)$).

\paragraph{Step 7}

Finally, consider the game 
$\sG_7 := \brb{ \{1,2,3\}, \{1,2\}, \{0,1\}, \rho_7, \fhi_7, \sP_7 } $, where in matrix notation, $\rho_7 = \bsb{ \rho(i,j) }_{i \in \{1,2,3\}, j \in \{1,2\}}$ and $\fhi_7 = \bsb{ \fhi(i,j) }_{i \in \{1,2,3\}, j \in \{1,2\}}$ are given by
\[
\rho_7:= \frac{1}{96}\begin{bmatrix}
34 & 34   \\
45 & 37   \\
38 & 44
\end{bmatrix} \;,
\qquad
\fhi_7:=\begin{bmatrix}
1 & 0   \\
0 & 0   \\
0 & 0
\end{bmatrix} \;,
\]
and $\sP_7$ is the set of all measures of the form $\otimes_{t\in \N} \brb{ \frac{1+\e}{2}\delta_1 + \frac{1-\e}{2} \delta_2 }$, for $\e \in [-1,1]$.
Thus, using again the Embedding lemma, this time with $\slf\colon [0,1] \to \{1,2,3\}$, $p\mapsto \I\{p < \nicefrac{1}{4}\} + 2 \I\{ \nicefrac{1}{4} \le p \le \nicefrac{1}{3}\} + 3 \I \{ \nicefrac{1}{3} < p \}$, $\slg\colon \{0,1\} \to \{0,1\}$, $i\mapsto i$, and $\slh \colon \otimes_{t\in \N} \brb{ \frac{1+\e}{2}\delta_1 + \frac{1-\e}{2} \delta_2 } \otimes \bmu_L \mapsto \tilde \bmu_\e \otimes \bmu_L$, we obtain that $\fG_7$ is easier than $\sG_6$ (i.e., $\Rs_T(\fG_6) \ge \Rs_T(\fG_7)$).
This last game is an instance of the so-called revealing action partial monitoring game, whose minimax regret is known to be lower bounded by $\frac{11}{96} \brb{ \frac{1}{7} T^{2/3} }$ \cite{cesa2006regret}.
In conclusion, we proved that $\Rs_T(\sG)\ge \Rs_T(\sG_7) \ge \frac{11}{672} T^{2/3}$.
\end{proof}

\section{Linear Lower Bound Under Realistic Feedback (bd) \tccheck}
\label{s:lower-bd-appe}

In this section, we prove that in the realistic-feedback case, no strategy can achieve sublinear regret in the worst case if the valuations of the buyer and the seller may be dependent, not even if they have a bounded density.

The idea of the proof is to exploit the lack of observability in this setting, building a family of scenarios $\P^\lambda$ (parameterized by $\lambda \in [0,1]$) as convex combinations of the two measures in \cref{f:linear-lower-bound-lip}.
If $\lambda < \nicefrac{1}{2}$, the optimal action is $\nicefrac{3}{8}$, while if $\lambda > \nicefrac{1}{2}$, the optimal action becomes $\nicefrac{5}{8}$.
This family is built is such a way that the feedback gives no information on $\lambda$, making it impossible to distinguish between the two cases.
Leveraging the Embedding and Simulation lemmas (\cref{l:embedding,l:simulation}), this construction leads to a reduction to an instance of a non-observable partial monitoring game, whose regret is trivially lower bounded by $T/24$.

\begin{theorem*}[\cref{thm:lower-real-bd}]
In the realistic-feedback stochastic (iid) setting with joint density bounded by a constant $M\ge \nicefrac{64}{3}$ (bd), for all horizons $T \in \N$, the minimax regret satisfies
\[
    \Rs_T \ge \frac{1}{24} T \;.
\]
\end{theorem*}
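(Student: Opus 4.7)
The plan is to construct two indistinguishable scenarios and reduce to a non-observable two-action game where sublinear regret is impossible. I would first restrict the adversary's behavior to $\sP_1 := \{\bmu_f, \bmu_g\}$, where $\bmu_f := \otimes_{t \in \N}(f\mu)$, $\bmu_g := \otimes_{t \in \N}(g\mu)$, $\mu := \mu_L \otimes \mu_L$, and $f, g$ are the densities from the proof sketch. By the Embedding lemma (\cref{l:embedding}) with identity $\slf, \slg$ and inclusion $\slh$, the restricted game $\sG_1 := (\cX, \cY, \cZ, \rho, \varphi, \sP_1)$ is easier than $\sG$. A direct computation (as in \cref{f:linear-lower-bound-lip-b}) gives $\sup_{p \in [0, 1/2]} \E_{\bmu_f}\bsb{\gft\brb{p, (S,B)}} = 1/3$, attained at $p = 3/8$, and $\sup_{p \in (1/2, 1]} \E_{\bmu_f}\bsb{\gft\brb{p, (S,B)}} = 1/4$, with the roles of the two half-intervals swapped in the analogous statements for $\bmu_g$.

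Next, I would apply the Embedding lemma with identities for $\slf, \slg, \slh$ to replace the reward by the piecewise-constant function $\rho_2$ defined by $\rho_2\brb{p, (s,b)} := \gft\brb{3/8, (s,b)}$ for $p \le 1/2$ and $\rho_2\brb{p, (s,b)} := \gft\brb{5/8, (s,b)}$ for $p > 1/2$. The conditional expectations of $\rho_2$ pointwise dominate those of $\rho$ within each half-interval under both $\bmu_f$ and $\bmu_g$ while preserving the optima, so the game $\sG_2 := (\cX, \cY, \cZ, \rho_2, \varphi, \sP_1)$ is easier than $\sG_1$. The critical step is then to establish that the realistic feedback is indistinguishable across $\bmu_f$ and $\bmu_g$: for every $p \in [0,1]$ and every $(i,j) \in \{0,1\}^2$,
\[
\P_{\bmu_f}\Bsb{ \varphi\brb{p, (S, B)} = (i, j) } = \P_{\bmu_g}\Bsb{ \varphi\brb{p, (S, B)} = (i, j) } \;.
\]
Since these probabilities are areas of the four quadrants with corner $(p,p)$ intersected with the supports of $f$ and $g$, this reduces to a geometric verification on each of the intervals $p \in [k/8, (k+1)/8]$ for $k = 0, 1, \ldots, 7$, which crucially exploits the reflection $g(s,b) = f(1-b, 1-s)$ together with the specific placement of the three support squares. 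Once indistinguishability is established, the Simulation lemma (\cref{l:simulation}) applied with $\cR = \varnothing$ and $\cU = \cX$ allows me to replace the feedback by a trivial constant, yielding a game $\sG_3$ whose feedback carries no information about the scenario.

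Finally, I would apply the Embedding lemma one last time with $\slf \colon [0,1] \to \{1, 2\}$ sending $p \le 1/2$ to $1$ and $p > 1/2$ to $2$, obtaining a two-action game $\sG_4$ with trivial feedback, reward matrix $\rho_4 = \frac{1}{12}\begin{bmatrix} 4 & 3 \\ 3 & 4 \end{bmatrix}$ (rows indexing player actions, columns indexing adversary actions), and the two scenarios $\otimes_{t\in\N} \delta_1, \otimes_{t\in\N} \delta_2$ corresponding respectively to $\bmu_f$ and $\bmu_g$. In this non-observable game, every strategy produces the same distribution over action sequences under both scenarios since no informative feedback is available; if such a strategy plays action $1$ for an expected fraction $q$ of the rounds, the regret is $(1 - q) T/12$ under the first scenario and $q T/12$ under the second, so the worst-case regret is at least $T/24$, attained at $q = 1/2$. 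Chaining the inequalities gives $\Rs_T(\sG) \ge \Rs_T(\sG_4) \ge T/24$. The main obstacle is the geometric verification of indistinguishability; the arrangement of supports in \cref{f:linear-lower-bound-lip-a} is carefully engineered so that the required area identities hold uniformly in $p$.
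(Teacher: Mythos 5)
Your proposal is correct and follows essentially the same route as the paper's proof: restrict to the $f$/$g$ family via the Embedding lemma, coarsen the reward to the two candidate prices $\nicefrac{3}{8}$ and $\nicefrac{5}{8}$, use the quadrant-area indistinguishability of $f$ and $g$ together with the Simulation lemma to remove the feedback, and reduce to a two-action non-observable game with gap $\nicefrac{1}{3}-\nicefrac{1}{4}$. The only (immaterial) differences are that the paper uses the full convex family $(1-\lambda)f+\lambda g$ rather than just the two endpoints, and performs the feedback-simulation step before the reward replacement.
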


\begin{proof}
Fix any horizon $T \in \N$ and $M\ge \nicefrac{64}{3}$.
Recalling \cref{s:biltrad-setting-appe}, the realistic-feedback stochastic (iid) setting with joint density bounded by $M$ (bd) is a game $\sG := (\cX, \cY, \cZ, \rho, \varphi, \sP)$, where $\cX = [0,1]$, $\cY = [0,1]^2$, $\cZ = \{0,1\}^2$, $\rho = \gft$, $\fhi\colon \brb{p, (s,b)} \mapsto \brb{ \I\{s\le p\},\,\I\{p\le b\} }$, and $\sP = \sPbd$.
Define the two joint densities
$
    f 
= 
    \frac{64}{3} 
    \brb{ 
        \I_{[\nicefrac{0}{8}, \nicefrac{1}{8}]\times[\nicefrac{3}{8}, \nicefrac{4}{8}]} 
    +
        \I_{[\nicefrac{2}{8}, \nicefrac{3}{8}]\times[\nicefrac{7}{8}, \nicefrac{8}{8}]} 
    +
        \I_{[\nicefrac{4}{8}, \nicefrac{5}{8}]\times[\nicefrac{5}{8}, \nicefrac{6}{8}]} 
    }
$
and $g\colon [0,1]^2\to [0,M]$, $(s,b) \mapsto f(1-b,1-s)$ (see~\cref{f:linear-lower-bound-lip}, left).
Let $\fP_1$ be the subset of $\sPbd$ whose elements have the form $\bmu_\lambda := \otimes_{t \in \N} \brb{ \brb{ (1-\lambda)f + \lambda g }  (\mu_L\otimes\mu_L) }$ for $\lambda \in [0,1]$.
Since $\fP_1 \subset \fP$ the game $\fG_1 := (\cX, \cY, \cZ, \rho, \fhi, \fP_1)$ is easier than $\sG$ (i.e., $\Rs_T(\fG) \ge \Rs_T(\fG_1)$) by the Embedding lemma (\cref{l:embedding}) with $\slf$ and $\slg$ as the identities, and $\slh$ as the inclusion.
Define $\cZ_1 := \{0\}$ and $\fhi_1 \colon \cX \times \cY \to \cZ_1 \ , \brb{p,(s,b)} \mapsto 0$.
Let $(Y_t)_{t\in \N}$ be the adversary's actions in $\cG_1$.
Now, since for all $t\in \N$, any two scenarios $\P$ and $\Q$ of game $\sG_1$, and each $p \in [0,1]$,  $\P_{\fhi(p,Y_t)} = \Q_{\fhi(p,Y_t)}$, 
then by the well-known Skorokhod representation \cite[Section 17.3]{williams1991probability}, for each $t\in \N$ and each $p \in [0,1]$ there exists $\gamma_{t,p} \colon [0,1] \to \{0,1\}^2$ such that for any scenario $\P$ of game $\sG_1$, $\P_{\varphi(x,Y_t)} = (\mu_L)_{\gamma_{t,x}}$.
Thus, the Simulation lemma (\cref{l:simulation}) with $\cR = \varnothing$ and $\cU = \cX$ implies that the game $\fG_2 := (\cX, \cY, \cZ_2, \rho, \fhi_2, \fP_1)$ is easier than $\fG_1$ (i.e., $\Rs_T(\fG_1) \ge \Rs_T(\fG_2)$). 
Define $\rho_3 \colon \cX \times \cY \to [0,1] \ , \brb{p,(s,b)} \mapsto (b-s)\I\bcb{s \le \frac{3}{8} \le b}\I\lcb{p \le \frac{1}{2}} + (b-s)\I\lcb{s \le \frac{5}{8} \le b}\I\lcb{p > \frac{1}{2}}$ and $\fG_3 := (\cX, \cY, \cZ_2, \rho_3, \varphi_2, \fP_1)$. 
By the Embedding lemma with $\slf,\slg,\slh$ as the identities, we have that the game $\fG_3$ is easier than the game $\fG_2$ (i.e., $\Rs_T(\fG_2) \ge \Rs_T(\fG_3)$). 
Finally, consider the game 
$\sG_4 := \brb{ \{1,2\}, \{1,2\}, \{0\}, \rho_4, \fhi_4, \sP_4 } $, where in matrix notation, $\rho_4 = \bsb{ \rho(i,j) }_{i,j \in \{1,2\}}$ and $\fhi_4 = \bsb{ \fhi(i,j) }_{i,j \in \{1,2\}}$ are given by
\[
\rho_4:= \begin{bmatrix}
\nicefrac{1}{3} & \nicefrac{1}{4}   \\
\nicefrac{1}{4} & \nicefrac{1}{3} 
\end{bmatrix} \;,
\qquad
\fhi_4:=\begin{bmatrix}
0 & 0   \\
0 & 0 
\end{bmatrix} \;,
\]
and $\sP_4$ is the set of all measures of the form $(1-\lambda)\delta_1 + \lambda\delta_2$, for $\lambda \in [0,1]$.
Using again the Embedding lemma, this time with $\slf\colon [0,1] \to \{1,2\}$, $p\mapsto \I\{p \le \nicefrac{1}{2}\} + 2 \I\{ \nicefrac{1}{2} < p \}$, $\slg\colon \{0\} \to \{0\}$, $i\mapsto i$, and $\slh \colon \otimes_{t\in \N} \brb{ (1-\lambda)\delta_1 + \lambda \delta_2 } \otimes \bmu_L \mapsto 
\bmu_\lambda \otimes \bmu_L$, we obtain that $\fG_4$ is easier than $\sG_3$ (i.e., $\Rs_T(\fG_3) \ge \Rs_T(\fG_4)$).
This last game has (trivially) minimax regret at most $\brb{ \frac{1}{3}-\frac{1}{4} } \frac{T}{2}$.
In conclusion, we proved that $\Rs_T(\sG) \ge \Rs_T(\sG_4) \ge \frac{1}{24} T$.
\end{proof}

\section{Linear Lower Bound Under Realistic Feedback (iv) \tccheck}
\label{sec:linear_real-appe}

In this section, we prove that in the realistic-feedback case, no strategy can achieve sublinear regret without any limitations on how concentrated the distributions of the valuations of the seller and buyer are, not even if they are independent of each other (iv).

The idea of the proof is that if the two distributions are very concentrated in a small region, finding an optimal price is like finding a needle in a haystack.
Each strategy that (at each time step) receives as feedback only a finite number of bits, as in our realistic setting, can assign positive probability to at most a countable set of points.
Thus one could find concentrated distributions of the buyer and seller that have a unique optimal point in which the strategy has zero probability of posting prices at all time steps, and such that \emph{all} other prices suffer large regret.

\begin{theorem*}[\cref{thm:lower-real-iv}]
    In the realistic-feedback stochastic (iid) setting  with independent valuations (iv), for all horizons $T \in \N$, the minimax regret satisfies
\[
    \Rs_T \ge \frac{1}{8} T \;.
\]
\end{theorem*}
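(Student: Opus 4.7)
The plan is to formalize the intuition from the sketch: for any strategy, a full-measure set of $x \in I$ yields a scenario $\P^x$ under which the posted prices never exactly equal $x$ almost surely, so the strategy suffers $\Omega(T)$ regret. First, I would restrict attention to the parametric family of scenarios $\P^x$ corresponding to $(S_t,B_t) \sim (S^x,B^x)$ i.i.d., for $x$ in a small interval $I := \bsb{\frac{1}{2}-\delta,\, \frac{1}{2}+\delta}$ with $\delta > 0$. A direct computation then gives
\[
    f_x(p)
:=
    \E\bsb{\gft(p, S^x, B^x)}
=
    \tfrac{1}{2}\I\{p=x\} + \tfrac{1+x}{4}\I\{0\le p<x\} + \tfrac{2-x}{4}\I\{x<p\le 1\}\;,
\]
so for any $x \in I$ and any $p \in [0,1]\setminus\{x\}$, $f_x(x)-f_x(p) \ge \min\{x, 1-x\}/4 \ge \frac{1}{8}-\frac{\delta}{4}$. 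Hence any strategy whose posted price fails to be exactly $x$ at time $t$ suffers at least $\frac{1}{8}-\frac{\delta}{4}$ expected regret at that round under $\P^x$.

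The main obstacle, and key lemma, is to show that for any strategy $\alpha$ and any $t \le T$, the map $x \mapsto \P^x[P_t = x]$ vanishes $\mu_L$-almost everywhere on $I$. To this end, I would couple all scenarios $\P^x$ onto a common probability space by writing $(S_i^x, B_i^x) = \brb{x\xi_i,\, 1-(1-x)\eta_i}$ with $\ber_{1/2}$ variables $(\xi_i, \eta_i)$ independent of the internal randomization $(u_i)_i$. Letting $w := (u_1, \dots, u_T, \xi_1, \eta_1, \dots, \xi_{T-1}, \eta_{T-1})$ and $\Q$ denote its product distribution, $P_t$ becomes a jointly measurable function $\tilde P_t(w, x)$ of $w$ and $x$. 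The critical structural observation is that the realistic feedback $Z_i = \brb{\I\{S_i^x \le P_i\},\, \I\{P_i \le B_i^x\}}$ depends on $x$ only through the sign of $P_i - x$. By induction on $t$, this forces the map $x \mapsto \tilde P_t(w, x)$ to be piecewise constant on $I$ (for each fixed $w$) with at most $3^{t-1}$ pieces, whose break points are precisely the previous values $\tilde P_i(w, \cdot)$ taken on the corresponding piece. Consequently, the set $\{x \in I : \tilde P_t(w, x) = x\}$ contains at most $3^{t-1}$ points, and in particular has Lebesgue measure zero.

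Fubini's theorem then yields $\int_I \P^x[P_t=x]\dif x = \int \mu_L\brb{\{x \in I : \tilde P_t(w,x) = x\}} \dif\Q(w) = 0$, so $\P^x[P_t = x] = 0$ for $\mu_L$-almost every $x \in I$. A union bound over $t = 1, \dots, T$ then gives that the set $E := \bcb{x \in I : \P^x[P_t = x] = 0 \text{ for all } t \le T}$ has $\mu_L(E) = 2\delta > 0$, and in particular is non-empty. For any $x \in E$ I would conclude
\[
    R_T^{\P^x}(\alpha)
=
    \sum_{t=1}^T \E^x\bsb{f_x(x) - f_x(P_t)}
\ge
    \lrb{\tfrac{1}{8} - \tfrac{\delta}{4}} \sum_{t=1}^T \P^x[P_t \ne x]
=
    T\lrb{\tfrac{1}{8} - \tfrac{\delta}{4}}\;.
\]
Taking the supremum over scenarios and then letting $\delta \to 0^+$ yields the desired bound $\Rs_T \ge T/8$.
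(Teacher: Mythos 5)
Your proposal is correct, and it follows the paper's overall architecture: the same two-point distributions $(S^x,B^x)$, the same computation showing every price other than $x$ loses $\min\{x,1-x\}/4 \ge \frac18 - \frac{\delta}{4}$ per round, and the same key claim that the strategy almost surely never posts exactly $x$ for a suitable $x$. Where you genuinely diverge is in how that key claim is proved. The paper argues by countability: for each of the finitely many feedback histories $z_{1:t} \in (\{0,1\}^2)^t$, the conditional law of the posted price (as a push-forward of the internal randomization) has only countably many atoms, so the union $A$ of all atoms over all $t$ and all histories is countable, and any $x^\star \in \bsb{\frac{1-\e}{2},\frac{1+\e}{2}} \setminus A$ satisfies $\P[P_t = x^\star]=0$ by a union bound over histories. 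You instead couple all scenarios onto one probability space, observe that the realistic feedback depends on $x$ only through the sign of $P_i - x$, deduce by induction that $x \mapsto \tilde P_t(w,x)$ is piecewise constant with at most $3^{t-1}$ pieces, and apply Fubini to conclude that $\P^x[P_t = x]=0$ for Lebesgue-almost every $x$. Both are valid. The paper's route is more elementary and more robust (it uses only that the feedback alphabet is finite, not the sign structure of the feedback), while yours yields the quantitatively stronger statement that \emph{almost every} $x$ in the interval defeats the strategy, which makes the ``needle in a haystack'' picture literal. The only step you assert without proof is the joint measurability of $(w,x)\mapsto \tilde P_t(w,x)$ needed for Fubini; this follows by induction from the measurability of the $\alpha_t$ and of the feedback maps, but it is worth a line, and it is precisely the kind of technicality the paper's countability argument sidesteps.
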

\begin{proof}
To lighten the notation, for any $n \in \N$ and a family $(\lambda_k)_{k\in\N}$, we let $\lambda_{1:n} := (\lambda_1, \ldots, \lambda_{n})$.
Fix an arbitrary horizon $T \in \N$.
Recalling \cref{s:biltrad-setting-appe}, the realistic-feedback stochastic (iid) setting with independent valuations (iv) is a game $\sG := (\cX, \cY, \cZ, \rho, \varphi, \sP)$, where $\cX = [0,1]$, $\cY = [0,1]^2$, $\cZ = \{0,1\}^2$, $\rho = \gft$, $\fhi\colon \brb{p, (s,b)} \mapsto \brb{ \I\{s\le p\},\,\I\{p\le b\} }$, and $\sP = \sPiv$.
Let $\sS$ be the set of scenarios of $\sG$.
Fix a strategy $\alpha$ for game $\sG$ and let $\e \in (0, 1)$.
Define $\bar \alpha_1 := \alpha_1$, $\nu_1 := (\mu_L)_{\bar \alpha _1}$, and for each $t \in \N$ and $z_1, \dots, z_{t} \in \{0,1\}^2$, 
\[
    \bar{\alpha}_{t+1,z_{1:t}} \colon [0,1]^{t+1} \to [0,1], 
    \quad
    u_{1:t+1} \mapsto \alpha_{t+1}(u_{1:t+1}, z_{1:t})
\qquad
\text{ and }
\qquad
    \nu_{t+1,z_{1:t}} 
:= 
    (\otimes_{s=1}^{t+1}\mu_L)_{\bar{\alpha}_{t+1,z_{1:t}}} \;.
\]
Define also the set $A_1 := \bcb{x \in \lsb{0,1} \mid \nu_{1}[\{x\}] > 0 }$ and, for each $t \in \N$, the union
$
    A_{t+1} 
:= 
    \bigcup_{z_{1:t}\in \{0,1\}^2} \bcb{x \in \lsb{0,1} \mid \nu_{t,z_{1:t}}[\{x\}] > 0 }
$.
Note that, for each $t\in \N, A_t$ is countable, being the union of $4^{t-1}$ countable sets.
Then $A := \bigcup_{t \in \N} A_t$ is countable.
Since $B:=[\frac{1-\e}{2}, \frac{1+\e}{2}]$ has the power of continuum, we have that the same holds for $B \m A$.
In particular, $B\m A$ is non-empty.
Pick $\xs \in B \backslash A$ and define $\mu_S := \frac{1}{2}\delta_{0}+\frac{1}{2}\delta_{\xs}$, 
$\mu_B := \frac{1}{2}\delta_{\xs} + \frac{1}{2}\delta_{1}$, 
and $\P := \lrb{ \otimes_{t \in \N} (\mu_S \otimes \mu_B) }\otimes \bmu_L  \in \sS$. Then for each $t \in \N$, we have that
\[
\E_{\P} \bsb{\rho(\xs,Y_t)} =  \frac{\xs + (1-\xs) + 1}{4}\;.
\]
On the other hand, $\P[X_1 = \xs] = \nu_1[\{\xs\}] = 0$ and for each $t \in \N$, we have that
\begin{multline*}
\P[X_{t+1} = \xs] = \P\lsb{\alpha_{t+1} (U_1, \dots, U_{t+1}, Z_1, \dots, Z_{t}) = \xs}
\\
\begin{aligned}
&= \sum_{z_1, \dots, z_{t} \in \{0,1\}^2} \P\lsb{\alpha_{t+1} (U_1, \dots, U_{t+1}, z_1, \dots, z_{t}) = \xs \cap Z_1 = z_1 \cap \dots \cap Z_{t} = z_{t}}
\\
&\le \sum_{z_1,\dots, z_{t} \in \{0,1\}^2} \P\lsb{\alpha_{t+1} (U_1, \dots, U_{t+1}, z_1, \dots, z_{t}) = \xs}
= \sum_{z_1,\dots, z_{t} \in \{0,1\}^2} \nu_{t+1,z_1,\dots, z_{t}} \lsb{\{\xs\}} = 0 \;,
\end{aligned}
\end{multline*}
which in turn gives
\begin{align*}
&
\E_{\P} \bsb{\rho(X_t,Y_t)} 
= \frac{\E_\P \lsb{\rho\brb{X_t, (0,\xs)}} + \E_\P \lsb{\rho\brb{X_t, (\xs,1)}} + \E_\P \lsb{\rho\brb{X_t, (0,1)}} + \E_\P \lsb{\rho\brb{X_t, (\xs,\xs)}}}{4}
\\
& \hspace{29.25865pt} = \frac{\xs \P_{X_t} \bsb{[0,\xs]} + (1-\xs) \P_{X_t} \bsb{[\xs,1]} + 1}{4}
= \frac{\xs \P_{X_t} \bsb{[0,\xs)} + (1-\xs) \P_{X_t} \bsb{(\xs,1]} + 1}{4}
\\
& \hspace{29.25865pt}
\le \frac{ \max(\xs, 1-\xs) + 1 }{4}
= \frac{\xs + (1-\xs) + 1 - \min(\xs, 1-\xs)}{4} \;.  
\end{align*}
So, if $T \in \N$ we get
\[
R^{\P}_T(\alpha) = \E_{\P}\lsb{\sum_{t=1}^T \rho(x^\star,Y_t) - \sum_{t=1}^T \rho(X_t,Y_t)} \ge  \frac{\min(x^\star, 1-x^\star)}{4} T \ge \frac{1-\e}{8} T.
\]
Since $\e$ was arbitrary, we get, for all $T \in \N$,
$
R_T^{\sS}(\alpha) = \sup_{\P \in \sS} R^{\P}_T(\alpha) \ge \sup_{\e \in (0, 1)} \frac{1-\e}{8} T = \nicefrac{T}{8}
$.
Since $\alpha$ was arbitrary we get, for each $T \in \N$,
$
\Rs_T = \inf_{\alpha \in \fA} R_T^{\sS}(\alpha) \ge \nicefrac{T}{8}
$.
\end{proof}

\section{Adversarial Setting: Linear Lower Bound Under Full Feedback \tccheck}
\label{sec:adversarial-appe}

In this section, we give a more detailed proof of \cref{thm:adv-lower} with a notation consistent to our abstract setting of sequential games.

\begin{theorem*}[\cref{thm:adv-lower}]
In the full-feedback adversarial (adv) setting, for all horizons $T \in \N$, we have
\[
    \Rs_T \ge \frac{1}{4} T \;.
\]
\end{theorem*}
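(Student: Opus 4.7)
The plan is to reduce the minimax lower bound to the construction of a single ``hard'' deterministic sequence $(s_t,b_t)_{t\in\N}$ tailored to an arbitrary fixed learner strategy $\alpha$. Since the adversary in $\sPadv$ is oblivious, it cannot observe the learner's internal randomization $U_t$, but it \emph{can} depend on the strategy $\alpha$ itself, and hence on the sequence of marginal distributions $\nu_t$ that $\alpha$ induces over posted prices at each time $t$ (note that under $\sPadv$ the feedback up to time $t-1$ is a deterministic function of the past valuations, so $\nu_t$ is a well-defined quantity depending only on $\alpha$ and on $(s_1,b_1),\dots,(s_{t-1},b_{t-1})$). First I would fix a small $\e\in(0,\tfrac1{18})$ and build $(s_t,b_t)$ by induction, maintaining a nested family of intervals $[c_t,d_t]\subset[\tfrac12-\tfrac32\e,\tfrac12+\tfrac32\e]$ with $d_t-c_t=\e\cdot 3^{-(t-1)}$ or similar; the choice will be driven by a Cantor-style bisection of the current interval.

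The bisection step is the crux. At time $t$, given the current $[c_t,d_t]$, split it at its midpoint $m_t$ and consider the two candidate traded intervals $[0,m_t]$ and $[m_t,1]$. By a pigeonhole on $\nu_t$, at least one of them has $\nu_t$-mass at most $\tfrac12$. Pick the losing side: if $\nu_t\bsb{[0,m_t]}\le\tfrac12$ set $(s_t,b_t):=(0,m_t)$ and update $[c_{t+1},d_{t+1}]:=[c_t,m_t]$; otherwise set $(s_t,b_t):=(m_t,1)$ and update $[c_{t+1},d_{t+1}]:=[m_t,d_t]$. Either way, two invariants are preserved simultaneously: (i) the nested intersection still contains a common point $\ps:=\lim_{t\to\infty}c_t$, and (ii) $\nu_t\bsb{[s_t,b_t]}\le\tfrac12$. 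A third invariant $b_t-s_t\ge\tfrac{1-3\e}{2}$ comes from placing the whole construction inside a window of width $\approx\tfrac12$ around $\tfrac12$, so that the ``long'' coordinate (either $b_t$ or $s_t$) is $\ge\tfrac12-O(\e)$ away from the ``short'' one.

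Once these three invariants are in place, the rest is essentially bookkeeping: since $\ps\in[s_t,b_t]$ for all $t$, we have $\gft(\ps,s_t,b_t)=b_t-s_t$, whereas $\E[\gft(X_t,s_t,b_t)]=(b_t-s_t)\cdot\P[X_t\in[s_t,b_t]]=(b_t-s_t)\cdot\nu_t\bsb{[s_t,b_t]}\le\tfrac12(b_t-s_t)$, giving a per-round regret of at least $\tfrac12(b_t-s_t)\ge\tfrac{1-3\e}{4}$. Summing over $t\in[T]$ and letting $\e\downarrow0$ yields $R_T(\alpha)\ge T/4$, and taking the infimum over $\alpha$ completes the proof in the notation of the abstract game $\sG$.

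The main obstacle I anticipate is the subtle point about obliviousness: the construction above looks adaptive because it queries $\nu_t$ at each step, yet we need the resulting sequence to lie in $\sPadv$ (Dirac products, no dependence on the player's sampled actions). This is legitimate because $\nu_t$ is determined by $\alpha$ and by the deterministic transcript $(s_1,b_1),\dots,(s_{t-1},b_{t-1})$ alone, independently of the realizations $U_1,\dots,U_{t-1}$ of the player's randomization; thus the recursion defines a bona fide deterministic sequence depending only on $\alpha$. Making this measurability/independence argument precise in the language of \cref{d:game-appe} is the only real work beyond the bisection bookkeeping.
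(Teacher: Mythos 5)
Your plan is the paper's plan: fix $\alpha$, fix $\e\in(0,\nicefrac{1}{18})$, build an oblivious-but-strategy-dependent sequence by induction on a nested family of intervals inside a window of width $O(\e)$ around $\nicefrac12$, maintain exactly the three invariants you list (common price $\ps$ in every traded interval, $\nu_t\bsb{[s_t,b_t]}\le\nicefrac12$, $b_t-s_t\ge\frac{1-3\e}{2}$), and finish with the same bookkeeping. Your discussion of why the construction stays in $\sPadv$ (each $\nu_t$ is determined by $\alpha$ and the deterministic transcript, not by the realizations of $U_1,\dots,U_{t-1}$) is also the right and necessary observation.

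There is, however, one concrete flaw in your bisection step. With the two \emph{overlapping closed} candidates $[0,m_t]$ and $[m_t,1]$, the pigeonhole claim ``at least one has $\nu_t$-mass at most $\nicefrac12$'' is false: if $\nu_t$ has an atom at $m_t$ (e.g.\ $\nu_t=\delta_{m_t}$), both candidates can have mass $1$. Since the traded set of a pair $(s,b)$ is necessarily the closed interval $[s,b]$, you cannot dodge this by switching to half-open intervals. This is precisely why the paper's construction is Cantor-like rather than a plain bisection: it tests $\nu_{t+1}\bsb{\bsb{0,\,c_t+\e 3^{-t}}}\le\nicefrac12$ and, in the failure case, trades on the \emph{disjoint} interval $\bsb{c_t+2\e 3^{-t},\,1}$, whose mass is then at most that of the complement of the first candidate, hence $<\nicefrac12$. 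The discarded middle third is the price paid for disjointness; the nested interval shrinks by a factor of $3$ per round instead of $2$, but $\ps:=\lim_t c_t$ still exists and lies in every $[s_t,b_t]$, and the rest of your argument goes through verbatim. So the approach is right, but you need to replace the midpoint split by a split with a gap between the two candidate traded intervals.
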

\begin{proof}
Recalling \cref{s:biltrad-setting-appe}, the full-feedback adversarial (adv) bilateral trade setting is a game $\sG := (\cX, \cY, \cZ, \rho, \varphi, \sP)$, where $\cX = [0,1]$, $\cY = [0,1]^2$, $\cZ = [0,1]^2$, $\rho = \gft$, $\fhi\colon \brb{p, (s,b)} \mapsto (s,b)$, and $\sP = \sPadv$.
Let $\sS$ be the set of scenarios of $\sG$.
Fix a strategy $\alpha \in \fA$ and an $\e \in (0, 1/18)$. 
Define $\bar{\alpha}_1 := \alpha_1$, $\nu_1 := (\mu_L)_{\bar{\alpha}_1}$, and
\[
    \begin{cases}
        c_1:=\frac{1}{2}-\frac{3}{2}\varepsilon, \  d_1:=\frac{1}{2}-\frac{1}{2}\varepsilon, \  s_1 := 0,  \ b_1:=d_1,
    &
        \text{ if } \nu_{1}\bsb{\bsb{0,\frac{1}{2}-\frac{1}{2}\varepsilon}}\le\frac{1}{2} \;,
    \\
        c_1:=\frac{1}{2}+\frac{1}{2}\varepsilon, \  d_1:=\frac{1}{2}+\frac{3}{2}\varepsilon, \  s_1 := c_1, \  b_1:=1,
    &
        \text{ otherwise}.
    \end{cases}
\]
If $t \in \N$, suppose we defined $\bar{\alpha}_t, \nu_t, c_t, d_t, s_t, b_t$ and let
\[
\bar{\alpha}_{t+1} : [0,1]^{t+1} \to [0,1], (u_1,\dots,u_{t+1}) \mapsto \alpha_{t+1} \lrb{ u_1, \dots, u_{t+1}, (s_1,b_1), \dots, (s_t,b_t) }, 
\]
$\nu_{t+1} := \brb{\otimes_{s=1}^{t+1}\mu_L}_{\bar{\alpha}_{t+1}}$, and
\[
    \begin{cases}
        c_{t+1}:=c_{t}, \ d_{t+1}:=d_{t}-\frac{2\varepsilon}{3^{t}}, \ s_{t+1}:=0, \ b_{t+1}:=d_{t+1},
    &
        \text{if } \nu_{t+1}\bsb{ \bsb{0,c_t+\frac{\varepsilon}{3^{t}}} }\le\frac{1}{2} \;,
    \\
        c_{t+1}:=c_{t}+\frac{2\varepsilon}{3^{t}}, \ d_{t+1}:=d_{t}, \ s_{t+1} := c_{t+1}, \ b_{t+1}:=1,
    &
        \text{otherwise}.
    \end{cases}
\]
Then the sequences $(\bar{\alpha}_t)_{t \in \N}, (\nu_t)_{t \in \N}, (c_t)_{t \in \N}, (d_t)_{t \in \N}, (s_t)_{t \in \N}, (b_t)_{t \in \N}$ are well-defined by induction and satisfy:
\begin{itemize}
    \item for each $t \in \N, d_t-c_t = \frac{\varepsilon}{3^{t-1}}$;
    \item  for each $t \in \N, c_1\le c_2 \le c_3 \le \dots \le c_t \le d_t \le \dots \le d_3 \le d_2 \le d_1$;
    \item $\exists! x^\star \in \bigcap_{t=1}^{\infty}[c_t,d_t]$;
    \item for each $t \in \N, \rho\lrb{x^\star, (s_t,b_t)} = b_t - s_t \ge \frac{1-3\e}{2}$;
    \item for each $t\in \N, \P\bsb{\alpha_t \brb{U_1,\dots, U_t, (s_1,b_1), \dots, (s_{t-1},b_{t-1})} \in [s_t, b_t] }\le\frac{1}{2}$
\end{itemize}
Now, define $\P := \lrb{ \otimes_{t \in \N} \delta_{(s_t,b_t)}}  \otimes \bmu_L \in \sS$. Then, for each $t \in \N$,
\begin{align*}
    \E_{\P}[\rho \lrb{X_t, Y_t} ] &= \E_{\P}\Bsb{ \rho\Brb{\alpha_t \brb{U_1,\dots, U_t, (s_1,b_1), \dots, (s_{t-1},b_{t-1})} , (s_t,b_t)} }
    \\
    &\le \lrb{\frac{1}{2}+\frac{3\varepsilon}{2}} \P\bsb{\alpha_t \brb{U_1,\dots, U_t, (s_1,b_1), \dots, (s_{t-1},b_{t-1})} \in [s_t, b_t] } \le  \frac{1}{4}+\frac{3\varepsilon}{4} \;,
\end{align*}
and so, for each $T \in \N$
\begin{align*}
    R^{\P}_T(\alpha) & = \E_{\P}\lsb{\sum_{t=1}^T \rho(x^\star,Y_t) - \sum_{t=1}^T \rho(X_t,Y_t)} = \sum_{t=1}^T \rho(x^\star,(s_t,b_t)) -  \sum_{t=1}^T \E_{\P} \lsb{  \rho \lrb{X_t, Y_t} }
    \\
    &\ge
    \sum_{t=1}^T (b_t-s_t)\brb{1 - \P\bsb{\alpha_t \brb{U_1,\dots, U_t, (s_1,b_1), \dots, (s_{t-1},b_{t-1})} \in [s_t, b_t] } }
    \ge \frac{1-3\e}{4}T \;.
\end{align*}
Since $\e$ was arbitrary, we get, for all $T \in \N$,
$
R_T^{\sS}(\alpha) = \sup_{\P \in \sS} R^{\P}_T(\alpha) \ge \sup_{\e \in (0, 1/18)} \frac{1-3\e}{4}T = \frac{T}{4}
$.
Since $\alpha$ arbitrarity, we get, for each $T \in \N$,
$
\Rs_T = \inf_{\alpha \in \fA} R_T^{\sS}(\alpha) \ge \frac{T}{4}
$.
\end{proof}

\end{document}